\renewcommand*{\backref}[1]{}
\renewcommand*{\backrefalt}[4]{%
    \ifcase #1%
          \or [Cited on p.~#2.]%
          \else [Cited on p.~#2.]%
    \fi%
    }
\newcommand{\decodercolor}{Plum}
\newcommand{\ivcolor}{BrickRed}
\newcommand{\encodercolor}{Green}
\numberwithin{equation}{section}
\renewcommand{\mathbf}[1]{{\bm{#1}}}
\newcommand{\cbb}{\mathbf{c}}
\newcommand{\vb}{{\bm{v}}}
\newcommand{\wb}{\mathbf{w}}
\newcommand{\xb}{\mathbf{x}}
\newcommand{\zb}{\mathbf{z}}
\newcommand{\Cb}{\mathbf{C}}
\newcommand{\Jb}{\mathbf{J}}
\newcommand{\Pb}{\mathbf{P}}
\newcommand{\Ub}{\mathbf{U}}
\newcommand{\Vb}{\mathbf{V}}
\newcommand{\Wb}{\mathbf{W}}
\newcommand{\Xb}{\mathbf{X}}
\newcommand{\Zb}{\mathbf{Z}}
\newcommand{\Ecal}{\mathcal{E}}
\newcommand{\Gcal}{\mathcal{G}}
\newcommand{\Hcal}{\mathcal{H}}
\newcommand{\Ical}{\mathcal{I}}
\newcommand{\Ncal}{\mathcal{N}}
\newcommand{\Pcal}{\mathcal{P}}
\newcommand{\Vcal}{\mathcal{V}}
\newcommand{\Xcal}{\mathcal{X}}
\newcommand{\Zcal}{\mathcal{Z}}
\newcommand{\EE}{\mathbb{E}} %
\newcommand{\RR}{\mathbb{R}} %
\newcommand*{\epsilonb}{\bm{\epsilon}}
\newcommand{\pa}{\mathrm{pa}}
\newcommand{\nd}{\mathrm{nd}}
\crefname{figure}{Fig.}{Figs.}
\crefname{definition}{Defn.}{Defns.}
\crefname{corollary}{Cor.}{Cors.}
\crefname{proposition}{Prop.}{Props.}
\crefname{theorem}{Thm.}{Thms.}
\crefname{remark}{Remark}{Remarks}
\crefname{principle}{Principle}{Principles}
\crefname{lemma}{Lemma}{Lemmata}
\crefname{claim}{Claim}{Claims}
\crefname{table}{Tab.}{Tabs.}
\crefname{section}{\S}{\S\S}
\crefname{subsection}{\S}{\S\S}
\crefname{subsubsection}{\S}{\S\S}
\crefname{assumption}{Asm.}{Asms.}
\crefname{appendix}{Appx.}{Appx.}
\crefname{equation}{Eq.}{Eqs.}
\crefname{example}{Example}{Examples}
\newcommand{\BlackBox}{\rule{1.5ex}{1.5ex}}  %
\def\QED{~\rule[-1pt]{5pt}{5pt}\par\medskip}
\newenvironment{proof}{\par\noindent{\bf Proof\ }}{\hfill\BlackBox\\[2mm]}
\theoremstyle{plain} %
\newtheorem{theorem}{Theorem}
\numberwithin{theorem}{section}
\newtheorem{lemma}[theorem]{Lemma}
\newtheorem{proposition}[theorem]{Proposition}
\theoremstyle{definition} %
\newtheorem{definition}[theorem]{Definition}
\newtheorem{assumption}[theorem]{Assumption}
\theoremstyle{remark} %
\newtheorem{remark}[theorem]{Remark}
\newtheorem{lemma}[theorem]{Lemma}
\newcommand\independent{\protect\mathpalette{\protect\independenT}{\perp}}
\def\independenT#1#2{\mathrel{\rlap{$#1#2$}\mkern2mu{#1#2}}}
\newcommand\abs[1]{\left|#1\right|}
\newcommand{\dbtilde}[1]{\tilde{\raisebox{0pt}[0.85\height]{$\tilde{#1}$}}}
\renewcommand*\d{\mathop{}\!\mathrm{d}}
\newcommand{\changelinkcolor}[1]{\hypersetup{linkcolor=#1}}  
\newcommand{\figtopspace}{-.75em}
\newcommand{\figbottomspace}{-.35em}
\newcommand{\envbottomspace}{-.35em}
\title{Nonparametric Identifiability of Causal Representations from Unknown Interventions}
\author[1,2]{\textbf{Julius von K\"ugelgen}}
\author[1]{\textbf{Michel Besserve}}
\author[1]{\textbf{Liang Wendong}}
\author[1]{\textbf{Luigi Gresele}}
\author[1]{\textbf{Armin Keki\'c}}
\author[3]{\textbf{Elias Bareinboim}}
\author[3]{\textbf{David M.\ Blei}}
\author[1]{\textbf{Bernhard Sch\"olkopf}}
\affil[1]{Max Planck Institute for Intelligent Systems, T\"ubingen, Germany}
\affil[2]{Department of Engineering, University of Cambridge, United Kingdom}
\affil[3]{Columbia University, USA}
\begin{document}
\doparttoc%
\faketableofcontents%
\maketitle
\vspace{-2em}

\begin{abstract}
\vspace{-.75em}
\looseness-1
We study causal representation learning, the task of inferring latent causal variables and their causal relations from high-dimensional functions (``mixtures”) of the variables. Prior work relies on weak supervision, in the form of counterfactual pre- and post-intervention views or temporal structure; places restrictive assumptions, such as linearity, on the mixing function or latent causal model; or requires partial knowledge of the generative process, such as the causal graph or intervention targets. 
We instead consider the general setting in which both the causal model and the mixing function are \textit{nonparametric}. The learning signal takes the form of multiple datasets, or environments, arising from \textit{unknown interventions} in the underlying causal model. Our goal is to identify both the ground truth latents and their causal graph 
up to a set of ambiguities which we show to be irresolvable from interventional data.
We study the fundamental setting of two causal variables and prove that the observational distribution and one perfect intervention per node suffice for identifiability, subject to a genericity condition. This condition rules out spurious solutions that involve fine-tuning of the intervened 
and
observational distributions, mirroring similar conditions for nonlinear cause-effect inference. For an arbitrary number of variables, we show that 
at least one pair of distinct
perfect interventional domains per node guarantees identifiability. 
Further, we demonstrate that the strengths of causal influences among the latent variables are preserved by all equivalent solutions, 
rendering the inferred representation appropriate for drawing causal conclusions from new data.
Our study provides the first identifiability results for the general nonparametric 
setting with unknown interventions, and elucidates what is possible and impossible for causal representation learning without more direct supervision.
\end{abstract}
\vspace{-1.25em}

\section{Introduction}
\vspace{-.75em}
Causal representation learning (CRL) seeks to describe high-dimensional, low-level observations through a small number of interpretable, causally-related latent variables~\citep{scholkopf2021toward,scholkopf2022statistical}.
In doing so, its goal is to combine the strengths of classical causal 
inference with those of modern machine learning.
\looseness-1 A causal model represents an entire family of distributions arising from \emph{interventions} on a system of variables~\citep{Pearl2009,peters2017elements,bareinboim2022pch}.
This provides a principled way for reasoning about distribution shifts, which facilitates out-of-distribution generalization and planning~\citep{Rojas-Carulla2018,karimi2022towards,bareinboim2016causal,Pearl2014}. 
However, 
causal models require that most (or at least some) relevant causal variables are directly observed. 
While reasonable in domains such as economics~\citep{angrist2009mostly}, social~\citep{morgan2014counterfactuals} or biomedical science~\citep{hernan2020causal,imbens2015causal},
this assumption has challenged
the application of causal methodology to complex and high-dimensional data~\citep{LopNisChiSchBot17}.
\looseness-1 Machine learning, on the other hand, has proven 
successful at learning useful ``representations''---latent vectors generating the observables via some nonlinear map---%
of 
high-dimensional data such as images, video, or text~\citep{brown2020language,blei2003latent,lecun2015deep,schmidhuber2015deep}.
However, most methods rely on 
independent and identically distributed (i.i.d.) data and 
only extract \textit{associational} information.
As a consequence, they often fail under distribution shifts and do not generalize beyond the training distribution~\citep{scholkopf2022causality}, 
as exhibited
by their reliance on 
\textit{spurious correlations}~\citep{beery2018recognition,mao2022causal} or their vulnerability to \textit{adversarial
examples}~\citep{szegedy2014intriguing,goodfellow14}.

\looseness-1 
\textbf{Identifiability.} It has been argued that to address these shortcomings, we ought to learn representations endowed with causal model semantics. 
However, a major challenge to this goal is that \textit{different representations can explain the same data equally well}. 
Strictly simpler representation learning tasks, such as disentanglement~\citep{bengio2013representation} or independent component analysis (ICA)~\citep{hyvarinen2000independent}, are 
already 
\textit{non-identifiable}
in general~\citep{hyvarinen1999nonlinear,locatello2019challenging}. Identifiability studies are thus required to characterize additional assumptions under which the desired latent variables can be provably recovered~\citep{eastwood2022dci,Brady2023Provably,gresele2020incomplete,buchholz2022function,ahuja2022weakly,hyvarinen2016unsupervised,hyvarinen2017nonlinear,hyvarinen2018nonlinear,hyvarinen2023nonlinear,locatello2020weakly,lachapelle2022synergies,shu2019weakly,sorrenson2019disentanglement,khemakhem2020ice,klindt2020towards,zimmermann2021contrastive,moran2022identifiable,roeder2021linear,gresele2021independent,kivva2022identifiability,xi2022indeterminacy,lachapelle2022disentanglement,halva2020hidden,halva2021disentangling}. 
\looseness-1 In CRL, we need not only identify the latents, but also the causal graph encoding their relations.
Even in the fully observed case, this task of causal discovery, or structure learning, is very challenging:  the graph can only be recovered up to Markov equivalence~\citep{spirtes2001causation} based only on (observational) i.i.d.\ data~\citep{squires2022causal}, meaning that the direction of some edges cannot be determined.
For CRL, the task gets strictly harder.
For instance, if the causal latents~$V_i$ in~\cref{fig:full_crl_setup}~\textit{(Left)} form a valid representation, then replacing them by the \textit{independent} exogeneous~$U_i$ might be considered an equally valid alternative.

\looseness-1 
\textbf{What sets \textit{causal} representations apart?}
A crucial 
feature of causal variables is that they are the ones on which \textit{interventions} are defined and whose relations we are interested in~\citep{Pearl2009}.
Causal discovery and CRL thus often rely on
non-i.i.d.\ data linked to interventions on the underlying causal variables~\citep{kocaoglu2019characterization,jaber2020causal}.
Unless all variables are subject to intervention, however, some fundamental differences between the fully observed
and representation learning settings in the level of ambiguity in the graph remain~\citep{squires2023linear}, as illustrated in~\cref{fig:full_crl_setup}~\textit{(Right)}.
In a sense, the non-identifiabilities of representation and structure learning combine,
and both need to be addressed in conjunction.

\begin{figure}[t]
\vspace{\figtopspace}
\begin{subfigure}{0.625\textwidth}
\newcommand{\xshift}{4.5em}
\newcommand{\yshift}{2.65em}
    \centering
    \begin{tikzpicture}
        \centering
        \node (u_1) [latent, yshift=\yshift] {$U_1$};
        \node (u_i) [latent] {$\dots$};
        \node (u_n) [latent, yshift=-\yshift] {$U_n$};
        \node (v_1) [latent, yshift=\yshift, xshift=\xshift] {$V_1$};
        \node (s_i) [latent, xshift=\xshift] {$\dots$};
        \node (s_n) [latent, yshift=-\yshift, xshift=\xshift] {$V_n$};
        \draw [-stealth, thick] (u_1) -- (v_1);
        \draw [-stealth, thick] (u_i) -- (s_i);
        \draw [-stealth, thick] (u_n) -- (s_n);
        \draw [-stealth, thick] (v_1) -- (s_i);
        \draw [-stealth, thick] (s_i) -- (s_n);
        \draw [-stealth, thick] (v_1) to [out=220,in=140] (s_n);
        \coordinate (A) at (1.75*\xshift,1.2*\yshift);
        \coordinate (B) at (2.5*\xshift,1.7*\yshift);
        \coordinate (C) at (2.5*\xshift,-1.7*\yshift);
        \coordinate (D) at (1.75*\xshift,-1.2*\yshift);
        \draw[fill=\decodercolor!20] (A) -- (B) -- (C) -- (D) -- cycle;
        \coordinate (input_1) at (1.75*\xshift,\yshift);
        \coordinate (input_i) at (1.75*\xshift,0);
        \coordinate (input_n) at (1.75*\xshift,-1*\yshift);
        \draw [-stealth, thick, color=\decodercolor] (v_1) -- (input_1);
        \draw [-stealth, thick, color=\decodercolor] (s_i) -- (input_i);
        \draw [-stealth, thick, color=\decodercolor] (s_n) -- (input_n);
        \node[const,xshift=2.125*\xshift] {$f$};
        \node (pic) [xshift=3.9*\xshift]{\includegraphics[width=10.25em]
        {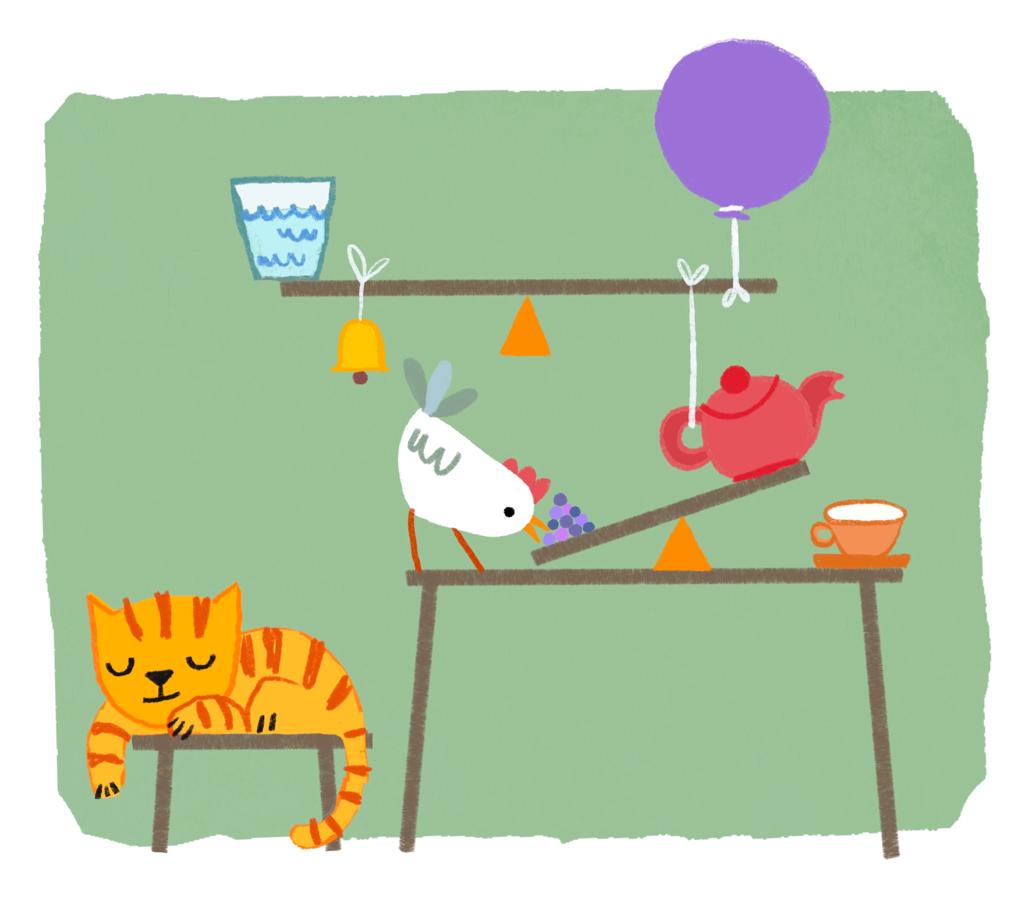}}; 
        \coordinate (output) at  (2.5*\xshift,0);
        \coordinate (data) at  (2.85*\xshift,0);
        \draw [-stealth, thick, color=\decodercolor] (output) -- (data);
        \node[const,xshift=3.9*\xshift, yshift=2.25*\yshift, text width=10em, align=center] {\footnotesize High-dimensional\\ {Observations~$\Xb\in\RR^d$}};
        \node[const,xshift=2.125*\xshift, yshift=2.25*\yshift, text width=5em, align=center] {\textcolor{black}{\footnotesize Mixing\\ Function}};
        \node[const, yshift=2.25*\yshift, text width=5em, align=center] {\textcolor{black}{\footnotesize Exogenous Variables}};
        \node[const,xshift=1*\xshift, yshift=2.25*\yshift, text width=5em,align=center] {\textcolor{black}{\footnotesize Causal\\ Variables}};
    \end{tikzpicture}
\end{subfigure}
\hfill
\begin{subfigure}{0.365\textwidth}
\newcommand{\xshift}{3.5em}
\newcommand{\yshift}{2.5em}
\centering
    \begin{tikzpicture}
    \centering
        \coordinate (bottom) at  (-1.7*\xshift,-3.15*\yshift);
        \coordinate (top) at  (-1.7*\xshift,1.35*\yshift);
        \draw[very thick] (bottom) -- (top);
        \node (graph) [const, xshift=0.5*\xshift, yshift=\yshift] {\footnotesize Graph Identifiability Class};
        \node (V_1) [det, draw=\ivcolor,thick, fill=gray!30] {\textcolor{\ivcolor}{$V_1$}};
        \node (V_2) [latent, fill=gray!30, xshift=\xshift] {$V_2$};
        \draw [-stealth, thick] (V_1) -- (V_2);

        \node (W_1) [det, yshift=-1.5*\yshift, xshift=-1*\xshift,draw=\ivcolor,thick] {\textcolor{\ivcolor}{$V_1$}};
        \node (W_2) [latent, yshift=-1.5*\yshift] {$V_2$};
        \draw [-stealth, thick] (W_1) -- (W_2);
        \node (X) [latent, fill=gray!30, xshift=-0.5*\xshift, yshift=-2.5*\yshift] {$\Xb$};
        \draw [-stealth, thick] (W_1) -- (X);
        \draw [-stealth, thick] (W_2) -- (X);

        \node (Y_1) [det, yshift=-1.5*\yshift,xshift=1*\xshift,draw=\ivcolor,thick] {\textcolor{\ivcolor}{$V_1$}};
        \node (Y_2) [latent, xshift=2*\xshift, yshift=-1.5*\yshift] {$U_2$};
        \node (Z) [latent, fill=gray!30, xshift=1.5*\xshift, yshift=-2.5*\yshift] {$\Xb$};
        \draw [-stealth, thick] (Y_1) -- (Z);
        \draw [-stealth, thick] (Y_2) -- (Z);
        \plate[inner sep=0.2em, yshift=0.25em] {plate2}{(W_1) (W_2) (X) (Y_1) (Y_2) (Z)}{};
    \end{tikzpicture}
\end{subfigure}
\caption{\looseness-1 \small
\textit{(Left)} \textbf{Data-Generating Process for Causal Representation Learning (CRL). } Observations~$\Xb$ are generated by applying a nonlinear mixing function $f$ to a set of causal latent variables $\Vb=\{V_1, ..., V_n\}$, which are related through a structural causal model (SCM) with independent exogenous
variables~$\Ub=\{U_1, ..., U_n\}$;
illustration by Ana Mart\'in Larra\~naga.
\textit{(Right)} \textbf{Comparison to Structure Learning.} The causal direction between two unconfounded \textit{observed} variables (top) is uniquely identified from a single intervention~\citep{eberhardt2006n}; for CRL (bottom), this is not the case, as the nonlinear mixing introduces additional ambiguity due to spurious representations. 
Shaded nodes are observed, white ones unoberved, and interventions highlighted as red diamonds.
}
\label{fig:full_crl_setup}
\vspace{\figbottomspace}
\end{figure}

\looseness-1 
\textbf{Problem Setting.} %
We study the general \textit{nonparametric} CRL problem~(\cref{subsec:data_generating_process}) in which both the causal mechanisms and the mixing function are completely unconstrained. %
Our goal~(\cref{subsec:learning_target}) 
is to \text{identify} the latent causal variables up to element-wise nonlinear rescaling and their graph up to isomorphism~(\cref{def:identifiability}).
As motivated above, doing so without further supervision requires access to interventional data.
To this end, we consider learning from heterogeneous data from multiple related domains, or \textit{environments}, that arise from 
interventions in a shared underlying causal model~(\cref{subsec:multi_environment_data}).

\textbf{Contributions.} 
Our main contributions are %
theoretical in nature~(\cref{sec:theory}). 
First, we establish the minimality of the targeted equivalence class~(\cref{prop:equivalence}) in the sense that its ambiguities cannot be resolved from interventional data. 
We then present our main identifiability results.
For the case of two latent causal variables, we show that an observational environment and one for each perfect intervention on either variable suffice~(\cref{thm:bivariate})---provided that the intervened and unintervened mechanisms are not ``fine-tuned'' to each other, which we formalize in the form of a \textit{genericity condition}~\eqref{eq:genericity_condition}.
For any number of latent causal variables, we prove that access to pairs of environments corresponding to two distinct perfect interventions on each node guarantees identifiability~(\cref{thm:general}).
We then question how to use or interpret causal representations~(\cref{sec:interpreting}), and show that
certain quantities, such as the strengths of causal influences among variables, are preserved by all equivalent solutions~(\cref{prop:influence}). 
We sketch possible learning objectives~(\cref{sec:objectives}), and empirically investigate training different generative models~(\cref{sec:experiments}), finding that only those based on the correct causal structure attain the best fit and identify the ground truth.
We conclude by discussing limitations and extensions of our work~(\cref{sec:discussion}).

\iftrue
\textbf{Related Work on \textit{Multi-Environment} CRL.}
\label{sec:related_work}
Most closely related are recent identifiability studies which also leverage multiple environments arising from single node interventions~\citep{squires2023linear,ahuja2022interventional,varici2023score,buchholz2023learning}, thus mirroring in different interventional setups the result of~\citet{brehmer2022weakly} based on counterfactual, multi-view data, which is harder to obtain.
\citet{squires2023linear} provide results 
for linear causal models and linear mixing;
\citet{ahuja2022interventional} consider nonlinear causal models and polynomial mixings, subject to additional constraints on the latent support~\citep{wang2021desiderata}; and~\citet{varici2023score} employ a score-based approach for nonlinear causal models and linear mixing.
The concurrent work of \citet{buchholz2023learning} extends the results of~\citet{squires2023linear} to general nonlinear mixings and linear Gaussian causal models.
\Citet{liu2022identifying} leverage recent advances in nonlinear ICA~\citep{hyvarinen2018nonlinear,khemakhem2020variational}
to identify a linear Gaussian causal model with context-dependent weights and nonlinear mixing from sufficiently diverse environments. 
A more extensive discussion of other related works and a detailed comparison of existing results with the present study are provided in~\cref{app:related_work} and~\cref{tab:related_work}.
\looseness-1 
\fi
\vspace{-.5em}
\section{Nonparametric Causal Representation Learning}
\label{sec:problem_setting}
\vspace{-.5em}
\looseness-1 In this section, we describe the considered problem setting  and state our main assumptions. First, we specify the assumed data generating process~(\cref{subsec:data_generating_process}) and learning task in the form of a target identifiability class~(\cref{subsec:learning_target}).
We then demonstrate the hardness of our task from i.i.d.\ data or imperfect interventions
and use this to motivate a multi-environment approach with perfect interventions~(\cref{subsec:multi_environment_data}).

\textbf{Notation.} 
\looseness-1 We use upper-case~$X$ for random variables and lower-case~$x$ for their realizations. Bold uppercase $\Xb$ denotes random vectors
and lowercase $\xb$ their realizations. 
We assume throughout that all distributions $P_X$ possess densities $p(x)$ with respect to (w.r.t.) the Lebesgue measure. 
We denote the pushforward of $P_X$ through a measurable function $f$ by $f_*(P_X)$, and write $[n]=\{1, ..., n\}$.

\vspace{-.25em}
\subsection{Data Generating Process}
\label{subsec:data_generating_process}
\vspace{-.25em}
\looseness-1 The assumed data generating process consists of a latent  causal model and a mixing function, see~\cref{fig:full_crl_setup}~\textit{(left)}.
For the former, we build on the structural causal model (SCM) framework~\citep{Pearl2009,peters2017elements}.

\begin{definition}[Latent SCM]
\label{def:latent_SCM}
Let $\Vb=\{V_1, ..., V_n\}$ denote a set of causal ``endogenous'' variables, with each $V_i$ taking values in $\RR$, and let $\Ub=\{U_1, ..., U_n\}$ denote a set of mutually independent ``exogenous'' random variables.
The latent SCM
consists of a set of structural equations
\begin{equation}
\label{eq:structural_equations}
    \{
    V_i := f_i (\Vb_{\pa(i)}, U_i)
    \}_{i=1}^n.
\end{equation}
\looseness-1 where $\Vb_{\pa(i)}\subseteq\Vb\setminus\{V_i\}$ are the direct causes, or causal parents, of $V_i$, and $f_i$ are deterministic functions;
and a fully factorized joint distribution $P_\Ub$
over the exogenous variables.
The associated causal diagram $G$, a directed graph with vertices $\Vb$ and edges $V_j\to V_i$ iff.\ $V_j\in\Vb_{\pa(i)}$, is assumed acyclic.
\end{definition}
\looseness-1 
By acyclicity, recursive substitution of the assignments in~\eqref{eq:structural_equations} yields the reduced form $\Vb{=}f_\textsc{rf}(\mathbf{U})$.
The SCM thus induces a unique distribution $P_\Vb$ over the endogenous variables, given by the pushforward of $P_\Ub$ via~\eqref{eq:structural_equations}, that is,  $P_\Vb=f_\textsc{rf*}(P_\Ub)$.
By construction, $P_\Vb$ is Markovian w.r.t.\ the causal graph $G$~\citep{Pearl2009,peters2017elements}, meaning that its density obeys the causal Markov factorization:
\begin{equation}
\textstyle
\label{eq:causal_factorisation}
p(v_1,\dots,v_n) = \prod_{i=1}^n  p_i(v_i \mid \vb_{\pa(i)}).
\end{equation}
\looseness-1 We place the following additional assumption on the distribution $P_\Vb$ induced by the SCM.
\begin{assumption}[Faithfulness]
\label{ass:faithfulness}
    The \textit{only} conditional independence relations satisfied by $P_\Vb$ are those implied by $\{V_i \independent \Vb_{\nd(i)}~|~\Vb_{\pa(i)}\}_{i\in[n]}$, where $\Vb_{\nd(i)}$ denotes the non-descendants of $V_i$ in~$G$.
        \vspace{\envbottomspace}
\end{assumption}
\Cref{ass:faithfulness} ensures a one-to-one correspondence between (conditional) independence in $P_\Vb$ and graphical separation in $G$ and
is a standard assumption in causal discovery~\citep{spirtes2001causation}. Faithfulness rules out  cancellations of influences along different paths, which  occurs with probability zero for random path-coefficients~\citep{uhler2013geometry}. It can thus also be viewed as a minimality or genericity assumption.

In contrast to classical causal inference, we assume that both the exogenous variables $\Ub$ and the endogenous causal variables $\Vb$ are unobserved. Instead, we will only have access to $d$-dimensional nonlinear mixtures $\Xb$ of $\Vb$. 
We therefore make the following additional assumption.
\begin{assumption}[Known\texorpdfstring{~$n$}{n}]
\label{ass:known_n}
    The number $n$ of latent causal variables is known.
    \vspace{\envbottomspace}
\end{assumption}
Next, we specify the relationship between the unobserved causal variables $\Vb$ and the observed $\Xb$.

\begin{definition}[Mixing function]
\label{def:mixing}
  The observations $\Xb$ are deterministically
  generated from $\Vb$ by applying 
a mixing function $f:\RR^n \to \mathbb{R}^d$ to $\Vb$, that is, $\Xb:=f(\Vb)$.
    \vspace{\envbottomspace}
\end{definition}%
The terminology and setting of a deterministic mixing is rooted in the nonlinear ICA literature~\citep{hyvarinen2023nonlinear}. In deep generative models, it is commonly relaxed by considering additive noise 
in~\cref{def:mixing}
~\citep{khemakhem2020variational,moran2022identifiable}.
\looseness-1 For representation learning scenarios, we are particularly interested in the case $n\ll d$.
To allow for recovery of $\Vb$ from $\Xb$,
we assume that $f$ is invertible, which is a standard assumption for identifiability.
\begin{assumption}[Diffeomorphic mixing]
\label{ass:diffeomorphism}
    $f$ is a diffeomorphism\footnote{A diffeomorphism is a bijective function $f$ such that both $f$ and $f^{-1}$ are continuously differentiable.} onto its image $\mathrm{Im}(f)=\Xcal\subseteq\RR^{d}$.
\end{assumption}
\vspace{-.5em}
\subsection{Learning Target: The CRL Identifiability Class \texorpdfstring{$\sim_\textsc{crl}$}{}}
\vspace{-.25em}%
\label{subsec:learning_target}
\looseness-1 Our goal is to infer the underlying latent causal variables $\Vb=f^{-1}(\Xb)$ and their causal relations.
We therefore consider the true unmixing function $f^{-1}$ \textit{and} the causal graph $G$ our joint learning target:
$f^{-1}$ informs us how to map observations $\Xb$ to causal variables $\Vb$, and $G$ tells us how to factorise the implied joint $p(\vb)$ into the causal mechanisms~$p_i(v_i~|~\vb_{\pa(i)})$ from~\eqref{eq:causal_factorisation}.
Given only observations of $\Xb$, this is a challenging task since neither $\Vb$ nor $G$ are directly observed or known a priori.
%
%
%
%
%
%

%
%
%
%
%
%

%

%
%
%
%

%
%
%
%
%
%

%
%
%
%

When is a candidate solution $(h,G')$ that satisfies a given learning objective (such as maximizing the likelihood, possibly subject to additional constraints)
guaranteed to match the ground truth~$(f^{-1},G)$?
This is the subject of identifiability studies
and the main focus of our work.
A statistical model $\Pcal=\{p_\theta: \theta\in\Theta \}$ with parameter space $\Theta$ is identifiable if the mapping $\theta\mapsto p_\theta$ is injective~\citep{lehmann2006theory}.
For representation learning tasks, full identifiability is often not attainable, as there are some fundamental ambiguities that cannot be resolved. 
One therefore typically instead considers 
identifiability up to 
an appropriately chosen equivalence class in the model space~\citep{khemakhem2020variational,ahuja2021properties,xi2022indeterminacy}.

\looseness-1 For the assumed data generating process~(\cref{subsec:data_generating_process}), the order of the causal variables is arbitrary, since $\Vb$ is unobserved. 
We can therefore assume without loss of generality (w.l.o.g.) that 
the $V_i$'s are
partially 
ordered w.r.t.\ $G$, that is, $V_i\to V_j \implies i<j$.\footnote{If they were not in such order to begin with, we could apply an appropriate permutation $\tilde\pi$ to $\Vb$ and incorporate the inverse permutation into the unknown mixing $f$ without affecting $\Xb=f(\Vb)=(f\circ \tilde\pi^{-1})(\tilde\pi(\Vb))$~\citep{squires2023linear}. 
} 
Learning $G$ thus reduces to inferring whether the edges $\{V_1, \dots, V_{i-1}\}\to V_i$ exist for $i=2,\dots,n$. The only remaining permutation ambiguity  arises from permutations $\pi$ that preserve the partial order:
for example, if $G$ is given by $V_1\to V_3 \leftarrow V_2$,  the order of $V_1$ and $V_2$ cannot be uniquely determined without further assumptions.
Moreover, the scaling of the causal variables is also arbitrary: any invertible element-wise transformation can be undone as part of $f$.
We therefore define the desired identifiability class through the following equivalence relation.\footnote{\looseness-1$\sim_\textsc{crl}$ satisfies symmetry and transitivity because permutations and element-wise functions commute.}

\begin{restatable}[$\sim_\textsc{crl}$-identifiability]{definition}{defcrl}
\label{def:identifiability}
Let $\Hcal$ be a space of unmixing functions  $h:\Xcal\to\RR^n$ and let $\Gcal$ be the space of DAGs over $n$ vertices.
Let $\sim_\textsc{crl}$ be the equivalence relation on $\Hcal\times\Gcal$ defined as
\begin{equation}
\label{eq:sim_CRL}
    (h_1,G_1)\sim_\textsc{crl}(h_2,G_2) \quad  \iff \quad 
    (h_2, G_2) = (\Pb_{\pi^{-1}} \circ \phi \circ h_1, \pi(G_1))
\end{equation}
for some element-wise diffeomorphism $\phi(\vb)=(\phi_1(v_1), \dots, \phi_n(v_n))$ of $\RR^n$ and  a permutation $\pi$ of~$[n]$ such that $\pi:G_1\mapsto G_2$ is a graph isomorphism and $\Pb_\pi$ the corresponding permutation matrix.
\end{restatable}
\begin{remark}
\label{remark:ICA_identifiability}
    When $G$ has no edges, any permutation is admissible and $\sim_\textsc{crl}$ reduces to the standard notion of identifiability up to permutation and element-wise reparametrisation of nonlinear ICA~\citep{hyvarinen2023nonlinear}.
    \vspace{\envbottomspace}
\end{remark}
\looseness-1 The ground truth $(f^{-1}, G)$ is identified up to $\sim_\textsc{crl}$ by a given learning objective if any candidate solution $(h,G')$ satisfies $(h,G')\sim_\textsc{crl} (f^{-1}, G)$. We seek to discover suitable conditions that ensure this. 

\vspace{-.25em}
\subsection{Multi-Environment Data
}
\label{subsec:multi_environment_data}
\vspace{-.25em}
\looseness-1 Given only a single dataset of i.i.d.\ observations from $P_\Xb$, there is no hope for $\sim_\textsc{crl}$-identifiability.
Even for observed $\Vb$ (i.e., with $n=d$ and known $f=\mathrm{id}$), $G$ can only be identified up to Markov equivalence~\citep{spirtes2001causation}.
With unknown mixing $f$, the degree of observational non-identifiability gets even worse: for example, by using the reduced form of the SCM~(\cref{subsec:data_generating_process}) we can express $\Xb$ in terms of the latent exogenous variables $\Ub$ via $\Xb=f(\Vb)=f\circ f_\textsc{rf}(\Ub)$~\citep{scholkopf2022statistical}. 
This gives rise to a ``spurious ICA solution'' $(f_\textsc{rf}^{-1}\circ f^{-1}, G_\textsc{ica})$ where $G_\textsc{ica}$ denotes the empty graph with independent components. 
Due to the non-identifiability of nonlinear ICA~\citep{hyvarinen1999nonlinear,locatello2019challenging}, however, we cannot even learn the composition $f\circ f_\textsc{rf}$, let alone separate it into its constituents $f$ and $f_\textsc{rf}$ to isolate the intermediate causal variables $\Vb$.

\begin{figure}[t]
    \vspace{\figtopspace}
    \newcommand{\xshift}{3.3em}
    \newcommand{\yshift}{1.87em}
    \newcommand{\zshift}{0.75em}
    \centering
    \begin{tikzpicture}[scale=0.89, every node/.style={transform shape}]
        \centering
        \node (G) [const, xshift=.5*\xshift,yshift=1.2*\yshift]{$G$};
        \node (G') [const, xshift=11.5*\xshift,yshift=1.2*\yshift]{$G'$};
        \coordinate (GT_left) at (-0.39*\xshift,1.5*\yshift);
        \coordinate (GT_right) at (5*\xshift,1.5*\yshift);
        \node (GT) [const, xshift=2.5*\xshift,yshift=1.9*\yshift] {Unknown Ground Truth
        };
        \draw[|-|] (GT_left) -- (GT_right);
        
        \coordinate (Data_left) at (5.2*\xshift,1.5*\yshift);
        \coordinate (Data_right) at (6.8*\xshift,1.5*\yshift);
        \node (Data) [const, xshift=6*\xshift,yshift=1.9*\yshift] {Data};
        \draw[|-|] (Data_left) -- (Data_right);
        
        \coordinate (Model_left) at (7*\xshift,1.5*\yshift);
        \coordinate (Model_right) at (12.4*\xshift,1.5*\yshift);
        \node (Model) [const, xshift=9.5*\xshift,yshift=1.9*\yshift] {Inference Model
        };
        \draw[|-|] (Model_left) -- (Model_right);

        \node (V1) [latent] {$V_1$};
        \node (V2) [latent, xshift=\xshift] {$V_2$};
        \draw [-stealth, thick] (V1) -- (V2);
        \plate[inner sep=0.2em, yshift=0.2em, dashed] {plate1}{(V1) (V2) }{};

        \node (Y1) [det,yshift=-2*\yshift,draw=\ivcolor,thick] {\textcolor{\ivcolor}{$V_1$}};
        \node (Y2) [latent, xshift=\xshift, yshift=-2*\yshift] {$V_2$};
        \draw [-stealth, thick] (Y1) -- (Y2);
        \plate[inner sep=0.2em, yshift=0.2em, dashed] {plate2}{(Y1) (Y2)}{};

        \node (W1) [latent, yshift=-4*\yshift] {$V_1$};
        \node (W2) [det, xshift=\xshift, yshift=-4*\yshift, draw=\ivcolor,thick] {\textcolor{\ivcolor}{$V_2$}};
        \plate[inner sep=0.1em, yshift=0.2em, dashed] {plate3}{(W1) (W2)}{};

        \node (e1) [const, xshift=-.8*\xshift]{$e_0$:};
        \node (e2) [const, xshift=-.8*\xshift, yshift=-2*\yshift]{$e_1$:};
        \node (e3) [const, xshift=-.8*\xshift, yshift=-4*\yshift]{$e_2$:};
        \node (p1) [const, xshift=2.5*\xshift]{$\Vb\sim P^{e_0}_\Vb$};
        \node (p2) [const, xshift=2.5*\xshift, yshift=-2*\yshift]{$\Vb\sim P^{e_1}_\Vb$};
        \node (p3) [const, xshift=2.5*\xshift, yshift=-4*\yshift]{$\Vb\sim P^{e_2}_\Vb$};

        \coordinate (A) at (4*\xshift,-0.5*\yshift);
        \coordinate (B) at (5*\xshift,0.5*\yshift);
        \coordinate (C) at (5*\xshift,-4.5*\yshift);
        \coordinate (D) at (4*\xshift,-3.5*\yshift);
        \draw[fill=\decodercolor!20] (A) -- (B) -- (C) -- (D) -- cycle;
        \coordinate (input_1) at (4*\xshift,-1*\yshift);
        \coordinate (input_i) at (4*\xshift,-2*\yshift);
        \coordinate (input_n) at (4*\xshift,-3*\yshift);
        \draw [-stealth, thick, color=\decodercolor] (p1.south east) -- (input_1);
        \draw [-stealth, thick, color=\decodercolor] (p2.east) -- (input_i);
        \draw [-stealth, thick, color=\decodercolor] (p3.north east) -- (input_n);

        \node[const,xshift=4.5*\xshift, yshift=-2*\yshift] {$f$};
        \coordinate (output_1) at (5.*\xshift,-0*\yshift);
        \coordinate (output_i) at (5.*\xshift,-2*\yshift);
        \coordinate (output_n) at (5.*\xshift,-4*\yshift);
        \node (D_1) [const, xshift=6*\xshift, yshift=-0*\yshift] {\,$\Xb\sim P^{e_0}_\Xb$\,};
        \node (D_i) [const, xshift=6*\xshift, yshift=-2*\yshift] {\,$\Xb\sim P^{e_1}_\Xb$\,};
        \node (D_n) [const, xshift=6*\xshift, yshift=-4*\yshift] {\,$\Xb\sim P^{e_2}_\Xb$\,};
        \draw [-stealth, thick, color=\decodercolor] (output_1) -- (D_1);
        \draw [-stealth, thick, color=\decodercolor] (output_i) -- (D_i);
        \draw [-stealth, thick, color=\decodercolor] (output_n) -- (D_n);

        \coordinate (encinput_1) at (7.*\xshift,-0*\yshift);
        \coordinate (encinput_i) at (7.*\xshift,-2*\yshift);
        \coordinate (encinput_n) at (7.*\xshift,-4*\yshift);
        \draw [-stealth, thick, color=\encodercolor] (D_1) -- (encinput_1);
        \draw [-stealth, thick, color=\encodercolor] (D_i) -- (encinput_i);
        \draw [-stealth, thick, color=\encodercolor] (D_n) -- (encinput_n);
        
        \coordinate (A') at (8*\xshift,-0.5*\yshift);
        \coordinate (B') at (7*\xshift,0.5*\yshift);
        \coordinate (C') at (7*\xshift,-4.5*\yshift);
        \coordinate (D') at (8*\xshift,-3.5*\yshift);
        \draw[fill=\encodercolor!20] (A') -- (B') -- (C') -- (D') -- cycle;
        \node[const,xshift=7.5*\xshift, yshift=-2*\yshift] {$h$};
        \coordinate (encoutput_1) at (8*\xshift,-1*\yshift);
        \coordinate (encoutput_i) at (8*\xshift,-2*\yshift);
        \coordinate (encoutput_n) at (8*\xshift,-3*\yshift);
        \node (q1) [const, xshift=9.5*\xshift]{\, $\Zb\sim Q^{e_0}_\Zb$};
        \node (q2) [const, xshift=9.5*\xshift, yshift=-2*\yshift]{\, $\Zb\sim Q^{e_1}_\Zb$};
        \node (q3) [const, xshift=9.5*\xshift, yshift=-4*\yshift]{\, $\Zb\sim Q^{e_2}_\Zb$};
        \draw [-stealth, thick, color=\encodercolor] (encoutput_1) -- (q1.west);
        \draw [-stealth, thick, color=\encodercolor] (encoutput_i) -- (q2);
        \draw [-stealth, thick, color=\encodercolor] (encoutput_n) -- (q3.west);

        \node (Z1) [latent, xshift=11*\xshift] {$Z_1$};
        \node (Z2) [latent, xshift=12*\xshift] {$Z_2$};

        \draw [-stealth, thick] (Z1) -- (Z2);
        \plate[inner sep=0.2em, yshift=0.2em, dashed] {plate1}{(Z1) (Z2) }{};

        \node (S1) [latent, xshift=11*\xshift,yshift=-2*\yshift] {$Z_1$};
        \node (S2) [det, xshift=12*\xshift, yshift=-2*\yshift, draw=\ivcolor,thick] {\textcolor{\ivcolor}{$Z_2$}};
        \plate[inner sep=0.1em, yshift=0.2em, dashed] {plate3}{(S1) (S2)}{};

        \node (T1) [det,yshift=-4*\yshift,xshift=11*\xshift,draw=\ivcolor,thick] {\textcolor{\ivcolor}{$Z_1$}};
        \node (T2) [latent, xshift=12*\xshift,yshift=-4*\yshift] {$Z_2$};
        \draw [-stealth, thick] (T1) -- (T2);
        \plate[inner sep=0.1em, yshift=0.2em, dashed] {plate2}{(T1) (T2)}{};

    \end{tikzpicture}
    \caption{\looseness-1 \small
    \textbf{Multi-Environment Setup with Single-Node Perfect Interventions and Shared Mixing Function.}
    Illustration of the considered  multi-environment setup for $n=2$ causal variables $\Vb=\{V_1,V_2\}$ with graph~$G$ given by $V_1 \to V_2$, shared mixing function $f$, %
    and environments $\Ecal=\{e_0,e_1,e_2\}$, corresponding to the observational setting $(e_0)$ and perfect stochastic interventions on $V_1$ (in $e_1$) and $V_2$ (in $e_2$).
    The learnt unmixing function, or encoder, is denoted by $h$, and the  inferred latent representation by $\Zb=h(\Xb)$.
    The corresponding inferred latent distributions $Q^{e_i}_\Zb=h_*(P^{e_i}_\Xb)$ are Markovian w.r.t.\ the candidate graph $G'$ (here, equal to $G$).
    Since the intervention targets are not known, they may in principle differ in $Q^e_\Zb$ as shown here. However, as we prove in~\cref{thm:bivariate}, such misalignment is only possible if a certain  genericity condition~\eqref{eq:genericity_condition} is violated.
    }
    \label{fig:multi_env}
    \vspace{\figbottomspace}
\end{figure}
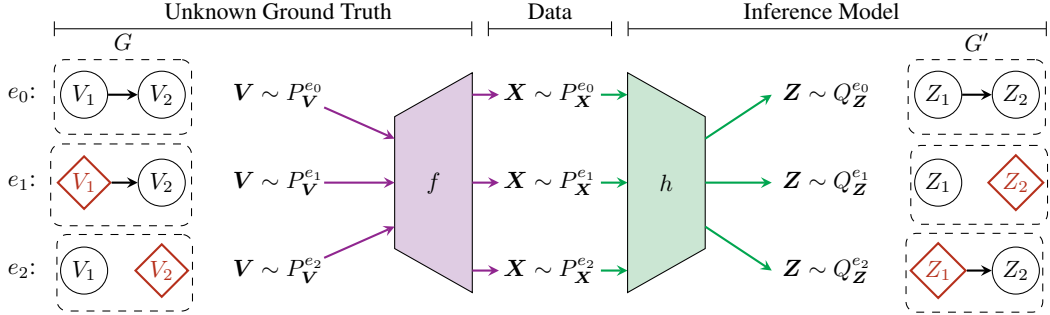

\looseness-1 
Motivated by these challenges to identifiability from i.i.d.\ data, we instead consider learning from \emph{multiple environments} $e\in \Ecal$. That is, we assume access to heterogenous data from multiple distinct distributions $P^e_\Xb$.
Environments can arise, for example, from different experimental settings or correspond to broader contexts such as climate or time.
Previous work has shown that this
setting can, in principle, provide useful causal learning signals~\citep{Tian2001,kocaoglu2019characterization,kocaoglu2017experimental,scholkopf2012causal,rothenhausler2021anchor,brouillard2020differentiable,eaton2007exact,mooij2020joint,peters2016causal,heinze2018invariant,Rojas-Carulla2018,arjovsky2019invariant,krueger2021out,eastwood2022probable,perry2022causal,jaber2020causal,huang2020causal,squires2023linear,ahuja2022interventional,varici2023score,liu2022identifying}.
However, multi-environment data is not necessarily useful if the corresponding distributions $P^e_\Xb$ are allowed to differ in arbitrary ways.
What makes this setting interesting is the assumption that certain parts of the causal generative process are shared across environments.

\looseness-1 Here, we assume that all environments share the same invariant mixing function and underlying SCM, and that any distribution shifts arise from interventions on some of the causal mechanisms.\footnote{\looseness-1 One could also say that there exists only one environment and each dataset corresponds to an intervention.}
General interventions can be modelled in SCMs by replacing some of the equations in~\eqref{eq:structural_equations} by new assignments $V_i:=\tilde f_i(\Vb_{\pa(i)}, \tilde U_i)$, resulting in ``intervened'' mechanisms $\tilde p_i(v_i~|~\vb_{\pa(i)})$ replacing the corresponding conditionals $p_i(v_i~|~\vb_{\pa(i)})$ in~\eqref{eq:causal_factorisation}~\citep{eberhardt2007interventions,dawid2002influence,Pearl2009}.
We summarise this as follows.%
\begin{assumption}[Shared mixing and mechanisms]
\label{ass:shared_mechs}
Each environment $e$ shares the same mixing~$f$, 
\begin{equation*}
    P^e_\Xb=f_*(P^e_\Vb)
\end{equation*}
and each $P^e_\Vb$ results from the same SCM through an intervention on a subset of mechanisms $\Ical^e\subseteq[n]$:
\begin{equation*}
\label{eq:shared_mechanisms}
    p^e(\vb)=p^e(v_1, ..., v_n) = 
    \prod_{i\in \Ical^e} \, p^e_i\left(v_i~|~\vb_{\pa(i)}\right)
    \,
    \prod_{j\in[n]\setminus \Ical^e} \, p_j\left(v_j~|~\vb_{\pa(j)}\right)\,.
\end{equation*}
Importantly, the intervention targets $\Ical^e$ are not assumed to be known.
\end{assumption}
\looseness-1 An intervention on some $V_i$ that fully removes the influence from its parents $\Vb_{\pa(i)}$ is referred to as \textit{perfect}, otherwise as \textit{imperfect}.
It has been shown that imperfect interventions are generally insufficient for full identifiability~\citep{brehmer2022weakly}, even in the linear case~\citep{squires2023linear}.
This is intuitive: if arbitrary imperfect interventions were allowed, including ones which preserve $f_i(\Vb_{\pa(i)}, \cdot)$ and only replace $U_i$ with some new $\tilde U_i$, then 
the spurious ICA solution $(f_\textsc{rf}^{-1}\circ f^{-1}, G_\textsc{ica})$ should be indistinguishable from the ground truth.
In line with prior work~\citep{brehmer2022weakly,squires2023linear,ahuja2022interventional,varici2023score}, we therefore assume perfect interventions.
\begin{assumption}[Perfect interventions]
\label{ass:perfect_interventions}
For all $e\in \Ecal$ and $i\in\Ical^e$, we have
    $p^{e}(v_i~|~\vb_{\pa(i)})=p^{e}(v_i)$.\vspace{\envbottomspace}
\end{assumption}
Based on the principle of independent causal mechanisms~\citep{scholkopf2012causal,peters2017elements}, the \textit{sparse mechanism shift hypothesis}~\citep{scholkopf2021toward,perry2022causal} posits that distribution changes tend to manifest themselves in a sparse or local way in the causal factorization.
In this spirit, we will assume single-node (``atomic'') interventions, $|\Ical^e|=1$, for our main results, as also required for existing results~\citep{brehmer2022weakly,squires2023linear,ahuja2022interventional,varici2023score}.
\looseness-1 As motivated in~\cref{subsec:learning_target}, given data from $\{P^e_\Xb\}_{e\in\Ecal}$ we consider candidate solutions of the form $(h,G')$ where $h$ is an unmixing function, or encoder, which maps observations $\Xb$ to the inferred latents $\Zb=h(\Xb)$, and $G'$ is causal graph capturing the relations among the $Z_i$.
The corresponding distributions of the inferred latents are thus given by the push-forward
\begin{align*}
\label{eq:QeZ}
    Q_\Zb^e=h_*(P^e_\Xb)=(h\circ f)_*P^e_\Vb
\end{align*}
\looseness-1 The multi-environment setup with unknown single-node perfect interventions is illustrated in~\cref{fig:multi_env}.
\vspace{-.25em}
\section{Identifiability Theory}
\label{sec:theory}
\vspace{-.5em}
\looseness-1
\label{sec:minimality_of_equivalence_class}
We start by showing that identifiability up to $\sim_\textsc{crl}$ is, in fact, the best we can hope for when learning from interventional data, without any more direct forms of supervision. To this end, we state the following result, which is presented more formally and proven in~\cref{app:equivalence}.
\begin{restatable}[Minimality of $\sim_\textsc{crl}$; informal]{proposition}{minimality}
\label{prop:equivalence}
\looseness-1 Let $\Zb$ be any representation that is $\sim_\textsc{crl}$ equivalent to $\Vb$, with $G'=\pi(G)$ the associated DAG. Then for any intervention on $\Vb_\Ical\subseteq\Vb$ in $G$, there exists an equally sparse intervention on $\Zb_{\pi(\Ical)}\subseteq\Zb$ in $G'$ inducing the same observed distribution on $\Xb$.
\vspace{\envbottomspace}
\end{restatable}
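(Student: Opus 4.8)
The plan is to show that $\sim_\textsc{crl}$-equivalence not only preserves the observational distribution of $\Xb$ but also the entire family of interventional distributions, so that no interventional data can distinguish $\Vb$ from a $\sim_\textsc{crl}$-equivalent $\Zb$. First I would set up the reparametrisation explicitly. By \cref{def:identifiability}, if $\Zb\sim_\textsc{crl}\Vb$ then $\Zb=\phi(\Pb_\pi\Vb)$ for an element-wise diffeomorphism $\phi$ and a graph isomorphism $\pi:G\to G'$ with $G'=\pi(G)$; equivalently, the associated unmixing functions satisfy $h=\Pb_{\pi^{-1}}\circ\phi\circ f^{-1}$, and $\Xb=f(\Vb)=(f\circ\Pb_{\pi^{-1}}\circ\phi^{-1})(\Zb)=:g(\Zb)$ for a diffeomorphism $g$. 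The key structural fact is that, because $\phi$ is element-wise and $\Pb_\pi$ merely relabels coordinates according to a graph isomorphism, the map $\Vb\mapsto\Zb$ respects the causal factorisation: $p(\vb)=\prod_i p_i(v_i\mid\vb_{\pa(i)})$ transforms into $q(\zb)=\prod_i q_i(z_i\mid\zb_{\pa'(i)})$, where $\pa'$ denotes parents in $G'$, with each factor $q_{\pi(i)}(z_{\pi(i)}\mid\zb_{\pa'(\pi(i))})$ obtained from $p_i(v_i\mid\vb_{\pa(i)})$ by the change of variables induced by $\phi$ coordinate-wise (the Jacobian of an element-wise map is diagonal, so it splits across the product and each $\mid\frac{d\phi_i}{dv_i}\mid$ attaches to the $i$-th conditional). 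Hence the candidate $(\Zb,G')$ carries a well-defined SCM whose mechanisms are in one-to-one correspondence with those of the true SCM.

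Next I would handle interventions. Fix a target set $\Ical\subseteq[n]$ and a perfect (or more generally, stochastic) intervention on $\Vb_\Ical$ that replaces each $p_i(\cdot\mid\vb_{\pa(i)})$, $i\in\Ical$, by some $\tilde p_i$. Define the corresponding intervention on $\Zb$ as the one targeting $\pi(\Ical)$ that replaces $q_{\pi(i)}(\cdot\mid\zb_{\pa'(\pi(i))})$ by the push-forward $\tilde q_{\pi(i)}:=(\phi_{\pi(i)})_*\tilde p_i$ — i.e.\ the same replacement, transported through the element-wise map. This is clearly ``equally sparse'': $|\pi(\Ical)|=|\Ical|$, and it is perfect iff the original is perfect (an element-wise diffeomorphism of a distribution that does not depend on $\vb_{\pa(i)}$ still does not depend on $\zb_{\pa'(\pi(i))}$). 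The claim then follows by verifying that the post-intervention distribution of $\Zb$ under this intervention equals $\phi\circ\Pb_\pi$ applied to the post-intervention distribution of $\Vb$: in the truncated factorisation $\tilde p(\vb)=\prod_{i\in\Ical}\tilde p_i(v_i)\prod_{j\notin\Ical}p_j(v_j\mid\vb_{\pa(j)})$, every factor transforms exactly as it did in the observational case, so the product transforms as a whole, giving $\tilde q(\zb)$. Pushing forward once more through $g=f\circ\Pb_{\pi^{-1}}\circ\phi^{-1}$ then yields the same interventional law on $\Xb$ in both models, since $g$ composed with $\phi\circ\Pb_\pi$ is just $f$ on $\Vb$.

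The main obstacle, and the part the formal version in the appendix presumably spends effort on, is the bookkeeping for the graph isomorphism interacting with the parent sets: one must check that $\pa'(\pi(i))=\pi(\pa(i))$ (immediate from $\pi$ being a graph isomorphism), that the resulting truncated product is a valid Markov factorisation w.r.t.\ $G'$ (so that $(\Zb,G')$ is genuinely an SCM-induced interventional model, not just a formal density identity), and that the change-of-variables bookkeeping is legitimate — $\phi$ being a diffeomorphism guarantees densities exist and transform by the usual Jacobian rule, and element-wiseness guarantees the Jacobian factor decomposes per-coordinate so it does not entangle distinct mechanisms. A secondary subtlety is the residual permutation ambiguity noted after \cref{def:identifiability}: when $\pi$ is not the unique order-preserving permutation (e.g.\ colliders), one simply fixes any admissible $\pi$; the argument does not depend on which is chosen. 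None of these steps requires the genericity condition \eqref{eq:genericity_condition} or perfectness per se — the result holds for arbitrary interventions — which is exactly why $\sim_\textsc{crl}$ is the \emph{minimal} ambiguity: interventional data, however rich, cannot cut below it.
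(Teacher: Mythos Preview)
Your proposal is correct and follows essentially the same approach as the paper's proof in~\cref{app:equivalence}: set up the element-wise reparametrisation, use that the diagonal Jacobian splits across the Markov factorisation (with $\pa'(\pi(i))=\pi(\pa(i))$ from the graph isomorphism), and transport each intervened mechanism via the coordinate-wise change of variables to obtain an equally sparse intervention on $\Zb_{\pi(\Ical)}$ yielding the same $P_\Xb^e$. One small notational slip: you write $\Zb=\phi(\Pb_\pi\Vb)$ but, from $h=\Pb_{\pi^{-1}}\circ\phi\circ f^{-1}$, it should be $\Zb=\Pb_{\pi^{-1}}\circ\phi(\Vb)$; this is harmless since an element-wise map commutes with a permutation up to relabelling the component functions, and the remainder of your argument is unaffected.
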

\label{sec:identifiability_results}
We now present our identifiability results. 
We first study the most fundamental bivariate case with two latent causal variables $V_1$ and $V_2$. This can loosely be seen as the CRL analogue of the widely studied cause-effect problem ($X{\to}Y$ or $Y{\to}X$?) in classical structure learning~\citep{mooij2016distinguishing,peters2017elements}.

\begin{restatable}[Bivariate identifiability up to $\sim_\textsc{crl}$ from one perfect stochastic intervention per node]{theorem}{bivariate}
\label{thm:bivariate}    
\looseness-1 
Suppose that we have access to multiple environments $\{P^e_\Xb\}_{e\in\Ecal}$ generated as described in~\cref{sec:problem_setting} under~\cref{ass:faithfulness,ass:diffeomorphism,ass:shared_mechs,ass:perfect_interventions} with $n=2$.
Let $(h,G')$ be any candidate solution such that the inferred latent distributions $Q^e_\Zb=h_*(P^e_\Xb)$ of $\Zb=h(\Xb)$ and the inferred mixing function $h^{-1}$ satisfy the above assumptions w.r.t.\ the candidate causal graph $G'$.
Assume additionally that
\begin{enumerate}[leftmargin=2.7em, itemsep=0em,topsep=0em]
    \item[(A1)] all densities $p^e$ and $q^e$ are continuously differentiable and fully supported on $\RR^n$;
    \item[(A2)] \looseness-1 we have access to a \emph{known observational environment} $e_0$ and one \emph{single node perfect intervention for each node}, with \emph{unknown targets}: there exist $n+1$ environments $\Ecal=\{e_i\}_{i=0}^n$ such that $\Ical^{e_0}=\varnothing$ and for each $i\in[n]$ we have $\Ical^{e_{i}}=\{\pi(i)\}$
    for an unknown permutation $\pi$ of $[n]$;
    \item[(A3)] 
    for all $i\in[n]$, the intervened mechanisms $\tilde p_{i}(v_i)$ differ from the corresponding base mechanisms $p_i(v_i~|~\vb_{\pa(i)})$ everywhere, 
    in the sense that 
    \begin{equation}
        \label{eq:different_mechanisms}
        \forall \vb: \qquad \frac{\partial}{\partial v_i}\frac{\tilde p_i(v_i)}{p_i(v_i~|~\vb_{\pa(i)})}\neq 0\,;
    \end{equation}%
    \vspace{\envbottomspace}
    \item[(A4)] (\textbf{``genericity''}) the base and intervened mechanisms are not fine-tuned to each other, in the sense that there exists a continuous function $\varphi:\RR^+\to\RR$ for which
    \begin{equation}
    \label{eq:genericity_condition}
        \EE_{\vb\sim P^{e_0}_\Vb}\left[\varphi\left(\frac{\tilde p_2(v_2)}{p_2(v_2~|~v_1)}\right)\right]
        \neq 
        \EE_{\vb\sim P^{e_1}_\Vb}\left[\varphi\left(\frac{\tilde p_2(v_2)}{p_2(v_2~|~v_1)}\right)\right]
    \end{equation}
    \vspace{\envbottomspace}
\end{enumerate}%
Then the ground truth is identified in the sense of~\cref{def:identifiability}, that is, $(f^{-1},G)\sim_\textsc{crl}(h,G')$.
\vspace{\envbottomspace}
\end{restatable}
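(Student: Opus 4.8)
Write $g := h\circ f$, a diffeomorphism of $\RR^n$ with $\Zb = g(\Vb)$ and $Q^e_\Zb = g_*P^e_\Vb$. The starting point is that the density ratio of any two environments is invariant under this reparametrisation, the Jacobian factor cancelling between numerator and denominator: $\tfrac{q^{e}}{q^{e'}}(g(\vb)) = \tfrac{p^{e}}{p^{e'}}(\vb)$. Taking $e' = e_0$ and invoking the causal Markov factorisation together with \cref{ass:shared_mechs,ass:perfect_interventions} on \emph{both} the ground truth and the candidate, the ratio $r_k(\vb) := p^{e_k}(\vb)/p^{e_0}(\vb)$ is the quotient of the intervened over the base mechanism at whichever node is intervened in $e_k$ — hence a function of that variable and its parents in $G$ — while $\tilde r_k(\zb) := q^{e_k}(\zb)/q^{e_0}(\zb)$ is the analogous mechanism ratio in the candidate model, and they satisfy the functional equations $\tilde r_k\circ g = r_k$. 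After the (w.l.o.g.) topological ordering of $\Vb$, a relabelling of the two interventional environments, and (since we only target $\sim_\textsc{crl}$) a relabelling of the coordinates of $\Zb$, it suffices to treat $G = (V_1\to V_2)$ with $e_1$ the perfect intervention on the cause $V_1$ and with $Z_1$ a source of $G'$; one checks that \eqref{eq:genericity_condition} is incompatible with $G$ being edgeless, so this is the only relevant case. The goal is then to show $g(\vb) = (\eta(v_1),T(v_2))$ for diffeomorphisms $\eta,T$ of $\RR$, together with $G' = (Z_1\to Z_2)$.

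\noindent\textbf{Aligned case.} Let $e_\star\in\{e_1,e_2\}$ be the environment in which the candidate intervenes on its source $Z_1$, and first suppose the ground truth also intervenes on $V_1$ there. Then $r_\star$ and $\tilde r_\star$ are functions of a single variable ($v_1$, resp.\ $z_1$); differentiating $\tilde r_\star\circ g = r_\star$ and using $\partial_{v_1}r_\star\neq 0$ from \eqref{eq:different_mechanisms} forces $\partial_{z_1}\tilde r_\star\neq 0$ and $\nabla_\vb g_1$ to be a nowhere-vanishing multiple of the first coordinate vector; hence $g_1$ is a diffeomorphism $\eta$ of $v_1$ alone. Since $g$ is a diffeomorphism of $\RR^2$ and $\partial_{v_2}g_1\equiv 0$, the determinant $\det J_g = (\partial_{v_1}g_1)(\partial_{v_2}g_2)$ is continuous and nowhere zero on the connected set $\RR^2$, so $\partial_{v_2}g_2$ has constant sign and every slice $g_2(v_1^\circ,\cdot)\colon\RR\to\RR$ is a strictly monotone $C^1$ bijection of fixed orientation. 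In the remaining interventional environment both sides perform a perfect intervention on the child, so the conditional $Q_{Z_2\mid Z_1 = z_1}$ equals both the intervened marginal $\tilde Q_2$ (independent of $z_1$) and $(g_2(\eta^{-1}(z_1),\cdot))_*\tilde P_2$; by the full-support continuity hypothesis (A1) the monotone transport map between $\tilde P_2$ and $\tilde Q_2$ is unique, whence $g_2(v_1^\circ,\cdot)$ is one and the same map $T$ for every $v_1^\circ$, i.e.\ $g_2 = T(v_2)$. Thus $g$ is element-wise; faithfulness of $P^{e_0}_\Vb$ makes $V_1,V_2$ dependent, hence $Z_1,Z_2$ dependent under $Q^{e_0}_\Zb$, so $G'$ must contain an edge and equals $(Z_1\to Z_2) = G$. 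This gives $(h,G')\sim_\textsc{crl}(f^{-1},G)$.

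\noindent\textbf{Misaligned case and the role of genericity.} Now suppose that in $e_\star$ the ground truth instead intervenes on the \emph{child} $V_2$. The functional equation becomes $\alpha(g_1(\vb)) = b(\vb)$ with $b(\vb) := \tilde p_2(v_2)/p_2(v_2\mid v_1)$, and it forces the candidate source ratio $\alpha$ to be strictly monotone (since $\partial_{v_2}b\neq 0$ by \eqref{eq:different_mechanisms}); thus $g_1 = \alpha^{-1}\circ b$, and pushing forward the $Z_1$-marginal gives $\alpha_* q_1 = b_*P^{e_0}_\Vb$. But in the remaining interventional environment — which by our labelling is $e_1$, where the candidate leaves its source $Z_1$ untouched while the ground truth intervenes on $V_1$ — the $Z_1$-marginal is again $q_1$, so $\alpha_* q_1 = b_*P^{e_1}_\Vb$ as well. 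Hence $b_*P^{e_0}_\Vb = b_*P^{e_1}_\Vb$, i.e.\ $\EE_{P^{e_0}_\Vb}[\varphi(b(\Vb))] = \EE_{P^{e_1}_\Vb}[\varphi(b(\Vb))]$ for every continuous $\varphi$, contradicting the genericity condition \eqref{eq:genericity_condition}; an analogous computation rules out an edgeless candidate graph. So only the aligned configuration survives, and undoing the relabellings yields $(h,G')\sim_\textsc{crl}(f^{-1},G)$, as in \cref{def:identifiability}. I expect the real work to be in this last part: exhaustively excluding every way the candidate can scramble the source/child roles or the environment--target correspondence (including the candidate placing both interventions on the same node), and showing that each such mismatch either violates the strict monotonicity of \eqref{eq:different_mechanisms} or collapses \eqref{eq:genericity_condition} into an equality; the monotone-transport step in the aligned case, which rests squarely on (A1), also needs a careful justification.
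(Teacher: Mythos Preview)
Your proposal is correct and shares the paper's overall architecture: the same density--ratio identity (Jacobian cancels), the same aligned/misaligned case split, and the same use of (A3) to force a triangular Jacobian. Two points deserve comment.

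\textbf{Aligned case.} The paper works with $\psi=g^{-1}$ and, after obtaining $V_1=\psi_1(Z_1)$, invokes \cref{lemma:brehmer} (a measure--preserving lemma of Brehmer et al.) to conclude $\psi_2$ is constant in $z_1$. Your monotone--transport argument is the elementary one--dimensional version of the same step: once $J_g$ is triangular, each slice $g_2(v_1^\circ,\cdot)$ is a $C^1$ monotone bijection of fixed sign pushing $\tilde P_2$ to $\tilde Q_2$, and the increasing (resp.\ decreasing) optimal transport map between two fully supported laws on $\RR$ is unique. Both routes are valid; yours avoids importing an external lemma.

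\textbf{Misaligned case.} Here your argument is genuinely shorter than the paper's. The paper introduces auxiliary maps $\kappa,\rho$ with $\kappa=\rho\circ\psi$, computes the marginal density $r_1(w_1)$ of $W_1:=a(Z_1)=b(\Vb)$ by two explicit change--of--variable formulas (once through $\kappa$, once through $\rho$), equates the resulting expressions across $e_0$ and $e_1$, and only after integrating against $\varphi$ and undoing the substitutions recovers the violation of~\eqref{eq:genericity_condition}. You collapse all of this into the single observation that $b(\Vb)=\alpha(Z_1)$ is a deterministic identity, and since the candidate intervenes on $Z_2$ in $e_1$ while $Z_1$ is a source of $G'$, the $Z_1$--marginal is the same $q_1$ in both $e_0$ and $e_1$; hence $b_*P^{e_0}_\Vb=\alpha_*q_1=b_*P^{e_1}_\Vb$. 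This is exactly the content of the paper's computation, stated at the level of distributions rather than densities. Your additional remark that~\eqref{eq:genericity_condition} already forces $G$ to contain an edge (since for edgeless $G$ the ratio $\tilde p_2/p_2$ depends only on $v_2$, whose marginal is unaffected by intervening on $V_1$) is a clean way to dispense with the paper's separate Case~2a.
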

\vspace{-.5em}
\begin{proof}[Proof sketch (full proof in~\cref{app:proof_bivariate}).]
Consider $V_1\to V_2$ (the proof for $V_1\not\to V_2$ is similar).
Let $\psi=f^{-1}\circ h^{-1}:\RR^n\to\RR^n$ such that $\Vb=\psi(\Zb)$.\footnote{By~\cref{ass:diffeomorphism}, $f$, $h$, and thus also $h\circ f$  are diffeomorphisms. Hence, $\psi$ is well-defined and also diffeomorphic.} 
By the change of variable formula, for all $\zb$
\begin{equation}
\label{eq:change_of_variable}
\textstyle
q^e(\zb)=p^e(\psi(\zb))\abs{\det \Jb_\psi(\zb)}
\end{equation}
where $(\Jb_\psi(\zb))_{ij}=\frac{\partial \psi_i}{\partial z_j}(\zb)$ denotes the Jacobian of $\psi$.
\looseness-1 We consider two separate cases, depending on whether the 
intervention targets in $q^{e_i}$ for $e_i\in\{e_1,e_2\}$ match those in $p^{e_i}$ (Case 1) or not (Case 2).

\textit{Case 1: Aligned Intervention Targets.}
\looseness-1
According to~\cref{ass:shared_mechs} and (A2), \eqref{eq:change_of_variable} applied to the known observational environment $e_0$ and the interventional environments $e_1,e_2$ leads to the system of equations:
\begin{align}
\label{eq:qe0}
q_1(z_1)q_2(z_2~|~z_{\pa(2;G')})&=
p_1\left(\psi_1(\zb)\right)p_2\left(\psi_2(\zb)~|~\psi_1(\zb)\right) \abs{\det\Jb_{\psi}(\zb)}
\\
\label{eq:qe1}
\tilde q_1(z_1)q_2(z_2~|~z_{\pa(2;G')})&=
\tilde p_1\left(\psi_1(\zb)\right) p_2\left(\psi_2(\zb)~|~\psi_1(\zb)\right) \abs{\det\Jb_{\psi}(\zb)}
\\
\label{eq:qe2}
q_1(z_1)\tilde q_2(z_2)&= 
p_1\left(\psi_1(\zb)\right)\tilde p_2\left(\psi_2(\zb)\right) \abs{\det\Jb_{\psi}(\zb)}
\end{align}
where $z_{\pa(2;G')}$ denotes the parents of $z_2$ in $G'$.
Taking quotients of~\eqref{eq:qe1} and~\eqref{eq:qe0} yields
\begin{equation}
\label{eq:triangularJ}
    \frac{\tilde q_1}{q_1}(z_1)=\frac{\tilde p_{1}}{p_{1}}(\psi_1(\zb))
    \quad
    \overset{\frac{\partial}{\partial z_2}}{\implies}
    \quad 
    0=\left(\frac{\tilde p_{1}}{p_{1}}\right)'\left(\psi_1(\zb)\right)\frac{\partial \psi_1}{\partial z_2}(\zb)
    \quad
    \overset{(A3)}{\implies}
    \quad
    \frac{\partial \psi_1}{\partial z_2}(\zb)=0 \,.
\end{equation}
Thus $V_1=\psi_1(Z_1)$ and $q_1(z_1)=p_1(\psi_1(z_1))\abs{\frac{\partial \psi_1}{\partial z_1}(z_1)}$.
Substitution into~\eqref{eq:qe2} yields
\begin{align}
\label{eq:measure_preserving}
\tilde q_2(z_2)&= \tilde p_2\left(\psi_2(z_1,z_2)\right) \abs{\frac{\partial \psi_2}{\partial z_2}(z_1,z_2)}
\end{align}
\looseness-1 where we have used that, according to~\eqref{eq:triangularJ}, $\Jb_\psi$ is triangular.
According to~\eqref{eq:measure_preserving}, for all $z_1$, the mapping $z_2\mapsto \psi_2(z_1,z_2)$ is measure preserving for $\tilde q_2$ and $\tilde{p}_2$. By~\cref{lemma:brehmer}%
~\cite[\S~A.2, Lemma 2]{brehmer2022weakly},
it follows that $\psi_2$ must be constant in $z_1$.\footnote{This step is where the assumption of perfect interventions~(\cref{ass:perfect_interventions}) is leveraged: the conclusion would not hold for arbitrary imperfect interventions for which~\eqref{eq:measure_preserving} would involve $\tilde q_2(z_2~|~z_1)$ and $p_2\left(\psi_2(z_1,z_2)~|~\psi_1(z_1)\right)$.}
Hence, $\psi$ is an element-wise function.
Finally, $G=G'$ follows from faithfulness~(\cref{ass:faithfulness}), for otherwise $(V_1,V_2)=(\psi_1(Z_1), \psi_2(Z_2))$ would be independent.

\textit{Case 2: Misaligned Intervention Targets.}
If $G'\neq G$, a similar argument to Case 1 (with the roles of $z_1$ and $z_2$ interchanged) also yields a contradiction to faithfulness. This leaves $G=G'$. Writing down the system of equations similar to~\eqref{eq:qe0}--\eqref{eq:qe2}, and then taking ratios of $e_1$ and $e_2$ with $e_0$ yields
\begin{align}
    \label{eq:misaligned_ratios}
    \frac{\tilde q_1}{q_{1}}(z_1)
    =\frac{\tilde p_2\left(\psi_2(\zb)\right)}{p_{2}\left(\psi_2(\zb)~|~\psi_1(\zb)\right)}
    \qquad \mbox{and} \qquad 
    \frac{\tilde q_2(z_2)}{q_{2}(z_2~|~z_1)}
    =\frac{\tilde p_1}{p_1}\left(\psi_1(\zb)\right)\,.
\end{align}
These conditions highlight the misalignment in intervention targets (see~\cref{fig:multi_env}). Unlike in Case 1, they do not directly imply that some elements of $\Jb_\psi$ need to be zero, that is $\Zb$ can be arbitrarily mixed w.r.t.~$\Vb$.
However, \eqref{eq:misaligned_ratios} imposes constraints on the form of $\psi$ that, by exploiting the invariance of $q_1$ across $e_0$ and $e_1$ while $p_1$ changes to $\tilde p_1$, can ultimately be shown to only be satisfied if the two expectations in~\eqref{eq:genericity_condition} are equal for all continuous $\varphi$. However, such fine-tuning is ruled out by (A4).
\end{proof}
\vspace{-.5em}
\begin{remark}
\label{remark:intervention_targets}
\looseness-1
The main difficulty of the proof is that~\eqref{eq:misaligned_ratios} may, in principle, hold when $(p,\tilde p, q, \tilde q)$ and~$\psi$ are completely unconstrained. 
This does not arise in prior work if the intervention targets are known~\citep{lippe2022citris,lippe2022icitris,Liang2023cca} (Case 1), or the densities or mixing are parametrically constrained~\citep{squires2023linear,varici2023score,ahuja2021towards}.
\end{remark}

\textbf{On the Genericity Condition (A4).}
\looseness-1 
The condition in~\eqref{eq:genericity_condition} contrasts  expectations of the same quantity w.r.t.\ 
the observational distribution $P_\Vb^{e_0}$ and the interventional distribution $P_\Vb^{e_1}$ corresponding to an intervention on $V_1$ that turns $p_1$ into $\tilde p_1$.
The shared argument, on the other hand, is a function of the ratio between the intervened mechanism $\tilde p_2$ and its base mechanism $p_2$. 
While the two expectations are always equal for linear $\varphi$, other choices imply non-trivial constraints. For instance, $\varphi(y)=y^2$ yields
\begin{align*}
    \label{eq:genericity_example}
    \int \left(\tilde{p}_1(v_1)-p_1(v_1)\right)
    \int \frac{\tilde{p}_2(v_2)^2}{p_2(v_2\mid v_1)}
    \d v_2\d v_1 \neq 0\,.
\end{align*}
Since $p_1 \neq \tilde p_1$ by assumption (A3), we argue that (A4) should generally hold for randomly chosen $(p_1, p_2, \tilde p_1, \tilde p_2)$ and can only be violated if they are fine-tuned to each other. It can thus be viewed as encoding some notion of genericity---in line with the principle of independent causal mechanisms~\citep{scholkopf2012causal,peters2017elements,janzing2012information,besserve2018group,gresele2021independent}, but also involving the intervened mechanisms.
Interestingly, related genericity conditions also arise in the study of nonlinear cause-effect inference from observational data, where identifiability is often obtained up to a set of pathological (``fine-tuned'') spurious solutions satisfying a partial differential equation involving the original mechanisms~\citep{hoyer2008nonlinear,zhang2009identifiability,immer2022identifiability,strobl2022identifying}.
Further, we note that $\varphi$ in~\eqref{eq:genericity_condition} can also be thought of as a \textit{witness function of genericity}, similar to witness functions in kernel-based two sample and independence testing~\citep{gretton2012kernel}. 

\looseness-1 Next, we provide our identifiability result for an arbitrary number of causal variables.
\begin{restatable}[Identifiability up to $\sim_\textsc{crl}$ from two paired perfect stochastic interventions per node]{theorem}{general}
\label{thm:general}    
\looseness-1 
Suppose that we have access to multiple environments $\{P^e_\Xb\}_{e\in\Ecal}$ generated as described in~\cref{sec:problem_setting} under~\cref{ass:faithfulness,ass:known_n,ass:diffeomorphism,ass:shared_mechs,ass:perfect_interventions}.
Let $(h,G')$ be any candidate solution such that the inferred latent distributions $Q^e_\Zb=h_*(P^e_\Xb)$ of $\Zb=h(\Xb)$ and the inferred mixing function $h^{-1}$ satisfy the above assumptions w.r.t.\ the candidate causal graph $G'$.
Assume additionally that 
\begin{enumerate}[leftmargin=2.7em, itemsep=0em,topsep=0em]
    \item[(A1)] all densities $p^e$ and $q^e$ are continuously differentiable and fully supported on $\RR^n$;
    \item[(A2')] 
    \looseness-1 we have access to at least one
    \emph{pair} of single-node perfect interventions per  node, with unknown targets: 
    there exist $m\geq n$ known pairs of environments $\Ecal=\{(e_j,e_j')\}_{j=1}^m$ such that for each $i\in[n]$ there exists some \emph{unknown} $j\in[m]$ for which $\Ical^{e_{j}}=\Ical^{e_{j}'}=\{i\}$; 
    \item[(A3')] for all $i\in[n]$, the intervened mechanisms $\tilde p_{i}(v_i)$ and $\dbtilde p_{i}(v_i)$ differ everywhere, 
    in the sense that 
    \begin{equation}
        \label{eq:distinct_interventions}
        \forall v_i: \qquad \bigg(\frac{\dbtilde p_i}{\tilde p_i}\bigg)'(v_i)\neq 0\,;
    \end{equation}%
\end{enumerate}%
\vspace{-0.5em}
Then the ground truth is identified in the sense of~\cref{def:identifiability}, that is, $(f^{-1},G)\sim_\textsc{crl}(h,G')$.
\vspace{-0.5em}
\end{restatable}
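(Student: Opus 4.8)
The plan is to reduce the general $n$-variable statement to a collection of pairwise arguments in the spirit of Case~1 of the proof of \cref{thm:bivariate}, exploiting the stronger hypothesis (A2') that two \emph{distinct} perfect interventions are available for each node. As before, set $\psi=f^{-1}\circ h^{-1}$, so that $\Vb=\psi(\Zb)$ and $\psi$ is a diffeomorphism of $\RR^n$ (by \cref{ass:diffeomorphism}), and for every environment $e$ the change-of-variables formula gives $q^e(\zb)=p^e(\psi(\zb))\,\abs{\det\Jb_\psi(\zb)}$. First I would argue that, for each latent node $i$, the paired interventions force one coordinate of $\psi$, say $\psi_{\sigma(i)}$, to depend on exactly one coordinate of $\Zb$: taking the pair $(e_j,e_j')$ with $\Ical^{e_j}=\Ical^{e_j'}=\{i\}$, the only mechanism that changes between these two environments is the one at $i$, so dividing the two change-of-variables identities makes the Jacobian determinant and all unintervened factors cancel. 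The surviving identity relates $\dbtilde q_i / \tilde q_i$ (a function of the $Z$-coordinate intervened in $e_j'$) to $(\dbtilde p_i/\tilde p_i)\circ\psi_\bullet$; differentiating with respect to every other $Z$-coordinate and using (A3') (which says $(\dbtilde p_i/\tilde p_i)'\neq 0$ everywhere) kills the corresponding partials of that coordinate of $\psi$. Doing this for all $n$ nodes shows $\Jb_\psi$ is, up to a row/column permutation $\Pb_\pi$, diagonal, i.e.\ $\psi=\Pb_\pi\circ\phi$ for an element-wise diffeomorphism $\phi$.

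\textbf{From a diagonal $\psi$ to $\sim_\textsc{crl}$-equivalence.} Once $\psi$ is shown to be (a permutation of) an element-wise map, $\Zb$ and $\Vb$ are related by $Z_i=\phi_i^{-1}(V_{\pi(i)})$ up to relabelling, which is precisely the transformation appearing in \cref{def:identifiability}. It then remains to check that the candidate graph $G'$ equals $\pi(G)$. This follows from \cref{ass:faithfulness} applied in both the $\Vb$-world and the $\Zb$-world: element-wise reparametrisation preserves all conditional-independence statements, so $Q^{e_0}_\Zb$ (or, absent an observational environment, any single interventional $Q^e_\Zb$ viewed through its induced conditional independencies among the non-intervened coordinates) is faithful to $G'$ if and only if $P_\Vb$ is faithful to $G$ under the same reparametrisation; matching the d-separation structures forces $G'=\pi(G)$. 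Care is needed here because (A2') does not postulate an observational environment — one must argue that faithfulness can still be read off from the interventional distributions, e.g.\ by noting that a perfect intervention on a single node only deletes incoming edges to that node and hence preserves enough of the conditional-independence structure to pin down the remaining edges, iterating over the $n$ available targets.

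\textbf{Main obstacle.} I expect the crux to be the ``cancellation'' step that makes the Jacobian and all non-intervened mechanism factors drop out when dividing the two environments of a pair. In the bivariate argument this was clean because there were only two variables and a known observational environment; here, for a node $i$ that is neither a source nor a sink, the factorisation $p^{e_j}(\vb)=\tilde p_i(v_i)\prod_{k\neq i}p_k(v_k\mid \vb_{\pa(k)})$ involves the descendants of $i$ through their conditioning on $v_i$, and one must be careful that these descendant factors are genuinely the \emph{same} function in $e_j$ and $e_j'$ (they are, since only the mechanism at $i$ is replaced, and a perfect intervention does not alter $\Vb_{\pa(k)}$ for $k\neq i$). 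Establishing that the ratio $q^{e_j'}(\zb)/q^{e_j}(\zb)$ therefore equals $(\dbtilde p_i/\tilde p_i)(\psi_{\sigma(i)}(\zb))$ \emph{as a function on all of $\RR^n$}, and that the left-hand side is a function of a single $Z$-coordinate (because $Q^{e}_\Zb$ is Markov and perfectly-intervened w.r.t.\ $G'$, so its intervened coordinate is independent of the rest), is the step that carries the whole argument; everything after it is the same differentiation-and-faithfulness bookkeeping as in \cref{thm:bivariate}. A secondary subtlety is matching up which coordinate $\sigma(i)$ of $\psi$ gets pinned down by node $i$'s pair, and checking that $\sigma$ is a well-defined permutation — this uses that distinct latent nodes have disjoint single-node intervention targets and that $\psi$ is a bijection.
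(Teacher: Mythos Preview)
Your proposal is correct and follows essentially the same route as the paper: divide the two environments in each pair to isolate $(\dbtilde p_i/\tilde p_i)\circ\psi_i$, invoke (A3') to force $\psi$ to be a permuted element-wise map (with the permutation-well-definedness handled, as you note, by the invertibility of $\psi$), and then use faithfulness on the interventional distributions to pin down the graph. The paper's graph-isomorphism step is carried out edge-by-edge---for a putative edge $V_i\to V_j$ it works in environment $e_i$, reads off $Z_{\pi(i)}\independent Z_{\pi(j)}\mid \Zb_{\pa(\pi(j);G')}$ from Markovianity of $Q^{e_i}_\Zb$ w.r.t.\ the post-intervention $G'$, and transfers this to $P^{e_i}_\Vb$ via a lemma that element-wise diffeomorphisms preserve conditional independence---which is precisely the concrete realisation of your ``iterating over the $n$ available targets'' plan.
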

\begin{proof}[Proof sketch (full proof in~\cref{app:proof_general}).]
\vspace{-0.25em}
\looseness-1 
By considering ratios between $e_j$ and $e_j'$, taking partial derivatives w.r.t.\ $z_l$, and using assumptions (A3'), we can identify a subset $\Ecal_n\subseteq \Ecal$ of exactly $n$ pairs of environments corresponding to distinct intervention targets in $p$ (for otherwise $\psi$ cannot be invertible). 
For $(e_i,e_i')\in\Ecal_n$, w.l.o.g.\ fix the intervention targets in $p$ to $\Ical^{e_i}=\Ical^{e_i'}=\{i\}$ and let $\pi$ be a permutation of $[n]$ such that $\pi(i)$ denotes the inferred intervention target in $q$ that by (A2') is shared across $(e_i,e_i')$.
By the same argument as before, we must have $V_i=\psi_i(Z_{\pi(i)})$, that is, $\psi$ is a permutation composed with an element-wise function.
It remains to show that $\pi$ is a graph isomorphism, that is, $V_i\to V_j$ in~$G$ $\iff$ $Z_{\pi(i)}\to Z_{\pi(j)}$ in $G'$. We prove $\implies$; the other direction is analogous. Suppose for a contradiction that there exist $(i,j)$ such that $V_i\to V_j$ in~$G$, but $Z_{\pi(i)}\not\to Z_{\pi(j)}$ in $G'$. 
Consider $e_i$ in which there are perfect interventions on $Z_{\pi(i)}$ and $V_i$.
For $Z_{\pi(k)}\in\Zb_{\pa(\pi(j);G')}$, let $\tilde V_k=\psi_{k}(Z_{\pi(k)})$ and denote $\tilde\Vb=\cup_k \tilde V_k\subset \Vb\setminus \{V_i,V_j\}$.
Since $Z_{\pi(i)}$ and $Z_{\pi(j)}$ are d-separated by $\Zb_{\pa(\pi(j);G')}$ in the post-intervention graph $G'_{\overline Z_{\pi(i)}}$ with arrows pointing into $Z_{\pi(i)}$ removed~\citep{Pearl2009}, it follows by Markovianity of $q$ w.r.t.\ $G'$ that $Z_{\pi(i)}\independent Z_{\pi(j)}~|~\Zb_{\pa(\pi(j);G')}$ in~$Q_\Zb^{e_i}$.
By applying the corresponding diffeomorphic functions $\psi_i$, it follows from~\cref{lemma:preserve_cond_ind} in~\cref{app:lemmas} that $V_i\independent V_j~|~\tilde \Vb$ in $P^{e_i}_\Vb$. This violates faithfulness~(\cref{ass:faithfulness}) of $P_\Vb$ to $G$ since $V_i$ and $V_j$ are d-connected in $G_{\overline V_i}$.
\end{proof}%
\vspace{-.5em}
\looseness-1  (A2') states that we know that a \textit{pair of datasets} corresponds to two distinct {interventions} on the same underlying variable, even though we may not know the exact target of the intervention.
This situation could arise, for example, if both datasets are collected under the same experimental setup but with varying experimental parameters. 
We stress that this is different from data consisting of \textit{pairs of views} $(\xb,\tilde\xb)$ sharing the values of some variables, which is \textit{counterfactual} in nature~\citep{von2021self,brehmer2022weakly,ahuja2022weakly}.
One of the main challenges for our analysis (compared to a counterfactual multi-view setting) thus stems from the lack of correspondences across observations from different datasets.
We also stress that a purely observational environment is not needed in this case, cf.\ (A2) in~\cref{thm:bivariate}.

\looseness-1 (A3') states that the intervention mechanisms are distinct in that their ratio is strictly monotonic, similar to~\eqref{eq:different_mechanisms} in (A3).
This is a slightly stronger version of the assumption of \textit{interventional discrepancy} proposed by~\citet{Liang2023cca},\footnote{Cf.\ the \textit{interventional regularity} assumption of~\citet[][Asm.~2]{varici2023score} which instead considers partial derivatives w.r.t.\ the parents and is related to c-faithfulness~\citep[][Defn.~7]{jaber2020causal}.} which has been shown to be necessary for identifiability even if the graph $G$ is known.
For Gaussian $\tilde p_i, \dbtilde p_i$, (A3') is satisfied, for example, by a shift in mean.
In the proofs, this is used to show that $\psi$ must be an element-wise function, see~\eqref{eq:triangularJ}.
Intuitively,
if ${\tilde p_i=\dbtilde p_i}$ in some open set for more than one $i$, then the underlying variables can be nonlinearly mixed by a measure-preserving automorphism within this set without affecting the observed distributions~\citep{Liang2023cca}.

\vspace{-0.5em}
\section{Interpreting Causal Representations}
\label{sec:interpreting}
\vspace{-0.5em}
Suppose that we succeed in identifying $\Vb$ and $G$ up to~$\sim_\textsc{crl}$ (\cref{def:identifiability}). 
How can we use or interpret such a causal representation?
Since the scale of the variables is arbitrary~(\cref{subsec:learning_target}),  we clearly cannot predict the exact outcomes of interventions. 
We therefore seek causal quantities that are preserved by the irresolvable ambiguities of $\sim_\textsc{crl}$. 
A prime candidate for this are interventional causal notions defined in terms of information theoretic quantities~\citep{cover1999elements} and in particular the KL divergence $D_\textsc{kl}$. 
\begin{definition}[Causal influence; based on Defn.~2 of~\citet{janzing2013quantifying}]
\label{def:influence}
Let $P_\Vb$ be Markovian w.r.t.\ a DAG $G$ with vertices $\Vb$. For any $V_i\to V_j$ in $G$, the \emph{causal influence of $V_i$ on $V_j$} is given by
\begin{equation*}
    \mathfrak{C}^{P_\Vb}_{i\to j}:=D_\textsc{kl}\big(P_\Vb~\big\|~ P^{i\to j}_\Vb\big), \quad \mbox{where} \quad p^{i\to j}_j\big(v_j~|~\vb_{\pa(j)\setminus \{i\}}\big)
    =
    \medint\int p_j\big(v_j~|~\vb_{\pa(j)}\big) p_i(v_i) \d v_i
    \vspace{-0.25em}
\end{equation*}
\looseness-1
and $P_\Vb^{i\to j}$ is the interventional distribution arising from replacing the $j$\textsuperscript{th} mechanism 
by $p^{i\to j}_j$.\footnote{\looseness-1 Intuitively, this intervention captures the process of removing the arrow $V_i\to V_j$ in $G$ and ``feeding'' the conditional $p(v_j~|~\vb_{\pa(j)})$ with an independent copy of $V_i$, distributed according to its marginal, see~\citep{janzing2013quantifying} for details.}
\vspace{\envbottomspace}
\end{definition}
\looseness-1 The following result, proven in~\cref{app:proof_influence}, states that the causal influences are invariant to reparametrisation and equivariant to permutations, the two irresolvable ambiguities of the $\sim_\textsc{crl}$ equivalence class.
\begin{restatable}[Preservation of causal influences under $\sim_\textsc{crl}$]{theorem}{influence}
\label{prop:influence}
Let $P_\Vb$ be Markovian w.r.t.\ $G$, let $\pi$ be a graph isomorphism of $G$, and let $\phi$ be an element-wise diffeomorphism. Let $\Zb=\Pb_{\pi^{-1}}\circ \phi(\Vb)$ and denote its induced distribution by $Q_\Zb$.
Then for any $V_i\to V_j$ in $G$ we have $\mathfrak{C}^{P_\Vb}_{i\to j}=\mathfrak{C}^{Q_\Zb}_{\pi(i)\to \pi(j)}$.
\vspace{\envbottomspace}
\end{restatable}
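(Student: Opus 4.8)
I would exploit that $\sim_\textsc{crl}$ is generated by two \emph{commuting} operations — an element-wise diffeomorphism $\Phib:=(\phi_1,\dots,\phi_n)$ and a relabeling of coordinates by $\pi$ — and prove the claim for each in isolation. Writing $\Wb:=\Phib(\Vb)$ so that $\Zb=\Pb_{\pi^{-1}}(\Wb)$, it then suffices to establish (i) \emph{reparametrization invariance}, $\mathfrak{C}^{P_\Wb}_{i\to j}=\mathfrak{C}^{P_\Vb}_{i\to j}$, and (ii) \emph{permutation equivariance}, $\mathfrak{C}^{Q_\Zb}_{\pi(i)\to\pi(j)}=\mathfrak{C}^{P_\Wb}_{i\to j}$; composing the two gives the theorem. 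Two elementary facts are used throughout: that $D_\textsc{kl}$ is invariant under any diffeomorphism $T$ (the Jacobian factors cancel in $\log(\tfrac{\d T_*\mu}{\d T_*\nu})$, so $D_\textsc{kl}(T_*\mu\,\|\,T_*\nu)=D_\textsc{kl}(\mu\,\|\,\nu)$ after a change of variables), and that a componentwise map sends marginals to marginals, whence the marginal of $W_i=\phi_i(V_i)$ is the pushforward of the marginal of $V_i$.

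\textbf{Step (i): reparametrization invariance (the crux).} By the change-of-variables formula applied to the causal factorization $p(\vb)=\prod_j p_j(v_j\mid \vb_{\pa(j;G)})$, $P_\Wb$ is Markovian w.r.t.\ the \emph{same} graph $G$, with conditionals $q_j(w_j\mid \wb_{\pa(j;G)})=p_j\!\big(\phi_j^{-1}(w_j)\mid (\phi_k^{-1}(w_k))_{k\in\pa(j;G)}\big)\,\big|(\phi_j^{-1})'(w_j)\big|$ and marginal $q_i(w_i)=p_i(\phi_i^{-1}(w_i))\,|(\phi_i^{-1})'(w_i)|$. I would then substitute $v_i=\phi_i^{-1}(w_i)$ in $\int q_j(w_j\mid \wb_{\pa(j;G)})\,q_i(w_i)\,\d w_i$: the factor $|(\phi_i^{-1})'(w_i)|$ is exactly the substitution Jacobian and cancels, leaving the edge-removal mechanism of \cref{def:influence} transforming as $q^{i\to j}_j(w_j\mid \wb_{\pa(j;G)\setminus\{i\}})=p^{i\to j}_j\!\big(\phi_j^{-1}(w_j)\mid (\phi_k^{-1}(w_k))_{k\in\pa(j;G)\setminus\{i\}}\big)\,|(\phi_j^{-1})'(w_j)|$, i.e.\ with the \emph{same} transformation law as $q_j$. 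Hence $P_\Wb^{i\to j}=\Phib_*\big(P_\Vb^{i\to j}\big)$, and diffeo-invariance of $D_\textsc{kl}$ gives $\mathfrak{C}^{P_\Wb}_{i\to j}=D_\textsc{kl}\big(\Phib_*P_\Vb\,\big\|\,\Phib_*P_\Vb^{i\to j}\big)=D_\textsc{kl}\big(P_\Vb\,\big\|\,P_\Vb^{i\to j}\big)=\mathfrak{C}^{P_\Vb}_{i\to j}$.

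\textbf{Step (ii): permutation equivariance.} Relabeling coordinates by $\pi$ is a volume-preserving diffeomorphism, so the density of $Q_\Zb$ is the $\pi$-relabeling of that of $P_\Wb$; together with the identity $\pi(\pa(k;G))=\pa(\pi(k);G')$ built into $G':=\pi(G)$, this shows $Q_\Zb$ is Markovian w.r.t.\ $G'$ with $q_{\pi(k)}(z_{\pi(k)}\mid \zb_{\pa(\pi(k);G')})$ equal to the corresponding conditional of $P_\Wb$. The marginal of $Z_{\pi(i)}$ equals that of $W_i$, and the integral in \cref{def:influence} is unchanged by renaming its dummy variable, so $Q_\Zb^{\pi(i)\to\pi(j)}$ is the $\pi$-relabeling of $P_\Wb^{i\to j}$; invariance of $D_\textsc{kl}$ under the relabeling diffeomorphism then yields (ii), and combining with (i) closes the argument.

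\textbf{Expected main obstacle.} The only non-mechanical part is the bookkeeping in Step (i): one must confirm that the $p_i(v_i)$ in \cref{def:influence} is the genuine marginal of $V_i$ (so $q_i$ is literally its pushforward) and then track all Jacobian determinants so that exactly the substitution Jacobian cancels while the surviving factor $|(\phi_j^{-1})'(w_j)|$ matches the one attached to $q_j$ — this is precisely what makes $P_\Wb^{i\to j}$ a clean pushforward of $P_\Vb^{i\to j}$. Everything else (diffeo-invariance of $D_\textsc{kl}$, the relabeling argument, and the identity $\pi(\pa(k;G))=\pa(\pi(k);G')$) is routine.
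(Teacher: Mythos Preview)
Your proposal is correct and follows essentially the same approach as the paper: both arguments boil down to showing, via the change-of-variables formula, that the edge-removal interventional distribution $P_\Vb^{i\to j}$ is carried to $Q_\Zb^{\pi(i)\to\pi(j)}$ by the map $\Pb_{\pi^{-1}}\circ\phi$, and then invoking (explicitly in your case, by direct computation in the paper's) the invariance of $D_\textsc{kl}$ under diffeomorphisms. The only difference is organizational---you factor the map into the element-wise piece $\Phib$ and the permutation $\Pb_{\pi^{-1}}$ and handle each separately, whereas the paper treats them jointly by writing $V_i=\psi_i(Z_{\pi(i)})$ and substituting directly into the integral expression for $\mathfrak{C}^{Q_\Zb}_{\pi(i)\to\pi(j)}$; your decomposition is arguably cleaner bookkeeping but not a different idea.
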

\looseness-1
\Cref{prop:influence} implies that the strength of causal relations among variables in the inferred graph carry meaning. They can thus be used to uncover changes to the latent causal mechanisms underlying different experimental datasets, for example, to gain scientific insights when combined with domain knowledge.
\begin{figure}[t]
\vspace{\figtopspace}
\vspace{-0.5em}
\centering
      \includegraphics[width=\textwidth]{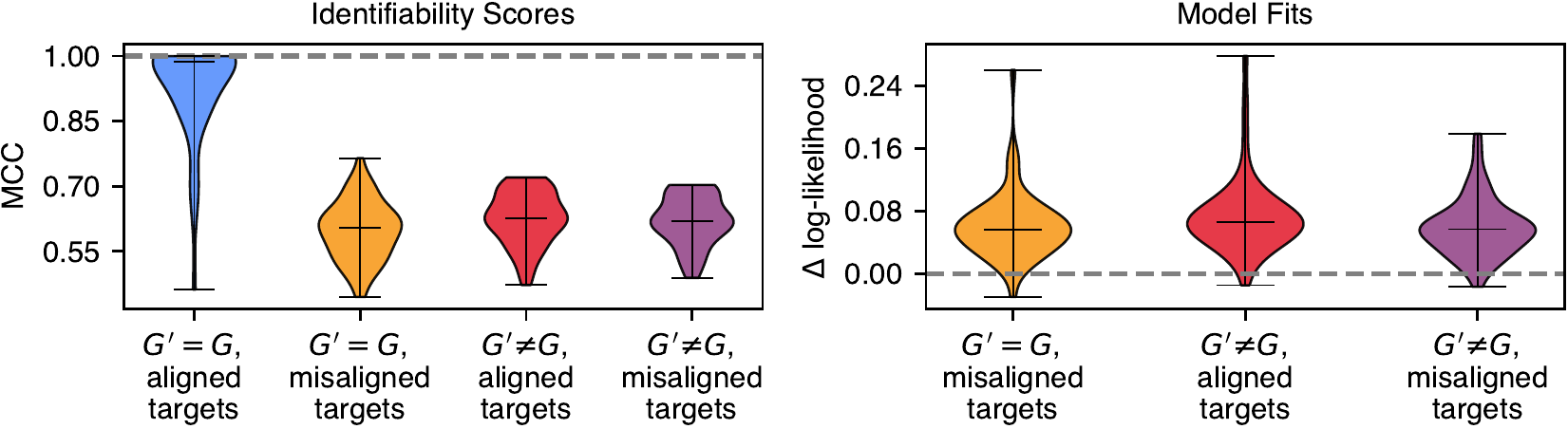}
\caption{\looseness-1 \small
\textbf{Empirical Comparison of Correctly and Incorrectly Specified Normalizing Flow-Based Models.} 
For $n=2$ latent causal variables with graph $G$ given by $V_1\to V_2$, we compare a generative model based on the correct causal graph $G'=G$ and intervention targets (blue) to other generative models assuming the wrong graph $G'\ne G$ or misaligned intervention targets (yellow, red, purple).
We show mean correlation coefficients (MCCs) between the learned and ground truth latents \textit{(Left)} and the difference in validation model log-likelihood between the well-specified and misspecified models \textit{(Right)}.
Each violin plot is based on 50 different ground truth data generating processes;
the horizontal lines indicate the minimum, median and maximum values.
\label{fig:experiments_main}
}
\vspace{\figbottomspace}
\vspace{-.5em}
\end{figure}

\section{Learning Objectives}
\label{sec:objectives}
\vspace{-.75em}
While our main focus is on studying identifiability, our theoretical insights also suggest approaches to learning causal representations from finite interventional datasets sampled from $\{P^e_\Xb\}_{e\in\Ecal}$.
The main idea is to fit the data in a way that preserves the sparsity of interventions~\citep{scholkopf2021toward,perry2022causal} by employing the same (un)mixing function and sharing 
mechanisms across environments~(\cref{ass:shared_mechs}).
We sketch two approaches, which, according to our theory,  should both asymptotically identify the ground truth up to $\sim_\textsc{crl}$ if the set of available environments $\Ecal$ is sufficiently diverse (and the other assumptions hold).
\begin{itemize}[leftmargin=*,topsep=-0.25em,itemsep=0em]
    \item
\looseness-1 
\textit{Autoencoder Framework}: 
Jointly learn an encoder~$h$, a graph~$G'$, and intervention targets~$\Ical^e$ such that the encoded latents $\Zb=h(\Xb)$ can be used to reconstruct the observed $\Xb$ across all environments, while ensuring all but the intervened mechanisms are shared.
Using $E$ as an environment indicator, the latter corresponds to the constraint $Z_i \independent E~|~\Zb_{\pa(i;G')}$ for $i\not\in\Ical^e$, implementable, for example, through a suitable conditional independence test~\citep{fukumizu2007kernel,zhang2011kernel,park2021conditional}.
\item 
\textit{Generative Modelling Approach}: Fit a base generative model $(G',p,f)$ and intervention models $(\Ical^e,\{p^e_i\}_{i\in\Ical^e})_{e\in \Ecal}$  by maximizing the likelihood of the multi-environment data. For example, given a candidate graph $G'$ and candidate intervention targets $\{\Ical^e\}_{e\in \Ecal}$, learn the base and intervened mechanisms and mixing; %
then pick the graph and intervention targets that achieve the best fit. 
\end{itemize}

\vspace{-0.5em}
\section{Experiments}
\label{sec:experiments}
\vspace{-0.75em}
\textbf{Setup.}
We experimentally pursue the second, generative approach.
Specifically, we model the mixing function generating $\Xb$ from $
\Zb$ as a \textit{normalizing flow}~\citep{papamakarios2021normalizing,durkan2019neural} with different environment-specific base distributions, determined by the underlying causal graph, intervention targets, and (learnt) base and intervened mechanisms.
Here, we focus on the bivariate case with two ground-truth causal latent variables $V_1\to V_2$. According to~\cref{thm:bivariate}, this setting should be identifiable from three environments: an observational one and one perfect intervention on each of $V_1$ and $V_2$.
Our goal is to verify this empirically in light of finite data and optimization issues.
We fix the observation dimension to $d=2$ to facilitate exact likelihood training of the normalizing flows, and fit a separate generative model for each choice of graph and intervention targets.\footnote{This can be viewed as a nested approach in which the inner loop corresponds to the method for Causal Component Analysis (CauCA) of~\citet{Liang2023cca}, and the outer loop to a search over $G'$ and $\{\Ical^e\}_{e\in\Ecal}$. Code to reproduce our experiments is available at: \href{https://github.com/akekic/causal-component-analysis}{https://github.com/akekic/causal-component-analysis}.}
The base and intervened mechanisms are linear Gaussian and the mixing function a three-layer MLP, see~\cref{app:experimental_details} for implementation details.

\textbf{Results.}
The results are summarized in~\cref{fig:experiments_main}.
Our main findings are two-fold:
First, we observe that, in the majority of cases, the well-specified model attains the highest held-out log-likelihood, as shown in~\cref{fig:experiments_main} \textit{(Right)}. 
This suggests that the likelihood of otherwise comparable generative models can act as a reliable criterion to select the correct causal graph and intervention targets.
Second, we find that the ground truth latent causal variables are approximately identified up to element-wise rescaling (MCC values close to one) by the correctly specified model and not by any other model, as shown in~\cref{fig:experiments_main} \textit{(Left)}.
This indicates that recovering the correct graph and targets is not only sufficient but also necessary for reliable identification of the causal representation.
Taken together, these findings are consistent with our notion of $\sim_\textsc{crl}$-identifiability~(\cref{def:identifiability}) and~\cref{thm:bivariate}.

In~\cref{app:additional_results}, we perform an additional experiment with $n=3$ variables, nonlinear latent SCMs, fixed causal order without graph search, and one interventional environment per node, thus assaying violations of assumption (A2') in the context of~\cref{thm:general}. 
Generally, we find that a correct choice of intervention targets can be selected based on model likelihood, leading to approximate identification of the causal variables, even when only  the causal order is given, see~\cref{fig:experiments_nonparam} and \cref{app:additional_results} for details.
\vspace{-.5em}
\section{Conclusions and Discussion}
\label{sec:discussion}
\vspace{-.5em}
\looseness-1 
The world is full of domain shifts and different environments. 
Often, we cannot pin down what exactly differs between two domains, but  it may reasonably be modelled as a change in some causal mechanisms.
While prior work relied on harder-to-obtain counterfactual data or parametric constraints for identifiability, our study demonstrates that interventional data can be sufficient---even in the nonparametric case where the spaces of mixing functions and mechanisms are infinite dimensional.
Our results can be considered a step towards justifying the use of expressive machine learning methods for learning interpretable causal representations from high-dimensional experimental data in situations where parametric assumptions are undesirable, e.g., for complex systems in physics, biology, or medicine.

\textbf{Weaker Notions of Identifiability.}
\looseness-1 
In the present work, we have focused on the strongest notion of identifiability that is achievable in a nonparametric setting~(\cref{def:identifiability}). 
However, subject to the available data and assumptions, identifiability
up to~$\sim_\textsc{crl}$ will not always be possible. In this case, weaker notions of identifiability are of interest.
For example, we may not be able to uniquely recover variables that are not targeted by interventions~\citep{lippe2022citris,ahuja2022interventional,von2021self}, or only recover groups of variables up to (non-)linear mixing~\citep{von2021self,varici2023score,ahuja2022interventional} and the graph up to transitive closure~\citep{squires2023linear} if interventions are imperfect, or soft.
A precise characterization of weaker notions of \textit{nonparametric} identifiability from different types of \textit{interventions} (cf.~\citep{lachapelle2022partial} for a temporal, semi-parametric setting) is an interesting direction for future work.
\textbf{Known vs.\ Unknown Intervention Targets.}
\looseness-1
When intervention targets can be considered known appears to be a more nuanced concept in CRL than in a fully observed setting, see also \citep[\S E]{Liang2023cca} for an extended discussion.
Recall that we assume w.l.o.g.\ that $V_1\preceq...\preceq V_n$ and only consider graphs respecting this ordering~(\cref{subsec:learning_target}), see also~\citep[][Remark 1]{squires2023linear}.
The intervention targets are then unknown w.r.t.\ the pre-imposed causal ordering.
This is a key aspect that makes our setting more realistic, but also
substantially complicates the analysis (see~\eqref{eq:misaligned_ratios} and~\cref{remark:intervention_targets}).
Similar to~\citep{varici2023score}, for~\cref{thm:bivariate} we require a set of \textit{exactly} $n$ environments (one intervention on each node).\footnote{By dropping the assumption of a fixed ordering and considering all DAGs, in this case one could also call $V_i$ the variable intervened upon in environment $i$, and then consider the targets ``known'' in this sense.}
However, we relax this requirement to mere coverage (``at least $1$'') in~\cref{thm:general} as shown for linear causal models in~\citep{squires2023linear,buchholz2023learning}.

\textbf{Identifiability From One Intervention per Node for Any $n$.}
We conjecture that~\cref{thm:bivariate} can be generalized to $n>2$, subject to a suitably adjusted set of genericity conditions involving several intervened and base mechanisms, akin to~\eqref{eq:genericity_condition} in the bivariate case.
The main challenge to such a generalization appears to be combinatorial, as there are $n!-1$ ways of misaligning intervention targets across $p$ and $q$.
In~\cref{thm:general}, we sidestep this issue by assuming pairs of environments.
Thus, while two single-node perfect interventions are  sufficient, we do not believe this to be necessary. 
\textbf{On the Assumption of Known $n$ and Its Relation to Markovianity.}
\looseness-1 
\Cref{ass:known_n} relates to the notions of  \textit{causal sufficiency} or \textit{Markovianity} in classical causal inference, which correspond to the assumption of independent $U_i$ in~\cref{def:latent_SCM}, implying the causal Markov factorization~\eqref{eq:causal_factorisation}~\citep{spirtes2001causation,Pearl2009,peters2017elements}. 
With \textit{unobserved} $\Vb$ and \textit{unknown} $n$, the notion of ``unobserved confounders'' gets blurred,
since one can, in principle, always construct a causally sufficient system by increasing~$n$ and adding any causes of two or more endogenous variables to~$\Vb$.
\Cref{ass:known_n} then states that the minimum number of variables required to do so is known.\footnote{Techniques for estimating the intrinsic dimensionality $n$ of the observation manifold $\Xcal\subseteq \RR^d$ or methods rooted in Bayesian nonparametrics could provide a means of relaxing this assumption.}
However, this can lead to very large systems, 
which may in turn challenge the assumption of an invertible mixing~(\cref{ass:diffeomorphism}). 
Extensions of our analysis to unknown $n$, non-Markovian, or non-invertible models constitute an interesting direction for future investigations.

\textbf{Practicality.}
The method explored in~\cref{sec:experiments} requires searching over graphs and intervention targets, which  gets intractable even for moderate $n$.
Simultaneously learning an (un)mixing function, causal graph,  intervention targets, and mechanisms is challenging. Further work is needed to make methods for nonparametric CRL from multi-environment data more practical, e.g., by exploring the proposed autoencoder framework with a continuous parametrisation of graph~\citep{zheng2018dags,zheng2020learning} and targets~\citep{jang2017categorical}.
\begin{ack}
We thank Simon Buchholz, Jiaqi Zhang, and the anonymous reviewers for helpful discussions, comments, and suggestions.

This work was supported by the T\"ubingen AI Center and by the Deutsche Forschungsgemeinschaft (DFG, German Research Foundation) under Germany’s Excellence Strategy, EXC number 2064/1, Project number 390727645. L.G.\ was supported by the VideoPredict project, FKZ: 01IS21088.
\end{ack}

{
\bibliographystyle{abbrvnat}
\bibliography{ref}
}

\clearpage

\appendix
\addcontentsline{toc}{section}{Appendices}%
\part{Appendices} %
\changelinkcolor{black}{}
\parttoc %

\changelinkcolor{BrickRed}{}

\clearpage
\section{Extended Discussion of Related Work}
\label{app:related_work}
Prior work on causal representation learning with general nonlinear relationships (both among latents and between latents and observations) and without an explicit task or label typically relies on some form of {\em weak supervision}. 
One example of weak supervision is ``multi-view'' data consisting of tuples of related observations.
\Citet{von2021self} consider counterfactual pairs of observations arising from imperfect interventions through data augmentation, and prove identifiability for
the invariant non-descendants of intervened-upon variables.
\Citet{brehmer2022weakly} also use
counterfactual pre- and post-intervention views and show that the latent SCM can be identified
given all single-node perfect stochastic interventions. 
Another type of weak-supervision is 
temporal structure%
~\citep{ahuja2021properties}, possibly combined with nonstationarity~\citep{yao2021learning,yao2022temporally}, interventions on known targets~\citep{lippe2022citris,lippe2022icitris}, or observed actions inducing sparse mechanism shifts~\citep{scholkopf2021toward,lachapelle2022disentanglement,lachapelle2022partial}.
Other works use more explicit supervision in the form of annotations of the ground truth causal variables or a known causal graph~\citep{shen2022weakly,yang2021causalvae,Liang2023cca}.

\looseness-1 
A different line of work instead approaches causal representation learning from the perspective of causal discovery in the presence of latent variables~\citep{spirtes2001causation}. 
Doing so from \textit{purely observational i.i.d.\ data} requires additional constraints on the generative process, such as restrictions on the graph structure or particular parametric and distributional assumptions, and typically leverages the identifiability of linear ICA~\citep{comon1994independent,lewicki2000learning,eriksson2004identifiability}. 
For \textit{linear, non-Gaussian} models, \citet{silva2006learning} show that the causal DAG can be recovered up to Markov equivalence if all observed variables are ``pure'' in that they have a unique latent causal parent. 
\citet{cai2019triad} and \citet{xie2020generalized,xie2022identification} extend this result to identify the full graph given two pure observed children per latent, and \citet{adams2021identification} provide sufficient and necessary graphical conditions for full identification.
For \textit{discrete} causal variables, \citet{kivva2021learning} introduce a similar ``no twins'' condition to reduce the task of learning the number and cardinality of latents and the Markov-equivalence class of the graph to mixture identifiability.

\looseness-1 Other lines of work have investigated the relationship between causal models at different levels of coarse-graining or abstraction~\citep{anand2022effect,eberhardt2016green,chalupka2015visual,chalupka2016multi,chalupka2017causal,rubenstein2017causal,beckers2019abstracting,beckers2020approximate,zennaro2022abstraction,zennaro2023jointly},
or learning invariant representations in a supervised setting~\citep{muandet2013domain,arjovsky2019invariant,krueger2021out,wald2021calibration,lu2021invariant,ahuja2021towards,wang2021desiderata,eastwood2022probable,eastwood2023spuriosity}, often for domain generalization.

\subsection{Comparison of Related Identifiability Results}
To complement the presentation of related multienvironment CRL works in~\cref{sec:related_work}, we provide a structured overview of and comparison with existing identifiability results for causal representation learning in~\cref{tab:related_work}. 
The table categorizes work along different dimensions. 
First, we make a distinction based on the type of data (observational, interventional, or  counterfactual) results rely on (colour coded in green, yellow, and red, respectively).
These are also referred to as different rungs in the ``ladder of causation''~\citep{pearl2018book} or layers in the Pearl Causal Hierarchy~\citep{bareinboim2022pch}.
Second, we categorize work depending on the assumptions placed on the latent causal model and the mixing function.
\looseness-1 As can be seen, works relying solely on observational data often require restrictive graphical assumptions on the mixing function.
On the other hand, access to much more informative counterfactual data has allowed identification even for nonparametric causal models and mixing functions. 
Our work can be viewed as a step towards addressing the lack of nonparametric identifiability results in the interventional regime.

We emphasize that~\cref{tab:related_work} is not exhaustive: certain relevant works did not easily fit into our categorization or the causal representation learning framework adopted in the present work.
For example, not listed are works that leverage temporal structure~\citep{ahuja2021properties,lippe2022citris,lippe2022icitris,lachapelle2022disentanglement,lachapelle2022partial}, rely on heterogenous data and distribution shifts which are not directly expressed in terms of or linked to interventions~\citep{yao2021learning,yao2022temporally,liu2022identifying}, allow for edges from observed to latent variables~\citep{adams2021identification},
or require more direct supervision~\citep{shen2022weakly,yang2021causalvae}.

\begin{table}[bh]
\newcommand{\obscol}{green!6}
\newcommand{\ivcol}{yellow!6}
\newcommand{\cfcol}{red!6}
\caption{\small \looseness-1  \textbf{Comparison of Existing Identifiability Results for Causal Reresentation Learning.} 
All of the listed works assume invertibility (or injectivity) of the mixing function, as well as causal sufficiency (Markovianity) for the causal latent variables.
Most or all of the listed results require additional technical assumptions, and may provide additional results, which we omit for sake of readability; see the references for more details.}
\vspace{1em}
\label{tab:related_work}
\centering
\renewcommand{\arraystretch}{1.5}%
\resizebox{\textwidth}{!}{
\begin{tabularx}{19cm}{m{2.6cm} m{2cm} m{3.25cm} m{4cm} m{5cm}}
     \toprule
     \textbf{Work} & \textbf{Layer} & \textbf{Causal Model} & \textbf{Mixing Function} & \textbf{Main Identifiability Result} 
     \\     \midrule 
     \rowcolor{\obscol}
     \citet{cai2019triad}, \citet{xie2020generalized,xie2022identification} & observational & linear, non-Gaussian & linear with non-Gaussian noise s.t.\ each $V_i$ has 2 pure (obs.\ or unobs.) children & number of latents + $G$
     \\\midrule
     \rowcolor{\obscol}
     \citet{kivva2021learning} & observational & discrete, nonparametric & indentifiable mixture model s.t.\ obs.\ children of $V_i$ $\not\subseteq$ obs.\ children of $V_j$ & number, cardinality \& dist.\ of discrete latents + $G$ up to Markov equivalence 
     \\ \midrule
     \rowcolor{\obscol}
     \citet[][Thm.~4]{ahuja2022interventional} & observational & nonlinear w.\ independent support~\citep{wang2021desiderata,roth2023disentanglement} & finite-degree polynomial &  $\Vb$ up to permutation, shift, \& linear scaling
     \\ \midrule
     \rowcolor{\ivcol}
     \citet[][Thms.~1 \& 2]{squires2023linear} & interventional & linear & linear & $G$ and $\Vb$ up to partial-order preserving permutations from obs.\ dist.\ \& all single-node \textit{perfect} interventions
     \\\midrule
      \rowcolor{\ivcol}
     \citet[][Thm.~1]{squires2023linear} & interventional & linear & linear &  $G$ up to transitive closure from obs.\ dist.\ \& all single-node \textit{imperfect} interventions
     \\\midrule
      \rowcolor{\ivcol}
     \citet[][Thm.~16]{varici2023score} & interventional & nonparametric & linear & $G$ and $\Vb$ up to partial-order preserving permutations from obs.\ dist.\ \& all single-node \textit{perfect} interventions  
     \\\midrule
       \rowcolor{\ivcol}
    \citet[][Thm.~2]{ahuja2022interventional} & interventional & nonparametric & finite-degree polynomial &  $\Vb$ up to permutation, shift, and linear scaling from all single-node \textit{perfect} \textit{hard} interventions
      \\\midrule
      \rowcolor{\ivcol}
      \citet{buchholz2023learning} & interventional & linear Gaussian & nonparametric & $G$ and $\Vb$ up to permutation from obs.\ dist.\ \& all single-node \textit{perfect} interventions
      \\\midrule
     \rowcolor{\ivcol}
   \textbf{This Work}~(\cref{thm:bivariate}) & interventional & nonparametric & nonparametric &  for $n=2$: $G$ and $\Vb$ up to $\sim_\textsc{crl}$ from all single-node \textit{perfect} interventions, subject to genericity~\eqref{eq:genericity_condition}
       \\\midrule
     \rowcolor{\ivcol}
     \textbf{This Work}~(\cref{thm:general}) & interventional & nonparametric & nonparametric &  $G$ and $\Vb$ up to $\sim_\textsc{crl}$ from two distinct, paired single-node \textit{perfect} interventions  per node
      \\ \midrule
     \rowcolor{\cfcol}
     \citet{von2021self} & counterfactual & nonparametric & nonparametric & block of non-descendants $\Vb_{\nd(\Ical)}$ up to invertible function from fat-hand \textit{imperfect} interventions on $\Vb_\Ical$
     \\ \midrule
      \rowcolor{\cfcol}
     \citet{brehmer2022weakly} & counterfactual & nonparametric & nonparametric & $G$ and $\Vb$ up to $\sim_\textsc{crl}$ from all single-node \textit{perfect} interventions
      \\     \bottomrule
\end{tabularx}
}
\end{table}

\clearpage
\section{Proof of Minimality of the CRL Equivalence Class~\texorpdfstring{$\sim_\textsc{crl}$}{~CRL}}
\label{app:equivalence}
First, let us recall the main statements from~\cref{subsec:learning_target}.

\defcrl*

\minimality*

We now restate this result more formally.

\begin{proposition}[Minimality of $\sim_\textsc{crl}$]
Let $(h,G')\sim_\textsc{crl}(f^{-1},G)$ 
with $\pi$ denoting the graph isomorphism mapping $G$ to $G'$ (i.e., a permutation that preserves the partial topological order of $G$). 
Let $\Zb=h(\Xb)$ be the inferred representation with distribution $Q_\Zb=h_*(P_\Xb)$ Markov w.r.t.\ $G'$ and associated density $q$.
Let $\Ical^e\subseteq[n]$ denote a set of intervention targets, and consider an intervention that changes $p_i(v_i~|~\vb_{\pa(i)})$ to some intervened mechanism $\tilde p_i(v_i~|~\vb_{\pa(i)})$ for all $i\in \Ical^e$, giving rise to the interventional distributions $P^e_\Vb$ and $P^e_\Xb=f_*(P^e_\Vb)$.
Then there exist appropriately chosen $\tilde q_{\pi(i)}(z_{\pi(i)}~|~\zb_{\pa(\pi(i),G')})$ for $i\in\Ical^e$ such that the resulting interventional distribution $Q^e_\Zb$ gives rise to the same observed distributions, that is, $P^e_\Xb=h^{-1}_*(Q^e_\Zb)$.
\end{proposition}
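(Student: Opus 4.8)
The plan is to exhibit the intervened mechanisms $\tilde q_{\pi(i)}$ explicitly by ``transporting'' the ground-truth intervened mechanisms $\tilde p_i$ through the element-wise reparametrisation $\phi$ and the permutation $\pi$, and then verify that the resulting push-forward through $h^{-1}$ coincides with $P^e_\Xb$. Since $(h,G')\sim_\textsc{crl}(f^{-1},G)$, by~\cref{def:identifiability} there is an element-wise diffeomorphism $\phi=(\phi_1,\dots,\phi_n)$ and a permutation $\pi$ (a graph isomorphism $G\to G'$) with $h=\Pb_{\pi^{-1}}\circ\phi\circ f^{-1}$, equivalently $h^{-1}=f\circ\phi^{-1}\circ\Pb_{\pi}$. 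Thus $\Zb=\Pb_{\pi^{-1}}(\phi(\Vb))$, i.e.\ $Z_{\pi(i)}=\phi_i(V_i)$. The first step is to record how the observational density $q$ factorises: because $Q_\Zb$ is obtained from $P_\Vb$ by the element-wise diffeomorphism $\phi$ followed by the coordinate permutation $\pi$, and $\phi$ acts componentwise, the change-of-variables formula gives, for each $i$,
\begin{equation*}
q_{\pi(i)}\big(z_{\pi(i)}\mid \zb_{\pa(\pi(i);G')}\big)=p_i\big(\phi_i^{-1}(z_{\pi(i)})\mid \phi^{-1}(\zb)_{\pa(i;G)}\big)\,\big|(\phi_i^{-1})'(z_{\pi(i)})\big|,
\end{equation*}
using that $\pi$ maps $\pa(i;G)$ onto $\pa(\pi(i);G')$ since it is a graph isomorphism.

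The second step is to define the candidate intervened mechanisms by the same transport rule applied to $\tilde p_i$ instead of $p_i$: for $i\in\Ical^e$ set
\begin{equation*}
\tilde q_{\pi(i)}\big(z_{\pi(i)}\mid \zb_{\pa(\pi(i);G')}\big):=\tilde p_i\big(\phi_i^{-1}(z_{\pi(i)})\mid \phi^{-1}(\zb)_{\pa(i;G)}\big)\,\big|(\phi_i^{-1})'(z_{\pi(i)})\big|,
\end{equation*}
and leave all mechanisms with index not in $\pi(\Ical^e)$ unchanged. One should check this is a valid conditional density (it integrates to one in $z_{\pi(i)}$ by the one-dimensional change of variables, since $\phi_i$ is a diffeomorphism of $\RR$), and note $|\pi(\Ical^e)|=|\Ical^e|$ so the intervention is equally sparse. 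The third step is to assemble the post-intervention joint $q^e(\zb)=\prod_{k\in\pi(\Ical^e)}\tilde q_k(z_k\mid\zb_{\pa(k;G')})\prod_{k\notin\pi(\Ical^e)}q_k(z_k\mid\zb_{\pa(k;G')})$ and observe, by substituting the two displayed identities factor by factor, that $q^e(\zb)=p^e(\phi^{-1}(\Pb_\pi^{-1}\zb\text{-reordering}))\cdot|\det\Jb|$; more cleanly, that $Q^e_\Zb$ is exactly the push-forward of $P^e_\Vb$ under $\vb\mapsto \Pb_{\pi^{-1}}(\phi(\vb))$. This is just the statement that transporting each mechanism through a fixed diffeomorphism commutes with intervening on (the image of) a subset of mechanisms — the mixing/reparametrisation is shared and unaffected, only the factors in the causal factorisation are swapped.

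The final step chains the push-forwards: $h^{-1}_*(Q^e_\Zb)=(f\circ\phi^{-1}\circ\Pb_\pi)_*(Q^e_\Zb)=(f\circ\phi^{-1}\circ\Pb_\pi)_*\big((\Pb_{\pi^{-1}}\circ\phi)_*(P^e_\Vb)\big)=f_*(P^e_\Vb)=P^e_\Xb$, where the penultimate equality uses $\Pb_\pi\circ\Pb_{\pi^{-1}}=\mathrm{id}$ and $\phi^{-1}\circ\phi=\mathrm{id}$, and the last uses~\cref{ass:shared_mechs}. I expect the only genuinely delicate point to be bookkeeping: making sure the permutation $\pi$ correctly identifies $\pa(i;G)$ with $\pa(\pi(i);G')$ in the conditioning sets (so that the transported factors are again the conditionals of a distribution Markov w.r.t.\ $G'$), and that the element-wise structure of $\phi$ makes the Jacobian determinant split into the per-coordinate factors that match the change of variables in each one-dimensional mechanism. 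Once the change-of-variables bookkeeping is set up, the chaining of push-forwards is immediate.
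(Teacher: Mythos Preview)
Your proposal is correct and follows essentially the same approach as the paper: define $\tilde q_{\pi(i)}$ by transporting $\tilde p_i$ through the element-wise diffeomorphism $\phi_i^{-1}$ (the paper writes $\psi_i=\phi_i^{-1}$), use the graph-isomorphism property of $\pi$ to match parent sets, and then verify via the diagonal Jacobian that the resulting $q^e$ is exactly the push-forward of $p^e$ under $\Pb_{\pi^{-1}}\circ\phi$. Your final push-forward chain $h^{-1}_*(Q^e_\Zb)=f_*(P^e_\Vb)=P^e_\Xb$ makes explicit what the paper leaves implicit, but otherwise the arguments coincide.
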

\begin{proof}
Since $(h,G')\sim_\textsc{crl}(f^{-1},G)$, by~\cref{def:identifiability} we have
\begin{align}
\label{eq:Z_in_CRL}
    \Zb=\Pb_{\pi^{-1}}\circ \phi (\Vb)
\end{align}
\looseness-1 for some element-wise diffeomorphism $\phi$ with inverse $\psi=\phi^{-1}$. Then~\eqref{eq:Z_in_CRL} implies that for all $i\in[n]$
\begin{align}
\label{eq:element_wise_change_of_var}
    V_i=\psi_i(Z_{\pi(i)})
\end{align}

According to~\eqref{eq:element_wise_change_of_var}, each conditional in the Markov factorization of $Q_\Zb$ is given in terms of $p$ by
\begin{align}
    \label{eq:transformed_mechanisms}
    q_{\pi(i)}\left(z_{\pi(i)}~|~\zb_{\pa(\pi(i);G')}\right)
    =p_i\left(\psi_i\left(z_{\pi(i)}\right)~|~\psi_{\pa(i)}\left(\zb_{\pa(\pi(i);G')}\right)\right)
    \abs{\frac{\d \psi_i}{\d z_{\pi(i)}}\left(z_{\pi(i)}\right)}
\end{align}
where we have used the change of variables in~\eqref{eq:element_wise_change_of_var} and the fact that $\pi(\pa(i))=\pa(\pi(i);G')$ since $\pi:G\mapsto G'$ is a graph isomorphism.

\looseness-1 Consider an intervention that changes $p_i(v_i~|~\vb_{\pa(i)})$ to some intervened mechanism $\tilde p_i(v_i~|~\vb_{\pa(i)})$ for all $i\in \Ical^e$.
Denote the corresponding intervened joint distribution by $P^e_\Vb$ with joint density $p^e$ given by
\begin{equation}
\label{eq:soft_intervention}
    p^e(\vb)= 
    \prod_{i\in \Ical^e} \tilde p_i\left(v_i~|~\vb_{\pa(i)}\right)
    \,
    \prod_{j\in[n]\setminus \Ical^e}p_j\left(v_j~|~\vb_{\pa(j)}\right)\,.
\end{equation}

\looseness-1 Denote by $Q^e_\Zb=(\Pb_{\pi^{-1}}\circ \phi)_*(P^e_\Vb)$ the corresponding distribution over $\Zb$ with joint density given by $q^e$
\begin{align}
    q^e(\zb)&=p^e(\psi\circ\Pb_{\pi}(\zb))\abs{\det\Jb_{\psi\circ\Pb_{\pi}}(\zb)}
    \\
    &=
     \prod_{i\in \Ical^e} \tilde p_i\left(\psi_i\left(z_{\pi(i)}\right)~|~\psi_{\pa(i)}\left(\zb_{\pa(\pi(i);G')}\right)\right)
    \abs{\frac{\d \psi_i}{\d z_{\pi(i)}}\left(z_{\pi(i)}\right)}
    \,
    \prod_{j\in[n]\setminus \Ical^e}q_{\pi(j)}\left(z_{\pi(j)}~|~\zb_{\pa(\pi(j);G')}\right)\,,
\end{align}
where we have used~\eqref{eq:transformed_mechanisms}, \eqref{eq:soft_intervention}, and the fact that $\Jb_\psi$ is diagonal.

By defining 
\begin{align}
    \label{eq:induced_intervened_mechanism_q}
    \tilde q_{\pi(i)}\left(z_{\pi(i)}~|~\zb_{\pa(\pi(i);G')}\right)
    :=
    \tilde p_i\left(\psi_i\left(z_{\pi(i)}\right)~|~\psi_{\pa(i)}\left(\zb_{\pa(\pi(i);G')}\right)\right)
    \abs{\frac{\d \psi_i}{\d z_{\pi(i)}}\left(z_{\pi(i)}\right)}
\end{align}
we finally arrive at 
\begin{align}
    q^e(\zb)=
     \prod_{i\in \Ical^e} \tilde q_{\pi(i)}\left(z_{\pi(i)}~|~\zb_{\pa(\pi(i);G')}\right)
    \,
    \prod_{j\in[n]\setminus \Ical^e}q_{\pi(j)}\left(z_{\pi(j)}~|~\zb_{\pa(\pi(j);G')}\right)\,.
\end{align}

This shows that any intervention on $\{V_i\}_{i\in\Ical^e}\subseteq \Vb$ which replaces
\begin{equation}
    \left\{p_i(v_i~|~\vb_{\pa(i)}) \right\}_{i\in\Ical^e} \mapsto \left\{\tilde p_i(v_i~|~\vb_{\pa(i)})\right\}_{i\in\Ical^e}\,,
\end{equation}
can equivalently be captured by an intervention on $\{Z_{\pi(i)}\}_{i\in\Ical^e}\subseteq\Zb$ which replaces
\begin{equation}
    \left\{q_{\pi(i)}\left(z_{\pi(i)}~|~\zb_{\pa(\pi(i);G')}\right)\right\}_{i\in\Ical^e} \mapsto \left\{\tilde q_{\pi(i)}\left(z_{\pi(i)}~|~\zb_{\pa(\pi(i);G')}\right)\right\}_{i\in\Ical^e}\,.
\end{equation}
with $\tilde q_i$ defined according to~    \eqref{eq:induced_intervened_mechanism_q}.
\end{proof}
\section{Identifiability Proofs}

\subsection{Auxiliary Lemmata}
\label{app:lemmas}
\begin{lemma}[Lemma~2 of~\citet{brehmer2022weakly}]
\label{lemma:brehmer}
\looseness-1 Let $A = C = \RR$ and $B = \RR^n$. Let $f: A\times B \to C$ be differentiable. Define two differentiable
measures $p_A$ on $A$ and $p_C$ on $C$. Let $\forall b \in B$, $f(\cdot, b): A \to C$ be measure-preserving, in the sense that the pushforward of $p_A$ is always $p_C$. Then $f(\cdot, b)$ is constant in $b$ on $B$.
\begin{proof}
    See Appendix A.2 of~\citet{brehmer2022weakly}.
\end{proof}
\end{lemma}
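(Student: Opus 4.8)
The plan is to exploit the rigidity of measure-preserving maps between two \emph{fixed} one-dimensional distributions: such a map is essentially unique, so it cannot genuinely depend on the parameter $b$. Write $g_b:=f(\cdot,b)\colon\RR\to\RR$, so the hypothesis is $(g_b)_*p_A=p_C$ for every $b\in\RR^n$ and the goal is to show $g_b$ does not depend on $b$; since $f$ is jointly differentiable it suffices to fix one coordinate direction and argue for $b\in\RR$. I would first note that positivity of the densities forces each $g_b$ to have image of full $p_C$-measure, hence (being an interval) to be onto $\RR$ up to endpoints. In the form in which the lemma is actually invoked, $g_b$ is the map $z_2\mapsto\psi_2(z_1,z_2)$ with $\psi=f^{-1}\circ h^{-1}$ having triangular Jacobian $\Jb_\psi$ and $\abs{\partial\psi_2/\partial z_2}\neq0$, so $g_b$ is strictly monotone, i.e.\ a diffeomorphism of $\RR$; then the classical fact that the unique monotone-increasing measure-preserving map $p_A\to p_C$ is the quantile coupling $F_C^{-1}\circ F_A$---manifestly independent of $b$---already closes the argument (equivalently, $g_{b_2}\circ g_{b_1}^{-1}$ is a monotone-increasing measure-preserving self-map of $p_C$ and hence the identity).

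To make this rigorous without presupposing monotonicity (covering the statement as written, with only differentiability of $f$), the robust route is to differentiate the measure-preservation identity in $b$. For every $\varphi\in C_c^\infty(\RR)$ we have $\int\varphi(g_b(a))\,p_A(a)\,\mathrm{d}a=\int\varphi(c)\,p_C(c)\,\mathrm{d}c$, whose right side is independent of $b$; differentiating under the integral (legitimate by compact support and continuity of $g_b$ and $\partial_b g_b$) gives $\int\varphi'(g_b(a))\,\partial_b g_b(a)\,p_A(a)\,\mathrm{d}a=0$ for all $\varphi$. Reading the left side as $\int\varphi'\,\mathrm{d}\mu_b$ for the finite signed measure $\mu_b:=(g_b)_*\bigl(\partial_b g_b\cdot p_A\,\mathrm{d}a\bigr)$, this says the distributional derivative of $\mu_b$ vanishes, so $\mu_b$ is a constant multiple of Lebesgue measure, and finiteness forces that constant to be $0$. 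Localizing $(g_b)_*\bigl(\partial_b g_b\cdot p_A\bigr)=0$ to a shrinking interval around a regular value $c$ then yields, for almost every $c$, $\sum_{a\in g_b^{-1}(c)}\partial_b g_b(a)\,p_A(a)/\abs{g_b'(a)}=0$. When $g_b$ is injective the sum is a single term with $p_A(a)>0$ and $g_b'(a)\neq0$, so $\partial_b g_b\equiv0$ on a dense set and hence everywhere, and integrating in $b$ completes the proof.

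The step I expect to be the real obstacle is the genuinely non-injective (``folding'') case permitted by the bare hypotheses: the displayed sum then has several branches and one must exclude a conspiracy in which the branches move with $b$ while their weighted contributions cancel. I would address this by using the joint differentiability of $f$ to track the critical set and the preimage branches $a_1(c,b)<\cdots<a_k(c,b)$ continuously, differentiating along each branch, and then propagating $\partial_b g_b\equiv0$ across all of $B=\RR^n$ by a connectedness argument, with surjectivity and full support preventing the branch structure from degenerating. In every application relevant to the paper this subtlety disappears, because $\psi$ is a diffeomorphism and the relevant $g_b$ is strictly monotone, so one may just invoke the monotone-case argument from the first paragraph.
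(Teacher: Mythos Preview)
The paper does not actually prove this lemma; its ``proof'' is a one-line citation to Appendix~A.2 of Brehmer et al.\ (2022). So there is no in-paper argument to compare your proposal against.

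Your monotone-case argument is correct and is precisely what the paper needs: every invocation of the lemma (e.g., from the equation $\tilde q_2(z_2)=\tilde p_2(\psi_2(z_1,z_2))\,|\partial\psi_2/\partial z_2|$) involves a map $z_2\mapsto\psi_2(z_1,z_2)$ that comes from a triangular diffeomorphism, hence has nonvanishing derivative. The quantile-coupling argument $g_b=F_C^{-1}\circ F_A$, together with connectedness of $B$ to force a common orientation across all $b$, settles it cleanly. This is essentially also how the original proof in Brehmer et al.\ proceeds.

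Your attempt at the general (non-monotone) case has more loose ends than just the ``folding'' issue you flag. Two earlier steps are not justified under the bare hypotheses: (i) differentiating $\int\varphi(g_b(a))\,p_A(a)\,\mathrm{d}a$ under the integral requires an integrable dominating function for $\varphi'(g_b(a))\,\partial_b g_b(a)\,p_A(a)$ locally in $b$, which compact support of $\varphi$ alone does not give without control on the preimages $g_b^{-1}(\mathrm{supp}\,\varphi')$; and (ii) the assertion that $\mu_b=(g_b)_*(\partial_b g_b\cdot p_A)$ is a \emph{finite} signed measure is unsupported---mere differentiability of $f$ gives no integrability of $\partial_b g_b\cdot p_A$---so ``finiteness forces the constant to be $0$'' is unavailable. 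Both gaps close under the stronger hypotheses actually in force in the paper ($C^1$ maps, full-support densities, $\psi$ a diffeomorphism), but then you are back in the monotone case anyway and the quantile argument is both simpler and complete.
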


\begin{lemma}[Preservation of conditional independence under invertible transformation.]
\label{lemma:preserve_cond_ind}
Let $X$ and $Y$ be continuous real-valued random variables, and let $\Zb$ be a continuous random vector taking values in $\RR^n$. Suppose that $(X,Y,\Zb)$ have a joint density w.r.t.\ the Lebesgue measure.
Let $f:\RR\to\RR$, $g:\RR\to\RR$, and $h:\RR^n\to\RR^n$ be diffeomorphisms. Then $X\independent Y ~|~ \Zb \implies f(X)\independent g(Y)~|~h(\Zb)$.
\begin{proof}
    Denote by $p(x,y,\zb)$ the density of $(X,Y,\Zb)$. Then $X\independent Y ~|~ \Zb$ implies that for all $(x,y,\zb)$, $p$ can be factorized as follows:
    \begin{align}
    \label{eq:cond_ind_factorisation}
        p(x,y,\zb)=p_\zb(\zb)p_x(x~|~\zb)p_y(y~|~\zb)\,. 
    \end{align}
Let $(A, B, \Cb)=(f(X),g(Y),h(\Zb))$, and write $\tilde f=f^{-1}$, $\tilde g=g^{-1}$, and $\tilde h=h^{-1}$.

Then $(A,B,\Cb)$ also has a density $q(a,b,\cbb)$, which for all $(a,b,\cbb)$ is given by the change of variable formula:
\begin{align}
\label{eq:block_diag}
    q(a,b,\cbb)&=p\left(\tilde f(a), \tilde g(b), \tilde h(\cbb)\right)\abs{\frac{\d \tilde f}{\d a}(a)\frac{\d \tilde g}{\d b}(b) \det \Jb_{\tilde h}(\cbb)} 
    \\
    &= p_\zb\left(\tilde h(\cbb)\right)\abs{\det \Jb_{\tilde h}(\cbb)} 
    \, p_x\left(\tilde f(a)~|~\tilde h(\cbb)\right)\abs{\frac{\d \tilde f}{\d a}(a)}
    \, p_y\left(\tilde g(b)~|~\tilde h(\cbb)\right)\abs{\frac{\d \tilde g}{\d b}(b)}
    \label{eq:p_factor}
\end{align}
where in~\eqref{eq:block_diag} we have used the fact that $(X, Y, \Zb)\mapsto(f(X),g(Y),h(\Zb))$ has block-diagonal Jacobian, and in~\eqref{eq:p_factor} that $p$ factorises as in~\eqref{eq:cond_ind_factorisation}. Next, define
\begin{align}
    q_{\cbb}(\cbb)&:=p_\zb\left(\tilde h(\cbb)\right)\abs{\det \Jb_{\tilde h}(\cbb)}\,,
    \\
    q_{a}(a~|~\cbb)&:=p_x\left(\tilde f(a)~|~\tilde h(\cbb)\right)\abs{\frac{\d \tilde f}{\d a}(a)}\,,
    \\
    q_{b}(b~|~\cbb)&:= p_y\left(\tilde g(b)~|~\tilde h(\cbb)\right)\abs{\frac{\d \tilde g}{\d b}(b)}\,.
\end{align}
Since $p_\zb$, $p_x$, and $p_y$ are valid densities (non-negative and integrating to one), so are $q_\cbb$, $q_a$, and $q_b$.
Substitution into~\eqref{eq:p_factor} yields for all $(a,b,\cbb)$,
\begin{align}
    q(a,b,\cbb)=q_{\cbb}(\cbb)q_{a}(a~|~\cbb) q_{b}(b~|~\cbb)\,,
\end{align}
which implies that $A\independent B~|~\Cb$, concluding the proof.
\end{proof}
\end{lemma}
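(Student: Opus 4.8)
\textbf{Proof proposal for Lemma~\ref{lemma:preserve_cond_ind}.}

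The plan is to verify the conditional independence $f(X)\independent g(Y)\mid h(\Zb)$ by exhibiting an explicit factorization of the transformed joint density into a product of a marginal in $h(\Zb)$ and two conditionals, one depending only on $f(X)$ and $h(\Zb)$, the other only on $g(Y)$ and $h(\Zb)$. This is the standard characterization of conditional independence for densities, so producing such a factorization suffices.

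First I would write down what $X\independent Y\mid \Zb$ gives us: the joint density $p(x,y,\zb)$ factorizes as $p_\zb(\zb)\,p_x(x\mid\zb)\,p_y(y\mid\zb)$. Next I would apply the multivariate change-of-variables formula to the diffeomorphism $(x,y,\zb)\mapsto(f(x),g(y),h(\zb))$. The key structural observation is that this map is a \emph{product} diffeomorphism, so its Jacobian is block-diagonal with blocks $\frac{\d f}{\d x}$, $\frac{\d g}{\d y}$, and $\Jb_h$; hence the Jacobian determinant factorizes as $\bigl|\frac{\d \tilde f}{\d a}(a)\bigr|\,\bigl|\frac{\d \tilde g}{\d b}(b)\bigr|\,\bigl|\det\Jb_{\tilde h}(\cbb)\bigr|$ in terms of the inverses $\tilde f=f^{-1}$, $\tilde g=g^{-1}$, $\tilde h=h^{-1}$. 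Combining this with the factorization of $p$, the transformed density $q(a,b,\cbb)$ splits cleanly into a product of three factors, each grouping one inverse-Jacobian term with the corresponding factor of $p$ evaluated at the preimage.

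I would then \emph{name} the three factors: $q_\cbb(\cbb):=p_\zb(\tilde h(\cbb))\,|\det\Jb_{\tilde h}(\cbb)|$, $q_a(a\mid\cbb):=p_x(\tilde f(a)\mid\tilde h(\cbb))\,|\frac{\d\tilde f}{\d a}(a)|$, and $q_b(b\mid\cbb):=p_y(\tilde g(b)\mid\tilde h(\cbb))\,|\frac{\d\tilde g}{\d b}(b)|$. A small check confirms these are bona fide densities: nonnegativity is immediate, and integrating $q_a(\cdot\mid\cbb)$ over $a$ (resp.\ $q_b$ over $b$, $q_\cbb$ over $\cbb$) reduces by change of variables back to the corresponding integral of $p_x$, $p_y$, or $p_\zb$, which is one; likewise $q_\cbb$ is the genuine marginal of $q$ in $\Cb$. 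Since $q(a,b,\cbb)=q_\cbb(\cbb)\,q_a(a\mid\cbb)\,q_b(b\mid\cbb)$, we conclude $A\independent B\mid\Cb$, i.e.\ $f(X)\independent g(Y)\mid h(\Zb)$.

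There is no real obstacle here; the one point requiring care is the block-diagonality of the Jacobian, which is what makes the determinant factorize and thereby keeps the $a$-dependent and $b$-dependent pieces separated. Everything else is bookkeeping with the change-of-variables formula. (One could alternatively phrase the argument measure-theoretically without densities, but since the paper assumes densities w.r.t.\ Lebesgue measure throughout, the density-level computation is the most direct route.)
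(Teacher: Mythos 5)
Your proposal is correct and follows essentially the same route as the paper's proof: factorize $p$ using the conditional independence, apply the change-of-variables formula exploiting the block-diagonal Jacobian of the product diffeomorphism, define the three factors $q_\cbb$, $q_a$, $q_b$, verify they are valid densities, and conclude from the resulting factorization. No gaps.
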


\subsection{Proof of\texorpdfstring{~\cref{thm:bivariate}}{thmbivariate}}
\label{app:proof_bivariate}
\bivariate*

\begin{proof}
From the assumption of a shared mixing $f$ and shared encoder $h$ across all environments, we have that
\begin{equation}
    \Zb=h(\Xb)=h(f(\Vb))=h\circ f(\Vb)\,.
\end{equation}
Let $\psi=f^{-1}\circ h^{-1}:\RR^n\to\RR^n$ such that $$\Vb=\psi(\Zb)\,.$$ 
By~\cref{ass:diffeomorphism}, $f$, $h$, and thus also $h\circ f$  are diffeomorphisms. Hence, $\psi$ is well-defined and also diffeomorphic.

By the change of variable formula,  for all $e$ and all $\zb$ the density $q^e(\zb)$ is given by
\begin{align}
\label{eq:app_change_of_variable}
q^e(\zb)=p^e(\psi(\zb))\abs{\det \Jb_\psi(\zb)}
\end{align}
where $(\Jb_\psi(\zb))_{ij}=\frac{\partial \psi_i}{\partial z_j}(\zb)$ denotes the Jacobian of $\psi$.

We now consider two separate cases, depending on whether the 
intervention targets in $q^{e_i}$ for $e_i\in\{e_1,e_2\}$ match those in $p^{e_i}$ (Case 1) or not (Case 2).

\textbf{Case 1: Aligned Intervention Targets.}
\looseness-1
According to~\cref{ass:shared_mechs} and (A2), \eqref{eq:app_change_of_variable} applied to the known observational environment $e_0$ and the interventional environments $e_1,e_2$ leads to the system of equations:
\begin{align}
\label{eq:app_qe0}
q_1(z_1)q_2(z_2~|~z_{\pa(2;G')})&=
p_1\left(\psi_1(\zb)\right)p_2\left(\psi_2(\zb)~|~\psi_{\pa(2)}(\zb)\right) \abs{\det\Jb_{\psi}(\zb)}
\\
\label{eq:app_qe1}
\tilde q_1(z_1)q_2(z_2~|~z_{\pa(2;G')})&=
\tilde p_1\left(\psi_1(\zb)\right) p_2\left(\psi_2(\zb)~|~\psi_{\pa(2)}(\zb)\right) \abs{\det\Jb_{\psi}(\zb)}
\\
\label{eq:app_qe2}
q_1(z_1)\tilde q_2(z_2)&= 
p_1\left(\psi_1(\zb)\right)\tilde p_2\left(\psi_2(\zb)\right) \abs{\det\Jb_{\psi}(\zb)}
\end{align}
where $z_{\pa(2;G')}$ denotes the parents of $z_2$ in $G'$, and $\pa(2)$ denotes the parents of $V_2$ in $G$.

Note that neither side of the previous equations can be zero due to the full support assumption\footnote{This can also be relaxed to fully supported on a Cartesian product of intervals.} (A1) and $\psi$ being diffeomorphic implying the determinant is non-zero. 

Taking quotients of~\eqref{eq:app_qe1} and~\eqref{eq:app_qe0}, yields
\begin{equation}
    \frac{\tilde q_1}{q_1}(z_1)=\frac{\tilde p_{1}}{p_{1}}(\psi_1(\zb))\,.
\end{equation}
Next, we take the partial derivative w.r.t.\ $z_2$ on both sides and use the chain rule to obtain:
\begin{equation}
\label{eq:app_zero_product}
    0=\left(\frac{\tilde p_{1}}{p_{1}}\right)'\left(\psi_1(\zb)\right)\frac{\partial \psi_1}{\partial z_2}(\zb)\,.
\end{equation}
Now, by assumption (A3), the first term on the RHS of~\eqref{eq:app_zero_product} is non-zero everywhere. Hence,
\begin{equation}
\label{eq:app_triangularJ}
    \forall \zb: \qquad 
    \frac{\partial \psi_1}{\partial z_2}(\zb)=0 \,.
\end{equation}
It follows that $\psi_1$ is, in fact, a scalar function, and  
\begin{equation}
\label{eq:app_reparam_V1}
    V_1=\psi_1(Z_1)\,. 
\end{equation}
Since $\psi$ is a diffeomorphism, $\psi_1$ must also be diffeomorphic. Hence, by the change of variable formula applied to~\eqref{eq:app_reparam_V1}, the marginal density $q_1(z_1)$ is given by
\begin{equation}
\label{eq:app_marginal_density}
    q_1(z_1)=p_1(\psi_1(z_1))\abs{\frac{\partial \psi_1}{\partial z_1}(z_1)}\,.
\end{equation}

Further, since $\Jb_\psi$ is triangular due to~\eqref{eq:app_triangularJ}, its determinant is given by
\begin{equation}
\label{eq:app_det_triangluar}
    \abs{\det\Jb_\psi(\zb)}=\abs{\frac{\partial \psi_1}{\partial z_1}(z_1)\, \frac{\partial \psi_2}{\partial z_2}(z_1,z_2)}\,.
\end{equation}

Substituting~\eqref{eq:app_marginal_density} and ~\eqref{eq:app_det_triangluar} into~\eqref{eq:app_qe2} yields (after cancellation of equal terms):
\begin{align}
\label{eq:app_measure_preserving}
\tilde q_2(z_2)&= \tilde p_2\left(\psi_2(z_1,z_2)\right) \abs{\frac{\partial \psi_2}{\partial z_2}(z_1,z_2)}\,.
\end{align}
The expression in~\eqref{eq:app_measure_preserving} implies that, for all $z_1$, the mapping $\psi_2(z_1,\cdot):\RR\to\RR$ is measure preserving for the differentiable $\tilde q_2$ and $\tilde{p}_2$. By~\cref{lemma:brehmer} (Lemma~2 of~\citet[][\S~A.2]{brehmer2022weakly}),
it then follows that $\psi_2$ must, in fact, be constant in $z_1$, that is
\begin{equation}
    \label{eq:app_psi_diagonal}
    \forall \zb: \qquad 
    \frac{\partial \psi_2}{\partial z_1}(\zb)=0 \,.
\end{equation}

Note that this last step is where the assumption of perfect interventions~(\cref{ass:perfect_interventions}) is leveraged: the conclusion would not hold for arbitrary imperfect interventions for which~\eqref{eq:measure_preserving} would involve $\tilde q_2(z_2~|~z_1)$ and $p_2\left(\psi_2(z_1,z_2)~|~\psi_{1}(z_1)\right)$.

Hence, we have shown that $\psi$ is an element-wise function:
\begin{equation}
\label{eq:app_element_wise_psi}
    \Vb=(V_1,V_2)=\psi(\Zb)=(\psi_1(Z_1), \psi_2(Z_2))\,.
\end{equation}

Finally, since $\psi$ is a diffeomorphism, \eqref{eq:app_element_wise_psi} implies that
\begin{equation}
    V_1 \independent V_2 \iff Z_1\independent Z_2\,.
\end{equation}
It then follows from the faithfulness assumption~(\cref{ass:faithfulness}) that we also must have $G=G'$.

This concludes the proof of Case 1, as we have shown that $ (h,G')\sim_\textsc{crl}(f^{-1},G)$ with $G'=\pi(G)=G$ ($\pi$ being the identity permutation) and $h\circ f=\psi^{-1}=:\phi$ an element-wise diffeomorphism. 

\textbf{Case 2: Misaligned Intervention Targets.}
Writing down the system of equations similar to~\eqref{eq:app_qe0}--\eqref{eq:app_qe2}, but for the case with misaligned intervention targets across $p$ and $q$ yields:
\begin{align}
\label{eq:app_2qe0}
q_1(z_1)q_2(z_2~|~z_{\pa(2;G')})&=
p_1\left(\psi_1(\zb)\right)p_2\left(\psi_2(\zb)~|~\psi_{\pa(2)}(\zb)\right) \abs{\det\Jb_{\psi}(\zb)}
\\
\label{eq:app_2qe1}
\tilde q_1(z_1)q_2(z_2~|~z_{\pa(2;G')})&=
 p_1\left(\psi_1(\zb)\right) \tilde p_2\left(\psi_2(\zb)\right)  \abs{\det\Jb_{\psi}(\zb)}
\\
\label{eq:app_2qe2}
q_1(z_1)\tilde q_2(z_2)&= 
\tilde p_1\left(\psi_1(\zb)\right)p_2\left(\psi_2(\zb)~|~\psi_{\pa(2)}(\zb)\right) \abs{\det\Jb_{\psi}(\zb)} \,.
\end{align}

Taking ratios of $e_1$ and $e_2$ with $e_0$ yields
\begin{align}
    \label{eq:app_misaligned_ratio_1}
    \frac{\tilde q_1}{q_{1}}(z_1)
    &=\frac{\tilde p_2\left(\psi_2(\zb)\right)}{p_{2}\left(\psi_2(\zb)~|~\psi_{\pa(2)}(\zb)\right)}
\\
    \label{eq:app_misaligned_ratio_2}
\frac{\tilde q_2(z_2)}{q_{2}(z_2~|~z_{\pa(2;G')})}
    &=\frac{\tilde p_1}{p_1}\left(\psi_1(\zb)\right)\,.
\end{align}

We separate the remainder of the proof of Case 2 into different subcases depending on $G$ and $G'$: as we will see, we can use a similar reasoning as in Case 1, except for the case where both $G$ and $G'$ are missing no edge.

\textit{Case 2a: $V_1\not\to V_2$ in $G$, that is $\pa(2)=\varnothing$.} Then we can proceed similarly to Case 1. First, we take the partial derivative of~\eqref{eq:app_misaligned_ratio_1} w.r.t.\ $z_2$  to arrive at:
\begin{equation}
\label{eq:app_2zero_product}
    0=\left(\frac{\tilde p_{2}}{p_{2}}\right)'\left(\psi_2(\zb)\right)\frac{\partial \psi_2}{\partial z_2}(\zb)\,.
\end{equation}
Using (A3), this implies that $\psi_2$ does not depend on $Z_2$, that is, $V_2=\psi_2(Z_1)$.

Next, we again write $q(z_1)$ in terms of $p_2(\psi_2(z_1))$ using the univariate change of variable formula, substitute into~\eqref{eq:app_2qe2}, cancel the corresponding terms, and arrive at:
\begin{equation}
\tilde q_2(z_2)= 
\tilde p_1\left(\psi_1(z_1,z_2)\right) \abs{\frac{\partial \psi_1}{\partial z_2}(z_1,z_2)}
\end{equation}
\cref{lemma:brehmer} applied to $\psi_1(z_1,\cdot)$ which preserves $\tilde q_2$ and $\tilde p_1$ for all $z_1$ shows that $\psi_1$ is constant in $Z_1$, that is
\begin{equation}
\label{eq:app_permutation}
    \Vb=(V_1,V_2)=\psi(\Zb)=(\psi_1(Z_2),\psi_2(Z_1))\,.
\end{equation}
Since $V_1\independent V_2$ by the assumption of Case 2a, it follows from the invertible element-wise reparametrisation above that $Z_1 \independent Z_2$ and hence, by faithfulness, $Z_1\not\to Z_2$ in $G'$.

Finally, note that there is no partial order on the empty graph and so $G'=\pi(G)=G$ and $\Zb=\Pb_{\pi^{-1}}\cdot \psi^{-1}(\Vb)$ where $\pi$ is the nontrivial permutation of $\{1,2\}$.

\textit{Case 2b: $V_1\to V_2$ in $G$, that is $\pa(2)=\{1\}$. }
If $G'\neq G$, that is $Z_1\not\to Z_2$ in $G'$, then the same argument as in Case 2a, this time starting by taking the partial derivative of~\eqref{eq:app_misaligned_ratio_2} w.r.t.\ $z_1$, can be used to reach the same conclusion in~\eqref{eq:app_permutation}. However, this contradicts faithfulness since $V_1\not\independent V_2$ in $G$. 

Hence, we must have $G'=G$, and the following two equations must hold for all $\zb$:
\begin{align}
    \frac{\tilde q_1(z_1)}{q_{1}(z_1)}
    &=\frac{\tilde p_2\left(\psi_2(\zb)\right)}{p_{2}\left(\psi_2(\zb)~|~\psi_1(\zb)\right)}
    \label{eq:case2be2}
    \\
    \frac{\tilde q_2(z_2)}{q_{2}(z_2~|~z_1)}
    &=\frac{\tilde p_1\left(\psi_1(\zb)\right)}{p_1\left(\psi_1(\zb)\right)}
    \label{eq:case2be1}
\end{align}

The remainder of the proof consists of exploring the implications of~\eqref{eq:case2be1} and ~\eqref{eq:case2be2}, ultimately resulting in a violation of the genericity condition (A4).

To ease notation, define the following auxiliary functions:
\begin{align}
    a(z_1)&:=\frac{\tilde q_1(z_1)}{q_{1}(z_1)}\,,
    \\
    b(\vb)&:=\frac{\tilde p_2(v_2)}{p_2(v_2~|~v_1)}\,,
    \\
    c(\zb)&:=\frac{\tilde q_2(z_2)}{q_{2|1}(z_2~|~z_1)}\,,
    \\
    d(v_1)&:=\frac{\tilde p_1\left(v_1\right)}{p_1\left(v_1\right)}\,.
\end{align}

With this, \eqref{eq:case2be2} and \eqref{eq:case2be1}   take the following form:
\begin{align}
    \label{eq:KR}
    a(z_1)&=b(\psi(\zb))\,.
    \\
    \label{eq:QS}
    c(\zb)&=d(\psi_1(\zb))\,,
\end{align}

Next, define the following maps:
\begin{align}
    \label{eq:kappa}
    \kappa&: \zb \mapsto \begin{bmatrix}
        a(z_1) 
        \\
        c(\zb)
    \end{bmatrix}
    \\
    \label{eq:rho}
    \rho&:\vb \mapsto
    \begin{bmatrix}
    b(\vb) \\
    d(v_1)
    \end{bmatrix}
\end{align}

Then, \eqref{eq:KR} and \eqref{eq:QS} together imply that
\begin{equation}
\label{eq:identity_kappa_rho_psi}
    \kappa=\rho \circ \psi\,.
\end{equation}

Recalling that by (A1) all densities are continuously differentiable, the Jacobians of $\kappa$ and $\rho$ are given by:
\begin{align}
    \Jb_\kappa(\zb)
    &=
    \begin{bmatrix}
        a'(z_1)
        & 
        0\\
        \frac{\partial c}{\partial z_1}(\zb) &  \frac{\partial c}{\partial z_2}(\zb)
    \end{bmatrix}\,,
    \\
    \Jb_\rho(\vb)
    &=
    \begin{bmatrix}
    \frac{\partial b}{\partial v_1}(\vb) &  \frac{\partial b}{\partial v_2}(\vb)\\
    d'(v_1)
        & 
        0
    \end{bmatrix}\,,
\end{align}
and the corresponding determinants are given by
\begin{align}
    \abs{\det \Jb_\kappa(\zb)}&=\abs{a'(z_1)\frac{\partial c}{\partial z_2}(\zb)}\neq 0
    \\
    \abs{\det \Jb_\rho(\vb)}&=\abs{d'(v_1)\frac{\partial b}{\partial v_2}(\vb)} \neq 0
\end{align}
where the inequalities for all $\zb$ follow since, by assumption (A3), the derivatives of ratios of intervened and original mechanisms are non-vanishing everywhere:
\begin{align}
\label{eq:non_vanishing_derivatives}
    a'(z_1)\neq 0 \neq \frac{\partial c}{\partial z_2}(\zb)
    \qquad \mbox{and} \qquad 
    d'(v_1)\neq 0 \neq \frac{\partial b}{\partial v_2}(\vb)\,,
\end{align}
This implies that the following families of maps are continuously differentiable, monotonic, and invertible, 
\begin{align}
    a&: z_1\mapsto a(z_1)\,,\\
    b_{v_1}&: v_2 \mapsto b(v_1,v_2)\,,\\
    c_{z_1}&: z_2 \mapsto c(z_1,z_2)\,,\\
    d&: v_1 \mapsto d(v_1)\,,
\end{align}
with continuously differentiable inverses
\begin{align}
    a^{-1}&: w_1\mapsto a^{-1}(w_1)\,,\\
    b_{v_1}^{-1}&: w_1 \mapsto b^{-1}_{v_1}(w_1)\,,\\
    c_{z_1}^{-1}&: w_2 \mapsto c^{-1}_{z_1}(w_2)\,,\\
    d^{-1}&: w_2 \mapsto d^{-1}(w_2)\,.
\end{align}

This implies that $\rho$ and $\kappa$ are valid diffeomorphisms onto their image and 
their inverses are given by: 
\begin{align}
\kappa^{-1}&: \wb \mapsto 
    \begin{bmatrix}
        a^{-1}(w_1) 
        \\
        c^{-1}_{a^{-1}(w_1)}(w_2)
    \end{bmatrix}\,,
    \\
    \rho^{-1}&:\wb \mapsto
    \begin{bmatrix}
    d^{-1}(w_2)\\
    b^{-1}_{d^{-1}(w_2)}(w_1)
    \end{bmatrix}\,.
\end{align}

Since $\Vb=\psi(\Zb)$, by~\eqref{eq:identity_kappa_rho_psi} we have 
\begin{equation}
\label{eq:W_def}
    \Wb:=\rho(\Vb)=\rho\circ \psi(\Zb)=\kappa(\Zb)\,.
\end{equation}

Denote the distributions of $\Wb$ by $R_\Wb$ and its density by $r(\wb)$. Since for all $e$, we have 
\begin{equation}
    P^e_\Vb=\psi_*(Q^e_\Zb)
\end{equation}
it follows from~\eqref{eq:W_def} that
\begin{equation}
    R_\Wb^e=\rho_*(P^e_\Vb)=\kappa_*(Q^e_\Zb)\,.
\end{equation}

This provides two different ways of applying the change of variable formula to compute $r(\wb)$.

First, we consider the pushforward of $Q^{e_0}_\Zb$ by $\kappa$:
\begin{align}
    r(\wb)
    &=
    q\left(\kappa^{-1}(\wb)\right) \abs{\det \Jb_{\kappa^{-1}}(\wb)}
    \\
    &=
    q_1\left(a^{-1}(w_1)\right)
    q_2\left(c^{-1}_{a^{-1}(w_1)}\left(w_2\right)~\mid~a^{-1}(w_1)\right)\abs{\frac{\d}{\d w_1}a^{-1}(w_1)\frac{\d}{\d w_2}c^{-1}_{a^{-1}(w_1)}(w_2)}\,
    \label{eq:r_w_q}
\end{align}
By integrating this joint density with respect to $w_2$, we obtain the following expression for the marginal $r_1(w_1)$:
\begin{align}
    r_1(w_1)
    &=
    \abs{\frac{\d}{\d w_1}a^{-1}(w_1)}q_1\left(a^{-1}(w_1)\right)\int 
    q_2\left(c^{-1}_{a^{-1}(w_1)}\left(w_2\right)~\mid~a^{-1}(w_1)\right)\abs{\frac{\d}{\d w_2}c^{-1}_{a^{-1}(w_1)}(w_2)}\d w_2\,.
\end{align}

By the diffeomorphic change of variable $z_2=c^{-1}_{a^{-1}(w_1)}\left(w_2\right)$, \footnote{Note that: $\int q_2(z_2(w_2))\abs{\frac{\d z_2}{\d w_2}}\d w_2=\int q_2(z_2)\d z_2$\,.}
this can be written as
\begin{align}
    r_1(w_1)
    &=
    \abs{\frac{d}{dw_1}a^{-1}(w_1)}q_1\left(a^{-1}(w_1)\right)\int 
    q_2\left(z_2~\mid~a^{-1}(w_1)\right)dz_2\,\\
    &=\abs{\frac{d}{dw_1}a^{-1}(w_1)}q_1\left(a^{-1}(w_1)\right)\label{eq:r1e0q}
\end{align}

Next, we carry out the same calculation for the pushforward of $P^{e_0}_\Vb$ by $\rho$:
\begin{align}
    r(\wb)
    &=
    p\left(\rho^{-1}(\wb)\right) \abs{\det \Jb_{\rho^{-1}}(\wb)}
    \\
    &=p_1\left( d^{-1}(w_2)\right)
    p_2\left(b^{-1}_{d^{-1}(w_2)}(w_1)~\mid~d^{-1}(w_2)\right)
    \abs{\frac{\d }{\d w_2}
    d^{-1}(w_2)\frac{\d }{\d w_1}
    b^{-1}_{d^{-1}(w_2)}(w_1)}\,,
\end{align}
leading to the marginal
\begin{align}
    r_1(w_1)
    &=
    \int p_1\left( d^{-1}(w_2)\right)p_2\left(b^{-1}_{d^{-1}(w_2)}(w_1)~\mid~d^{-1}(w_2)\right)
    \abs{\frac{\d }{\d w_1}
    b^{-1}_{d^{-1}(w_2)}(w_1)}\abs{\frac{\d }{\d w_2}
    d^{-1}(w_2)}\d w_2\\
    &=
    \int p_1(v_1)p_2\left(b^{-1}_{v_1}(w_1)~\mid~v_1\right)
    \abs{\frac{\d }{\d w_1}
    b^{-1}_{v_1}(w_1)}\d v_1\label{eq:r1e0p}\,,
\end{align}
where the second line is obtained by the diffeomorphic change of variable $v_1=d^{-1}(w_2)$. 

Equating the two expressions for $r(w_1)$ in $e_0$ in~\eqref{eq:r1e0p} and~\eqref{eq:r1e0q}, we obtain for all $w_1$:
\begin{equation}
    \abs{\frac{\d }{\d w_1}a^{-1}(w_1)}q_1\left(a^{-1}(w_1)\right)=\int p_1(v_1)p_2\left(b^{-1}_{v_1}(w_1)~\mid~v_1\right)
    \abs{\frac{\d }{\d w_1}
    b^{-1}_{v_1}(w_1)}\d v_1\label{eq:r1e0}\,.
\end{equation}

Applying the same approach to the environment in which $V_1$ is intervened upon changing $p_1$ to $\tilde p_1$ while $Z_2$ is intervened upon leaving $q_1$ invariant,
yields for all $w_1$:
\begin{equation}
    \abs{\frac{\d }{\d w_1}a^{-1}(w_1)}q_1\left(a^{-1}(w_1)\right)=\int \tilde{p}_1(v_1)p_2\left(b^{-1}_{v_1}(w_1)~\mid~v_1\right)
    \abs{\frac{\d }{\d w_1}
    b^{-1}_{v_1}(w_1)}\d v_1\label{eq:r1e1}\,.
\end{equation}

Finally, by equating \eqref{eq:r1e0} and \eqref{eq:r1e1}, we arrive at the following expression which must hold for all $w_1$:
\begin{equation}
    \int p_1(v_1)p_2\left(b^{-1}_{v_1}(w_1)~\mid~v_1\right)
    \abs{\frac{\d }{\d w_1}
    b^{-1}_{v_1}(w_1)}\d v_1=\int \tilde{p}_1(v_1)p_2\left(b^{-1}_{v_1}(w_1)~\mid~v_1\right)
    \abs{\frac{\d }{\d w_1}
    b^{-1}_{v_1}(w_1)}\d v_1\label{eq:r1diff}
\end{equation}
which we can rewrite as
\begin{equation}
    \int \left(\tilde{p}_1(v_1)-p_1(v_1)\right)p_2\left(b^{-1}_{v_1}(w_1)~\mid~v_1\right)
    \abs{\frac{\d }{\d w_1}
    b^{-1}_{v_1}(w_1)}\d v_1=0%
    \,.
\end{equation}

Multiplying by any continuous function $\varphi(w_1)$,
integrating w.r.t.\ $w_1$ and applying the diffeomorphic change of variable $v_2=b^{-1}_{v_1}(w_1)$, this can be expressed as:
\begin{align}
0 &=    \int \varphi(w_1) 
\int \left(\tilde{p}_1(v_1)-p_1(v_1)\right)p_2\left(b^{-1}_{v_1}(w_1)~\mid~v_1\right)
    \abs{\frac{\d }{\d w_1}
    b^{-1}_{v_1}(w_1)}\d v_1
\d w_1\label{eq:r1int}
\\
    &=    \int\int \varphi\left(b_{v_1}(v_2)\right) \left(\tilde{p}_1(v_1)-p_1(v_1)\right)p_2(v_2~\mid~v_1)
    \d v_2 \d v_1\\
 &=    \int\int \varphi\left(\frac{\tilde{p}_2(v_2)}{p_2(v_2\mid v_1)}\right) \left(\tilde{p}_1(v_1)-p_1(v_1)\right)p_2(v_2~\mid~v_1)
    \d v_2 \d v_1
\end{align}
where we have resubstituted the expression for $b_{v_1}(v_2)$ in the last line. 

Equivalently, this can be written as: \textit{for any continuous function} $\varphi$,
\begin{equation}
    \EE_{\vb\sim P^{e_0}_\Vb}\left[\varphi\left(\frac{\tilde p_2(v_2)}{p_2(v_2~|~v_1)}\right)\right]
    =
    \EE_{\vb\sim P^{e_1}_\Vb}\left[\varphi\left(\frac{\tilde p_2(v_2)}{p_2(v_2~|~v_1)}\right)\right] \, .
\end{equation}

However, the genericity condition (A4) precisely rules this out, since the above equality must be violated for at least one $\varphi$, concluding this last case. 

To sum up, all cases either lead to a contradiction, or imply the conclusion that $(f^{-1},G)\sim_\textsc{crl}(h,G')$, concluding the proof.
\end{proof}

\subsection{Proof of\texorpdfstring{~\cref{thm:general}}{thmgeneral}}
\label{app:proof_general}
\general*
\begin{proof}
First, we show that we can extract from the $m\geq n$ available pairs of environments a suitable subset $\Ecal_n$ of exactly $n$ pairs, containing one pair of interventional environments for each node.

Let $\Ecal_n\subseteq \Ecal$ be a subset of $n$ pairs of environments which are assumed to correspond to distinct targets in the model $q$, and suppose for a contradiction that this is not actually the case for the ground truth $p$ (i.e., there are duplicate and missing interventions w.r.t.\ $p$).
Then there must be two pairs of environments $(e_a, e_a'),(e_b, e_b')\in \Ecal_n$, both corresponding to interventions on some $V_i$ in $p$, but which are modelled as interventions on distinct nodes $Z_j$ and $Z_k$ with $j\neq k$ in $q$. 
We show that this implies that $V_i$ must simultaneously be a deterministic function of only $Z_j$ and only $Z_k$.
Similar to the proof of~\cref{thm:bivariate},
we obtain the following equations,
\begin{align}
    \frac{\dbtilde q_{j}}{\tilde q_{j}}\left(z_{j}\right)&=\frac{\dbtilde p_i}{\tilde p_i}\left(\psi_i(\zb)\right)\,,\\
    \frac{\dbtilde q_{k}}{\tilde q_{k}}\left(z_{k}\right)&=\frac{\hat{\hat{p_i}}}{\hat p_i}\left(\psi_i(\zb)\right)\,.
\end{align}
By taking partial derivatives w.r.t.\ $z_l$ and applying assumption (A3'), we find that 
\begin{align}
   \frac{\partial \psi_i}{\partial z_l}=0 \qquad \forall l\neq j\,,\\
   \frac{\partial \psi_i}{\partial z_l}=0 \qquad \forall l\neq k\,.
\end{align}
Since $j\neq k$, this implies that $\partial \psi_i / \partial z_l =0$ for all $l$ which contradicts invertibility of $\psi$.
Thus, by contradiction, we find that $\Ecal_n$ must contain exactly one pair of intervention per node also w.r.t.\ $p$. 
For the remainder of the proof, we only consider $\Ecal_n$.

W.l.o.g., for any $(e_i,e_i')\in\Ecal_n$ we now fix the intervention targets in $p$ to $\Ical^{e_i}=\Ical^{e_i'}=\{i\}$ and let $\pi$ be a permutation of $[n]$ such that $\pi(i)$ denotes the inferred intervention target in $q$ that by (A2') is shared across $(e_i,e_i')$.  (We will show later that not all permutations are admissible, but only ones that preserve the partial order of $G$.)

The first part of the proof is similar to Case 1 in the proof of~\cref{thm:bivariate}.
Consider the densities in environments $e_i$ and $e_i'$, which are related through the change of variable formula by:
\begin{align}
    \tilde q_{\pi(i)}\left(z_{\pi(i)}\right) \prod_{j\in [n]\setminus\{\pi(i)\}} q_j\left(z_j~|~\zb_{\pa(j; G')}\right)
    &=
    \tilde p_i\left(\psi_i(\zb)\right) \prod_{j\in [n]\setminus\{i\}} p_j\left(\psi_j(\zb)~|~\psi_{\pa(j)}(\zb)\right)
    \abs{\det \Jb_\psi(\zb)}\,,
    \\
    \dbtilde q_{\pi(i)}\left(z_{\pi(i)}\right) \prod_{j\in [n]\setminus\{\pi(i)\}} q_j\left(z_j~|~\zb_{\pa(j; G')}\right)
    &=
    \dbtilde p_i\left(\psi_i(\zb)\right) \prod_{j\in [n]\setminus\{i\}} p_j\left(\psi_j(\zb)~|~\psi_{\pa(j)}(\zb)\right)
    \abs{\det \Jb_\psi(\zb)}\,,
\end{align}
where $\Zb_{\pa(j;G')}\subseteq \Zb\setminus \{Z_j\}$ denotes the parents of $Z_j$ in $G'$.

Taking the quotient of the two equations yields
\begin{align}
    \frac{\dbtilde q_{\pi(i)}}{\tilde q_{\pi(i)}}\left(z_{\pi(i)}\right)&=\frac{\dbtilde p_i}{\tilde p_i}\left(\psi_i(\zb)\right)\,.
\end{align}

Next, for any $j\neq \pi(i)$, taking partial derivatives w.r.t.\ $z_j$ on both sides yields
\begin{align}
\label{eq:psi_permutation}
    0&=\left(\frac{\dbtilde p_i}{\tilde p_i}\right)'\left(\psi_i(\zb)\right)\frac{\partial \psi_i}{\partial z_j}(\zb)\,.
\end{align}

By assumption (A3'), the first term on the RHS is non-zero everywhere. Hence, \eqref{eq:psi_permutation} implies
\begin{align}
    \forall j\neq \pi(i), \, \forall \zb: \quad \frac{\partial \psi_i}{\partial z_j}(\zb)=0
\end{align}
from which we can conclude that 
\begin{align}
\label{eq:ViZpii}
    V_i=\psi_i\left(Z_{\pi(i)}\right) 
\end{align}
for all $i\in[n]$. 
That is, $\psi$ is the composition of the permutation $\pi$ with an element-wise reparametrisation.

It remains to show that $\pi$ must, in fact, be a graph isomorphism, which is equivalent to the statement 
\begin{equation}
    V_i\to V_j \quad \mbox{in} \quad G \quad \iff \quad Z_{\pi(i)}\to Z_{\pi(j)}\quad \mbox{in} \quad G'. 
\end{equation}
($\implies$) Suppose for a contradiction that there exist $(i,j)$ such that $V_i\to V_j$ in~$G$, but $Z_{\pi(i)}\not\to Z_{\pi(j)}$ in $G'$. 

The main idea is to demonstrate that the lack of such direct arrow implies a certain conditional independence which, by faithfulness, would contradict the unconditional dependence of $V_i$ and $V_j$.

Consider environment $e_i$ in which there are perfect interventions on $Z_{\pi(i)}$ and $V_i$, which has the effect of removing all incoming arrows to $Z_{\pi(i)}$ and $V_i$ in the respective post-intervention graphs $G'_{\overline Z_{\pi(i)}}$ and  $G_{\overline V_i}$.

As a result of this and the lack of direct arrow by assumption, any d-connecting path between $Z_{\pi(i)}$ and $Z_{\pi(j)}$ must enter the latter via $\Zb_{\pa(\pi(j);G')}$~\citep{Pearl2009}.

It then follows from Markovianity of $q$ w.r.t.\ $G'$ that the following holds in~$Q_\Zb^{e_i}$:
\begin{equation}
\label{eq:app_cond_indep_Z}
Z_{\pi(i)}\independent Z_{\pi(j)}~|~\Zb_{\pa(\pi(j);G')}\,.
\end{equation}

We now consider the corresponding implication for $P^{e_i}_\Vb$.
Define
\begin{align}
\tilde\Vb=\left\{V_k=\psi_k\left(Z_{\pi(k)}\right):Z_{\pi(k)}\in\Zb_{\pa(\pi(j);G')}\right\}\subseteq\Vb\setminus\{V_i,V_j\}\,,
\end{align}
and note that by assumption, $Z_{\pi(i)}\not\in \Zb_{\pa(\pi(j);G')}$ and hence $V_i\not\in\tilde\Vb$.

By applying the corresponding diffeomorphic functions $\psi_i$ from~\eqref{eq:ViZpii} to~\eqref{eq:app_cond_indep_Z}, it follows from~\cref{lemma:preserve_cond_ind} that 
\begin{equation}
 V_i\independent V_j~|~\tilde \Vb   
\end{equation}
in $P^{e_i}_\Vb$. 
However, this violates faithfulness~(\cref{ass:faithfulness}) of $P_\Vb$ to $G$ since $V_i$ and $V_j$ are d-connected in $G_{\overline V_i}$.

Thus, by contradiction, we must have $Z_{\pi(i)}\to Z_{\pi(j)}$  in $G'$.

($\Longleftarrow$) Now, suppose for a contradiction that there exist $(i,j)$ such that $Z_{\pi(i)}\to Z_{\pi(j)}$ in $G'$, but $V_i\not \to V_j$ in~$G$.

By the same argument as before, we find that
\begin{equation}
    V_i\independent V_j ~|~ \Vb_{\pa(j)}
\end{equation}
in $P^{e_i}_\Vb$, and thus by~\cref{lemma:preserve_cond_ind}
\begin{equation}
    Z_{\pi(i)}\independent Z_{\pi(j)}~|~\tilde \Zb
\end{equation}
in $Q_\Zb^{e_i}$ where 
$$\tilde \Zb=\left\{Z_{\pi(k)}:V_k\in\Vb_{\pa(j)}\right\}\subseteq\Zb\setminus \{Z_{\pi(i)}, Z_{\pi(j)}\}\,.$$
However, this contradicts faithfulness of $Q_\Zb$ to $G'$. 
Hence, we must have that $V_i \to V_j$ in~$G$.

This shows that $\pi$ must be a graph isomorphism, thus concluding the proof.
\end{proof}

\subsection{Proof of\texorpdfstring{~\cref{prop:influence}}{Proposition 4.2}}
\label{app:proof_influence}
\influence*

\begin{proof}
First, recall that according to~\cref{def:influence}, 
\begin{equation}
\mathfrak{C}^{P_\Vb}_{i\to j}:=D_\textsc{kl}\big(P_\Vb~\big\|~ P^{i\to j}_\Vb\big),
\end{equation}
where $P^{i\to j}_\Vb$ denotes the interventional distribution obtained by replacing $p_j\left(v_j~|~\vb_{\pa(j)}\right)$ with
\begin{equation}
\quad p^{i\to j}_j\big(v_j~|~\vb_{\pa(j)\setminus \{i\}}\big)
=
\int_{\Vcal_i} p_j\big(v_j~|~\vb_{\pa(j)}\big) p_i(v_i) \d v_i\,.
\end{equation}
Writing out the KL divergence and noting that all terms except the interved mechanism $j$ cancel inside the $\log$, we obtain
\begin{equation}
    \mathfrak{C}^{P_\Vb}_{i\to j}
    =
    \int_{\Vcal} \log \left(\frac{p_j\left(v_j~|~\vb_{\pa(j)}\right)}{\int_{\Vcal_i} p_j\big(v_j~|~\vb_{\pa(j)}\big) p_i(v_i) \d v_i}\right)p(\vb)\d\vb \,.
\end{equation}
and similarly 
\begin{equation}
    \mathfrak{C}^{Q_\Zb}_{\pi(i)\to \pi(j)}
    =
    \int_{\Zcal} \log \left(\frac{q_{\pi(j)}\left(z_{\pi(j)}~|~\zb_{\pa(\pi(j);G')}\right)}{\int_{\Zcal_{\pi(i)}} q_{\pi(j)}\left(z_{\pi(j)}~|~\zb_{\pa(\pi(j);G')}\right) q_{\pi(i)}(z_{\pi(i)}) \d z_{\pi(i)}}\right)q(\zb)\d\zb \,.
\end{equation}

Since $\Zb=\Pb_{\pi^{-1}}\circ \phi(\Vb)$, we have $V_{i}=\psi_i(Z_{\pi(i)})$ for all $i\in[n]$ where $\psi=\phi^{-1}$.

Thus, by the change of variable formula, and using the fact that $\pi(\pa(i))=\pa(\pi(i);G')$ since $\pi:G\mapsto G'$ is a graph isomorphism, we have for all $i\in[n]$:
\begin{align}
    \label{eq:app_transformed_mechanisms}
    q_{\pi(i)}\left(z_{\pi(i)}~|~\zb_{\pa(\pi(i);G')}\right)
    =p_i\left(\psi_i\left(z_{\pi(i)}\right)~|~\psi_{\pa(i)}\left(\zb_{\pa(\pi(i);G')}\right)\right)
    \abs{\frac{\d \psi_i}{\d z_{\pi(i)}}\left(z_{\pi(i)}\right)}\,,
\end{align}
as well as for the marginal density
\begin{align}
    \label{eq:app_transformed_mechanisms_marginal}
    q_{\pi(i)}\left(z_{\pi(i)}\right)
    =p_i\left(\psi_i\left(z_{\pi(i)}\right)\right)
    \abs{\frac{\d \psi_i}{\d z_{\pi(i)}}\left(z_{\pi(i)}\right)}\,,
\end{align}
and 
\begin{align}
    q(\zb)=p(\psi\circ\Pb_{\pi}(\zb))\abs{\det \Jb_\psi(\zb)}\,.
\end{align}

Substitution into the expression for $\mathfrak{C}^{Q_\Zb}_{\pi(i)\to \pi(j)}$ yields:
\begin{align}
    \mathfrak{C}^{Q_\Zb}_{\pi(i)\to \pi(j)}
    &=
    \int_{\Zcal} \log \left(\frac{p_{j}\left(\psi_j(z_{\pi(j)})~|~\psi_{\pa(j)}(\zb_{\pa(\pi(j);G')})\right)}
    {\int_{\Zcal_{\pi(i)}} p_{j}\left(\psi_j(z_{\pi(j)})~|~\psi_{\pa(j)}(\zb_{\pa(\pi(j);G')})\right) p_{i}(\psi_i(z_{\pi(i)}))\abs{\frac{\d \psi_i}{\d z_{\pi(i)}}(z_{\pi(i)})} \d z_{\pi(i)}}\right)
    \\
    & \qquad p(\psi\circ\Pb_{\pi}(\zb))\abs{\det \Jb_\psi(\zb)}\d\zb \,.
    \\
    &=\int_{\Vcal} \log \left(\frac{p_j\left(v_j~|~\vb_{\pa(j)}\right)}{\int_{\Vcal_i} p_j\big(v_j~|~\vb_{\pa(j)}\big) p_i(v_i) \d v_i}\right)p(\vb)\d\vb 
    \\
    &=\mathfrak{C}^{P_\Vb}_{i\to j}\,.
\end{align}
where the second to last line follows by integration by substitution, applied to both integrals.
\end{proof}

\clearpage
\section{Experimental Details and Additional Results}
\label{app:experiments}
In this appemndix, we describe the experiments presented in~\cref{sec:experiments} in more details~(\cref{app:experimental_details}), and present additional results~(\cref{app:additional_results}).
\subsection{Experimental Details for \texorpdfstring{\cref{sec:experiments}}{}}
\label{app:experimental_details}

\paragraph{Synthetic Data Generating Process.}
We consider linear Gaussian latent SCMs of the form
\begin{equation}
\label{eq:app_exp_scm}
V_1 := U_1, \qquad \qquad V_2 := \alpha
V_1 + U_2, \quad 
\end{equation}
\looseness-1 with standard normal $U_1$ and $U_2$.
As a mixing function, we use a three-layer multilayer perceptron (MLP),
\begin{equation}
    f = \sigma \circ \Wb_3 \circ \sigma \circ \Wb_2 \circ \sigma \circ \Wb_1
    \label{eqn:random_mlp}
\end{equation}
where $\Wb_1, \Wb_2, \Wb_3 \in \mathbb{R}^{2 \times 2}$ are invertible weight matrices, and $\sigma$ is an element-wise invertible nonlinear  leaky-tanh activation function used in~\cite{gresele2020relative}:
\begin{equation}
    \sigma(x) = \tanh(x) + 0.1 x \, .
\end{equation}

To compute averages of our results over multiple runs, we construct different ground truth data generating processes as follows. We generate different latent SCMs by drawing
$\alpha$ uniformly from $[-10, -2] \cup [2, 10]$. (We exclude $(-2,2)$ to avoid sampling near unfaithful models.)
We generate the corresponding mixing functions by uniformly sampling each element of the weight matrices, $(\Wb_k)_{ij}\sim U(0,1)$.
(To avoid the sampled weight matrices being too close to singular, we reject and resample if $\abs{\det \Wb_k} < 0.1$.)

\paragraph{Interventional Environments.}
In line with~\cref{thm:bivariate}, for each choice of latent SCM and mixing function,
we generate three environments: one observational environment and one interventional environment for each perfect single-node intervention.
For $i=1,2$, we model a perfect intervention on $V_i$ by removing the influence of the parent variables and changing the exogenous noise by shifting its mean up or down.
Specifically, we replace the corresponding assignment in~\eqref{eq:app_exp_scm} by
\begin{equation}
    V_i := \tilde U_i\, ,
    \quad \text{where} \quad
    \tilde U_i
    \sim \Ncal(m_i, 1)
    \, 
\end{equation}
\looseness-1 where the mean $m_i$ of the shifted Gaussian noise is fixed per environment and sampled uniformly from $\{\pm 2\}$.

We label the observational environment as $e=0$ and the environment arising from intervention on $V_i$ by $e=i$ for $i=1,2$. Samples from $p^e$ are then generated by sampling latents $\vb$ from the respective (un)intervened SCM and then applying the mixing function.

\paragraph{Model Architecture.}
We use normalizing flows \citep{papamakarios2021normalizing} to 
model observations $\xb$ as the result of an invertible, differentiable transformation $g$ of some latent (noise) variable~$\zb$,
\begin{equation}
    \xb = g (\zb)\, .
\end{equation}
We apply a series of $L$ such transformations $g^l:\RR^2 \rightarrow \RR^2$ such that $g= g^L \circ \ldots \circ g^1$ which we refer to as \emph{flow layers}.
We use Neural Spline Flows~\citep{durkan2019neural} for the invertible transformation, with a 3-layer feedforward neural network with hidden dimension $128$ and permutation in each flow layer and $L=12$ layers.
The transformations $g, g^1, \ldots, g^L$ have learnable parameters (the weights and biases of the neural networks), which we omit to simplify notation.

Typically, simple distributions such as  a uniform or isotropic Gaussian are used as base distribution $q(\zb)$ in normalizing flows. Here, we instead choose a base distribution that encodes information about the latent SCM.
Specifically, we model the base mechanism as
\begin{equation}
\label{eq:app_model_obs}
    q_1(z_1) = \mathcal N \left(  {\mu}_1,  {\sigma}_1^2 \right)\,, \qquad q_2(z_2 \mid z_1) = \mathcal N \left( \hat{\alpha} z_1,  {\sigma}_2^2 \right)\, ,\qquad q_2(z_2)=\Ncal\left(\mu_2,\hat\sigma_2^2\right)
\end{equation}
and the intervened mechanism as
\begin{equation}
    \tilde q_1(z_1) = \mathcal N ( {\tilde{\mu}}_1,  {\tilde{\sigma}}_1^2)\,,
    \qquad
    \tilde q_2(z_2) = \mathcal N ( {\tilde{\mu}}_2,  {\tilde{\sigma}}_2^2)\,.
\end{equation}

\paragraph{Candidate Graphs and Intervention Targets.} 
\looseness-1
We train a separate normalizing-flow based model 
for each choice of candidate graph $G'$ and inferred intervention targets. 
For the bivariate case with $n=2$, this gives rise to four models, depending on whether $G'$ matches $G$ or not, and whether the intervention targets are aligned or misaligned w.r.t.\ the ground truth intervention targets.
To model the setting $G'\neq G$ in which $Z_1$ and $Z_2$ are assumed independent, 
we use $q_2(z_2)$ in place of $q_2(z_2~|~z_1)$ in~\eqref{eq:app_model_obs}.
If the intervention targets are aligned, we use $\tilde q_i$ instead of $q_i$ in $e=i$ for $i=1,2$. Else, if they are misaligned, we use $\tilde q_2$ instead of $q_2$ in $e=1$ and $\tilde q_1$ instead of $q_1$ in $e=2$.
By multiplying the respective mechanisms, we thus obtain three environment-specific joint base distributions $q^e(\zb)$ for $e=0,1,2$.

\paragraph{Learning Objective.}
Given multi-environment data,
the parameters ${\mu}_1$, $ {\sigma}_1$, $\hat{\alpha}$, $ {\sigma}_2$, $\mu_2$, $\hat\sigma_2$,  ${\tilde{\mu}}_2$, $ {\tilde{\sigma}}_2$, $ {\tilde{\mu}}_1$ and $ {\tilde{\sigma}}_1$ are jointly learned with the parameters of the invertible transformations $g^l$
by maximising the log-likelihood of the data under our model, which is given by:
\begin{equation}
\sum_{e\in\Ecal}\EE_{\xb\sim p^e(\xb)}\left[ \log p^e_\text{model}(\xb)\right]=
\sum_{e\in\Ecal}\EE_{\xb\sim p^e(\xb)}\left[ \log q^e(h(\xb))+\log \abs{\det \Jb_h(\xb)}\right]
\end{equation}
\looseness-1
where the encoder $h:=g^{-1}$ is the inverse of the normalizing flow which is readily available by construction; and where the expectations are empirical averages over the respective datasets in practice.

\paragraph{Training and Model Selection Details.}
Each environment comprises a total of $200$k data points.
We use the ADAM optimizer~\citep{kingma2014adam} with cosine annealing learning rate scheduling, starting with a learning rate of $5\times 10^{-3}$ and ending with $1\times 10^{-7}$.
We train the model for $200$ epochs with a batch size of $4096$.
We split the dataset into $70\%$ for training, and $15\%$ for validation and held-out test data, each sampled randomly across all environments.
For each drawn data generating process, we train three versions of each model with different random initializations and select the one with the highest validation log likelihood at the end of training for evaluation.

\paragraph{Evaluation Metrics.}
We evaluate the trained models w.r.t.\ \textit{mean correlation coefficient} (MCC) on held-out data and \textit{log-likelihood} on validation data (for model selection).
\begin{itemize}
\item
The MCC measures the extent to which there is a one-to-one correspondence between the ground truth latents $V_i$ and (a permuted version of) the inferred latents $Z_i=h_i(\Xb)$. Its maximum value of one indicates a perfect correlation between the two. MCC is thus a proxy measure for the level of identifiability up to permutation and invertible reparametrisation. We report MCC based on Pearson (linear) correlation, though we found the results based on Spearman (nonlinear monotonic) correlation to be almost identical.
\item 
The log-likelihood%
, on the other hand, measures how well a model explains or fits the data.
Since the ground truth is typically unknown, a reasonable procedure when training multiple models is to select the one that attains the highest likelihood. 
For this reason, we report the difference in log-likelihood between misspecified models (ones assuming a wrong graph or intervention targets) to the correctly specified model. Whenever this difference is larger than zero, the correct model fits the data better and would thus be selected. 
\end{itemize}

\subsection{Additional Results: Learning Nonlinear Latent SCMs from Partial Causal Order
}
\label{app:additional_results}
In this subsection, we present an additional experiment, in which we extend the setting investigated in \cref{sec:experiments} and \cref{app:experimental_details} along the following axes.
\begin{itemize}[leftmargin=*]
    \item We fit generative models over three instead of two variables, corresponding to the setting of~\cref{thm:general}.
    \item The ground-truth SCM is now given by nonlinear mechanisms with non-additive, non-Gaussian noise.
    \item The generative model, including the learnt mechanisms, is now fully nonlinear.
    \item Despite \cref{thm:general} formally requiring two environments per single-node intervention, we only provide one interventional environment per node.
    \item Rather than searching over candidate graphs, we only fix the causal order and fit the reduced form of the SCM~(see~\cref{subsec:data_generating_process}) with a second normalizing flow.
\end{itemize}
Below, we describe these differences in more detail.

\paragraph{Three-Variable Graph.} 
The \textit{unknown} ground truth graph is given by 
\begin{equation}
    V_2 \leftarrow V_1 \rightarrow V_3 \,.
    \label{eqn:gt_dag_nonparam}
\end{equation}
This is consistent with the partial ordering $V_1\preceq V_2 \preceq V_3$, which is assumed for all models a priori w.l.o.g., see~\cref{subsec:learning_target}.
Note that, due to the encoding of causal structure in the nonparametric model explained below, we only iterate over different permutations of the intervention targets and not over latent graph configurations.
Due to the causal order implied by the graph~\eqref{eqn:gt_dag_nonparam}, the permutations $(1, 2, 3)$ (no permutaion) and $(1, 3, 2)$ (permutation of the two effects) are equivalent since the latter also implies the correct causal ordering.

\paragraph{Nonlinear, Non-Gaussian SCM.}
The mechanisms in the ground-truth SCM are now given by 
\begin{equation}
    V_i := \beta f_i^\mathrm{loc}(\mathbf{V}_{\mathrm{pa}(i)}) + f_i^\mathrm{scale}(\mathbf{V}_{\mathrm{pa}(i)}) U_i
\end{equation}
for all $i$, where the location and scale functions $f_i^\mathrm{loc}$, $f_i^\mathrm{scale} : \RR^{|\pa(i)|} \to \RR$ are parameterized by random 3-layer neural networks (sampled as the random mixing function in~\eqref{eqn:random_mlp}) and the noise variables are Gaussian, $U_i \sim \mathcal{N}(0, 1)$.
The factor $\beta$ controls the influence of the parent variables relative to the exogenous noise.
As $\beta$ increases, variables tend to become more dependent, as also the mean shifts as a function of the parent variables.
We set $\beta=10$ for the experiments shown in~\cref{fig:experiments_nonparam}.

\paragraph{Nonparametric Latent SCM.}
We use a second normalizing flow to learn a \textit{reduced form of the latent SCM} via the transformation $g^\mathrm{SCM}: \RR^3 \to \RR^3$ mapping an exogenous noise variable $\epsilonb$ to the latent variable $\zb$,
\begin{equation}
\label{eq:reduced_form_flow}
    \zb = g^\mathrm{SCM}(\epsilonb)\,.
\end{equation}
The distribution of the exogenous noise variable $\epsilonb$ as well as the distribution of the intervened mechanisms $\tilde q_i(z_i)$ for $i=1,2,3$ is fixed and standard (isotropic) Gaussian.
The flow layers in $g^\mathrm{SCM}$ have an \textit{upper triangular Jacobian} and thus allow us to encode assumptions about the causal graph:
by passing the variables in topological order, which we can assume w.l.o.g., we ensure that an exogenous noise variable $\epsilon_i$ can only influence endogenous variables in $\zb$ that are descendants of~$z_i$.
The learned weights of the flow layers then implicitly encode which endogenous variables are connected.
Therefore, only different choices of the permutations of the intervention targets need to be considered as candidate models.
We use a similar architecture based on Neural Spline Flows. However, we omit permutation layers, which would violate the topological order of the variables.

\begin{figure}[tbp]
\centering
      \includegraphics[width=\textwidth,
      ]{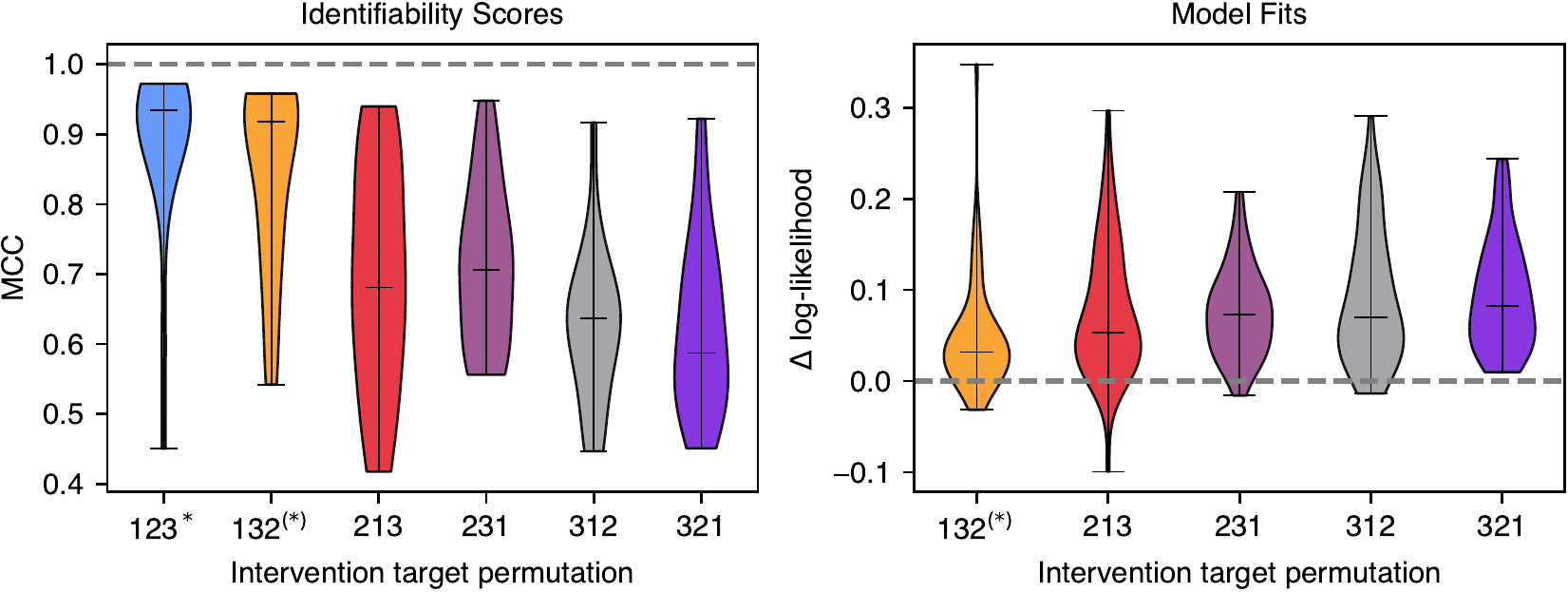}
\caption{
    \looseness-1 
    \textbf{Comparison of Correctly and Incorrectly Specified Models for $V_2 \leftarrow V_1 \rightarrow V_3$ with Fixed Causal Order and Nonlinear SCM.} 
    Each violinplot corresponds to one setting where the intervention target labels are permuted.
    The blue plot (123$^*$) is the setting with correct intervention target labels. 
    The yellow plot (132$^{(*)}$) has the targets for the two children $V_2$ and $V_3$ permuted, which also corresponds to a correct causal ordering and should thus be considered equivalent.
    We show mean correlation coefficients (MCCs) between the learned and ground truth latents \textit{(Left)} and the difference in validation model log-likelihood between the well-specified (blue) and misspecified models \textit{(Right)}.
    Each violin plot is based on 20 different ground truth data generating processes;
    the horizontal lines indicate the minimum, median and maximum values.
}
\label{fig:experiments_nonparam}
\end{figure}

\paragraph{Results.}
In \cref{fig:experiments_nonparam}, we present identifiability scores and model fits for both well-specified and misspecified models (corresponding to different intervention target choices).
Notably, we observe that the well-specified model (in blue) or its equivalent (in yellow) yield the highest log-likelihood in the majority of cases, as depicted in \cref{fig:experiments_nonparam}~(\textit{Right}).
This demonstrates that, even in this nonparametric setting without fully specified graph, the log-likelihood remains a reliable criterion for selecting the correct intervention targets.
\cref{fig:experiments_nonparam}~(\textit{Left}) shows that the selected models (blue or yellow) approximately identify the ground-truth latent variables up to element-wise rescaling, whereas other choices lead to much lower MCCs.

It is worth noting that, compared to the parametric setting investigated in~\cref{sec:experiments} and~\cref{fig:experiments_main}, the nonparametric setting appears to be more challenging (as expected), as there is a less pronounced distinction between well-specified and misspecified models, both in terms of identifiability scores and model fits.
Moreover, future work is needed to parse the implicitly learned causal relationships in the transformation $g^\mathrm{SCM}$ in~\eqref{eq:reduced_form_flow}: since only the (pre-imposed) causal order is specified, in practice, $g^\mathrm{SCM}$ may learn to use additional or fewer edges than in the true graph $G$.

\section{Discussion of the Role of Our Assumptions}
Below, we summarize the rationale and intuition behind each assumption:
\begin{itemize}[leftmargin=*]
    \item \Cref{ass:faithfulness} is required to rule out degenerate cases (cancellation along different paths) in which variables are (conditionally) independent despite being causally related. It is a standard assumption in classical causal discovery, and therefore also needed in CRL to recover the causal graph.
    \item \Cref{ass:known_n} is required to know how many latent variables we are looking for. It is a standard assumption in identifiable representation learning (that is often made implicitly). However, it may be dropped when suitable techniques for estimating the intrinsic dimensionality of $\Xcal$  can be employed.
    \item \cref{ass:diffeomorphism} is needed for the mapping between latents and observations to be invertible in the first place. Without it, full recovery of the causal variables (up to CRL equivalence) is infeasible. This assumption is also standard for the simpler problem of nonlinear ICA.
    \item \cref{ass:shared_mechs} is a characterisation of our generative setup. Sharing of some mechanisms and the mixing function is needed for the multi-environment setting to provide useful additional information: if everything may change across environments, the datasets can only be analysed in isolation, running into the non-identifiability of CRL from iid data.
    \item \cref{ass:perfect_interventions} and (A2) / (A2’) are needed since with imperfect interventions or interventions not on all nodes, identifiability is not achievable even in the linear setting as shown by~\citet{squires2023linear}.
    \item Asm.\ (A1) is a technical assumption needed for our analysis. It is not strictly necessary (it can also be relaxed to fully supported on a Cartesian product of intervals) but substantially eases the readability and accessibility of the proof, without a major impact on the main causal aspects of the problem setup.
    \item Asm.\ (A3) / (A3’) is needed to avoid spurious solutions based on applying a measure preserving transformation on a part of the domain unaffected by the intervention.
    \item Asm.\ (A4) is needed to rule out a fine-tuning of the ground-truth generating process that are possible due to fully non-parametric nature of the setup, see also Remark 4.2 and the following paragraph.
\end{itemize}

\end{document}